\documentclass[lettersize,journal]{IEEEtran}

\usepackage{url}
\usepackage[utf8]{inputenc} 
\usepackage[T1]{fontenc}    
\usepackage{bm}
\usepackage{cite}
\usepackage[colorlinks,
linkcolor=red,
anchorcolor=green,
citecolor=blue
]{hyperref}       

\usepackage{booktabs}       
\usepackage{amsfonts, amssymb}       
\usepackage{nicefrac}       
\usepackage{microtype}      
\usepackage{xcolor}         

\usepackage{amsmath}
\usepackage{algorithm}
\usepackage{algorithmic}
\usepackage{graphicx}
\usepackage{amsthm}
\usepackage{authblk} %

\newtheorem{theorem}{\bf Theorem}

\newtheorem{definition}{\bf Definition}
\newtheorem{proposition}{\bf Proposition}

\newcommand{\st}{\mathrm{s.t.}}

\usepackage{array}
\usepackage[caption=false,font=normalsize,labelfont=sf,textfont=sf]{subfig}
\usepackage{textcomp}
\usepackage{stfloats}
\usepackage{verbatim}
\def\BibTeX{{\rm B\kern-.05em{\sc i\kern-.025em b}\kern-.08em
    T\kern-.1667em\lower.7ex\hbox{E}\kern-.125emX}}
\usepackage{balance}
\usepackage{multirow}

\begin{document}

\title{A Learning Method with Gap-Aware Generation for Heterogeneous DAG Scheduling}


\author{Ruisong Zhou\IEEEauthorrefmark{1}, Haijun Zou\IEEEauthorrefmark{1}, Li Zhou, Chumin Sun, Zaiwen Wen 
\IEEEcompsocitemizethanks{This research was supported in part by National Key Research and Development Program of China under the grant number 2024YFA1012900, the National Natural Science Foundation of China under the grant numbers 12331010 and
12288101, and the Natural Science Foundation of Beijing, China under the grant number Z230002.
\textit{(Corresponding author: Zaiwen Wen.)}}
\IEEEcompsocitemizethanks{Ruisong Zhou is with the School of Mathematical Science, Peking University, Beijing 100871, China (email: ruisongzhou@stu.pku.edu.cn). }
\IEEEcompsocitemizethanks{Haijun Zou is with the State Key Laboratory of Mathematical Sciences, Institute of
Computational Mathematics and Scientific/Engineering Computing, Academy
of Mathematics and Systems Science, Chinese Academy of Sciences, Beijing
100190, China (email: zouhaijun24@mails.ucas.ac.cn).}
\IEEEcompsocitemizethanks{Li Zhou and Chumin Sun are with the Theory Lab, Central Research Institute, 2012 Labs, Huawei Technologies Co., Ltd, China (email: zhouli107@huawei.com; sunchumin@huawei.com).}
\IEEEcompsocitemizethanks{Zaiwen Wen is with the Beijing International Center for Mathematical Research, Peking University, Beijing 100871, China (email: wenzw@pku.edu.cn).}
\IEEEcompsocitemizethanks{\IEEEauthorrefmark{1}These authors contributed equally to this work.}
}

\maketitle

\begin{abstract}
Efficient scheduling of directed acyclic graphs (DAGs) is a core problem in large-scale data-intensive computing systems, where query plans, data-processing workloads, and computation graphs consist of dependent tasks competing for limited heterogeneous resource pools. In practice, achieving high-performance execution requires schedulers to adapt across environments with varying resource pools and task types, while generating schedules under tight runtime budgets. We propose WeCAN, an end-to-end reinforcement learning framework for heterogeneous DAG scheduling that addresses task-pool compatibility coefficients and generation-induced optimality gaps. It adopts a two-stage single-pass design: a single forward pass produces task-pool scores and global parameters, followed by a generation map that constructs schedules without repeated network calls. Its weighted cross-attention encoder models task-pool interactions gated by compatibility coefficients, and is size-agnostic to environment fluctuations. Moreover, widely used list-scheduling maps can incur generation-induced optimality gaps from restricted reachability. We introduce an order-space analysis that characterizes the reachable set of generation maps via feasible schedule orders, explains the mechanism behind generation-induced gaps, and yields sufficient conditions for gap elimination. Guided by these conditions, we design a skip-extended realization with an analytically parameterized decreasing skip rule, which enlarges the reachable order set while preserving single-pass efficiency. Experiments on real-world TPC-H query DAGs, resource-intensive workload datasets, and ML-compiler computation graphs demonstrate improved makespan over strong baselines, with inference time comparable to classical heuristics and faster than multi-round neural schedulers.

\end{abstract}

\begin{IEEEkeywords}
Data-Intensive Computing, Combinatorial Optimization, Heterogeneous Scheduling, Reinforcement Learning
\end{IEEEkeywords}

\section{Introduction}
In data-processing clusters \cite{mao2019acm}, heterogeneous cloud platforms \cite{Lin24TSC}, database query engines, and ML compilers \cite{jeon2023iclr}, many execution workloads are naturally represented as directed acyclic graphs (DAGs). This abstraction covers query execution plans, data-processing workloads, and computation graphs. In this abstraction, nodes represent computational operators, kernels, or job stages, edges specify data or control dependencies, and the scheduler determines both when each task starts and which resource pool executes it. The objective is typically to minimize the total completion time (makespan) subject to precedence and resource constraints, which is essential for high-performance execution in practical systems. This DAG abstraction covers a broad class of data and computing workloads, yet DAG scheduling is NP-hard even in homogeneous settings \cite{nphard}. In heterogeneous environments, tasks must further be assigned to suitable resource pools with different capacities and task-pool compatibility profiles. This heterogeneity adds significant complexity to scheduling.

Because scheduling decisions in data-processing clusters, query engines, compilers, and cluster managers must be made repeatedly under tight runtime budgets, traditional approaches often employ heuristics, such as list scheduling \cite{graham1969bounds}, which assigns tasks to pools iteratively based on priority scores. These scores are often calculated using computationally inexpensive metrics, such as the critical-path length or the number of remaining operations \cite{Haupt1989}. Variations include Tetris \cite{tetris} using dynamic scores that change during scheduling, or HEFT \cite{heft} based on inserting tasks into an existing timeline. However, the design of such heuristics often relies heavily on human expertise and struggles to fully utilize all available problem information effectively.

The application of machine learning (ML) to combinatorial optimization (CO) problems~\cite{kool2019am, kwon2020pomo, li2025lmask, zhou2024mvmoe,liu2023RGM,ma2024k-opt, hou2023TAM,ye2024glop} has gained significant attention in recent years since the seminal work by Hopfield \& Tank~\cite{mlco}. Within this line of work, a prevalent paradigm is to learn a constructive policy to generate solutions sequentially~\cite{kool2019am, kwon2020pomo, li2025lmask, zhou2024mvmoe,liu2023RGM}. This paradigm is particularly effective when the solution admits a natural sequential representation, so that constructing a solution reduces to generating an ordered selection, as in routing and assignment problems. 

Scheduling problems however differ fundamentally because the solution is defined as a schedule with concrete start times, which obstructs a natural one-to-one correspondence with a decision sequence. As a result, neural constructive schedulers instead learn dispatch priorities. Auto-regressive policies successively select tasks for dispatch~\cite{zhang2020nips,Song2023TwoStage}. But this requires multi-round network processing and limits inference speed. For rapid solution generation, a recent work~\cite{jeon2023iclr} advocates for a single-pass regime, where the whole priority list is generated in a one-shot manner. Under the learned priorities, schedules are generated by successively dispatching tasks to resource pools through a generation map.

List scheduling serves as the dominant generation map for neural constructive schedulers, yet it can introduce inherent optimality gaps in  DAG scheduling. This deficiency stems from the structural complexity of DAG scheduling. Different from the serial processing of linear chains in the canonical job-shop scheduling problem (JSSP), DAG scheduling involves cumulative resource constraints that permit parallel execution and intricate non-linear dependencies. Such a complex structure may render list scheduling unable to realize optimal schedules regardless of the given dispatch priorities. 

Heterogeneity further complicates the problem by introducing compatibility coefficients. Learning compatibility patterns while remaining agnostic to environment size poses significant challenges. One common approach is explicit injection \cite{Wu18AAPP, grin21cluster,jeon2023iclr}, where the model receives the full compatibility information as part of task features via one-hot task types or per-pool compatibility vectors. However, this design couples the network architecture to fixed input dimensions thereby limiting adaptability to fluctuating environments. Another approach is statistical compression~\cite{Zhou22MDM, Zhadan23acm, Wang2025ITJ}, which summarizes each task’s compatibility statistics across pools, such as the mean or variance. Nevertheless, this approach may lose some fine-grained information that matters for precise priority.

These observations motivate a need for constructive schedulers that are simultaneously compatibility expressive, environment size-agnostic, and provably free from generation-induced structural gaps, while remaining efficient in the single pass regime.  To this end, we develop a principled analysis of schedule generation maps for DAG scheduling and develop a reinforcement learning (RL) framework called WeCAN whose modeling and generation components are explicitly aligned with the requirements of heterogeneity and gap elimination. Our contributions are threefold.

(1) An order-space analysis framework revealing the generation-induced optimality gaps.
We introduce an order-space perspective to characterize the reachable set of existing schedule generation maps, and formalize a corresponding generation-induced optimality gap.
We further propose the order-covering optimal (OCO) criterion and prove that any generator satisfying OCO eliminates this structural gap, thereby explaining when and why common list-scheduling realizations may be inherently suboptimal.

(2) A single-pass, compatibility and gap aware RL scheduler.
We propose an end-to-end single-pass RL framework that jointly addresses heterogeneous compatibilities and the above structural gap.
On the modeling side, we design a size-agnostic weighted cross-attention network (WeCAN) to encode task--pool interactions with full compatibility information while remaining scalable to varying numbers of tasks and pools.
On the generation side, we introduce a skip-extended realization with a decreasing skip score, which enlarges the rollout space to fix the generation-induced gap, while preserving single-pass efficiency in inference.

(3) Empirical validation on representative data-intensive computing workloads.
Experiments on ML-compiler computation graphs and real-world TPC-H query DAGs demonstrate improved makespan over strong heuristic and learning-based baselines, with inference time comparable to classical heuristics and substantially faster than multi-round neural schedulers.
\subsection{Related Work}
Several RL based approaches have been proposed for scheduling problems~\cite{mao16acm, zhou2020nips, Wang21nips, Banerjee2020ICML, Park21ijpr, sun2021deepweave,Gag22nips, Islam2022TPDS, sun24IEEETC, Zhang2024iclrL2S, Debner2024PMLR, Li2025iclrlrho}. Among constructive schedulers, Zhang et al.~\cite{zhang2020nips} develop an approach to generate solutions by using a  graph neural network (GNN) to encode state and select tasks sequentially.  Jeon et al. \cite{jeon2023iclr} introduce an efficient approach with only once network passing, which uses the Gumbel-TopK trick to generate priorities that are subsequently converted into  feasible schedules by list scheduling. Beyond constructive generation, alternative approaches include learning policies to refine existing solutions \cite{zhou2020nips, Zhang2024iclrL2S} or simplify problem instances \cite{Wang21nips, sun24IEEETC, Li2025iclrlrho}. In particular, Zhou et al.~\cite{zhou2020nips} develop a method that learns a policy to iteratively refine an existing solution. Wang et al.~\cite{Wang21nips} introduce a bi-level optimization approach applying RL to add auxiliary edges and proposing heuristics on the modified graph.

Most previous works \cite{Wu18AAPP,Park21ijpr,Wang21nips,Zhou22MDM,jeon2023iclr,sun24IEEETC,Wang2025ITJ} adopt list scheduling as the generation map but do not include skip actions whenever ready tasks exist. 
These approaches thus inherit the optimality gap of list scheduling. 
Although skip actions have been considered in a few learning-based schedulers \cite{mao16acm, Grinsztajn2020SSCI, Banerjee2020ICML, grin21cluster, Islam2022TPDS, Zhadan23acm, Debner2024PMLR}, they are typically motivated by problem features other than generation-map reachability or list-scheduling optimality gaps. In particular, some works introduce skip in online or dynamic environments to wait for future task releases \cite{mao16acm, Grinsztajn2020SSCI, grin21cluster, Debner2024PMLR}, while others use it to cope with partially observed execution constraints \cite{Banerjee2020ICML, Islam2022TPDS} or as a coordination placeholder in multi-agent formulations \cite{Zhadan23acm}. Consequently, these methods do not analyze when and why adding skip can restore order coverage of the induced generation map, nor provide conditions under which skip eliminates the optimality gap. 
In contrast, we study the offline setting where all tasks are given upfront, and our order-space analysis shows that under cumulative resource constraints a list-scheduling map may fail to cover feasible schedule orders, leading to a nonzero optimality gap. Moreover, we show that closing this gap requires a carefully designed skip mechanism. Existing skip-based designs often rely on multi-round re-encoding or step-wise dynamic scoring, which conflicts with the single-pass inference regime. We therefore propose a decreasing skip rule that can be evaluated during rollout without additional network calls, and we prove that this design closes the gap while preserving single-pass efficiency.

Regarding task-pool compatibility, most heterogeneous DAG scheduling approaches incorporate compatibility coefficients directly into task features through various encoding choices~\cite{Wu18AAPP, grin21cluster,jeon2023iclr,Zhou22MDM,Zhadan23acm, Wang2025ITJ}. Heterogeneous GNNs (HGNNs) offer another potential avenue by extending graph learning to heterogeneous graphs with multiple node and edge types, and they have been adopted  for many other scheduling problems~\cite{Wang2022SchNet, Song2023TwoStage, Altundas2022IROS, Wang2023TNNLS, Zhang2024UAI}. For multi-robot scheduling, Wang et al.~\cite{Wang2022SchNet} propose ScheduleNet built on a heterogeneous attention mechanism. For the flexible JSSP, Song et al.~\cite{Song2023TwoStage} build an operation--machine heterogeneous graph where processing times are attached to edges, and use a staged embedding strategy to capture the coupled operation-selection and machine-assignment structure. Beyond these exemplars, a growing body of HGNN-based schedulers tailor heterogeneous graph constructions and message-passing modules to the particular modeling choices and constraints of different problem variants \cite{Altundas2022IROS, Wang2023TNNLS, Zhang2024UAI}. However, empirically adapting these HGNN methods to DAG scheduling often yields limited gains. Our WeCAN is complementary in that it targets efficient compatibility extraction without committing to a particular heterogeneous graph construction and it demonstrates superior performance on DAG scheduling. 
\subsection{Organization}
The remainder of this paper is structured as follows. Formulations of the heterogeneous DAG scheduling problem and neural constructive schedulers are provided in Section~\ref{sec:preliminary}. An order-analysis framework  for analyzing the inherent gap induced by generation maps is established in Section~\ref{sec:theory}. We present the proposed RL scheduler in Section~\ref{sec:generating-feasible-schedule}, including the skip-extended gap-free scheduling algorithm and the compatibility-aware WeCAN architecture.
Finally, numerical results on a variety of datasets are presented in Section~\ref{sec:experiments} to demonstrate the effectiveness and robustness of the proposed method.

\section{Preliminaries}\label{sec:preliminary}
\subsection{Heterogeneous Scheduling Problem}

The task scheduling problem seeks a schedule that minimizes the objective function among all schedules subject to given constraints. We focus on heterogeneous scheduling problems characterized by compatibility coefficients. A problem instance $\mathcal P=(V,E,\mathcal C,t,\rho,\lambda, K_{acc})$ is specified by a DAG $G=(V,E)$ and a set of resource pools $\mathcal C$.
Each node $v\in V$ represents a task with base processing time $t(v)>0$ and resource demand vector $\rho(v)$, while each pool $c\in\mathcal C$ has a capacity vector $\lambda(c)$.
Each edge $(v,w)\in E$ represents a dependency constraint: task $v$ must complete before $w$ starts.
We model heterogeneity via compatibility coefficients $K_{acc}(v,c)\ge 0$ \cite{heft}.
When $K_{acc}(v,c)>0$, the actual processing time of task $v$ on pool $c$ is defined as $
t_{\mathrm{act}}(v,c):=t(v)/K_{acc}(v,c)$, 
and $K_{acc}(v,c)=0$ indicates incompatibility.

A schedule is a map $x:V \to \mathbb R_{\geq0}\times \mathcal C$, which maps each task $v\in V$ to a pair $( s(v), c(v))$, representing its start time and assigned resource pool respectively. In this paper, the objective function is the makespan $f$, defined as the latest task completion time. For $\tau\ge 0$ and $c\in\mathcal C$, let $$
F(\tau,c):=\bigl\{v\in V\mid  c(v)=c,\, s(v)\leq\tau<t_{\mathrm{act}}(v, c(v))\bigr\} $$ denote the set of tasks running on pool $c$ at time $\tau$. The problem can be formulated as
\begin{equation}
\begin{aligned}
\min_{x=(s(\cdot), c(\cdot))}\quad&
f(x)=\max_{v\in V}\bigl[ s(v)+t_{\mathrm{act}}(v, c(v))\bigr],\\
\mathrm{s.t.}\qquad&
 s(v)+t_{\mathrm{act}}(v, c(v))\le  s(w),\quad \forall (v,w)\in E,\\
&
\sum_{v\in F(\tau,c)} \rho(v)\le \lambda(c),\quad \forall \tau\ge 0,\ \forall c\in\mathcal C,\\
&
K_{acc}(v, c(v))>0,\quad \forall v\in V,\\
&
 s(v)\ge 0,\quad \forall v\in V.
\end{aligned}
\end{equation}
The last constraint enforces nonnegative start times, while the first three constraints correspond to:
(1) dependency constraints, requiring each task to start after all its predecessors complete;
(2) resource constraints, ensuring that at any time $\tau$ the total demand of tasks running on pool $c$
does not exceed $\lambda(c)$; and (3) compatibility constraints, requiring $K_{acc}(v, c(v))>0$.  A schedule $x$ is called feasible if it satisfies all constraints above. An optimal solution is a feasible schedule whose makespan is no larger than that of any other feasible schedule.
 
In the homogeneous setting where $K_{acc}\equiv 1$ and $|\mathcal C|=1$, the third constraint is naturally satisfied. In contrast, heterogeneous environments with compatibility constraints pose additional challenges, requiring tasks to run on compatible resource pools and schedulers to adapt to diverse resource characteristics. These scheduling problems can be formulated as mixed integer linear programming (MILP) problems, as described in Appendix \ref{apx:theoritical results}.1. This formulation establishes a one-to-one correspondence between feasible schedules and MILP feasible solutions, enabling a unified framework for analyzing scheduling scenarios. Compared to the homogeneous case, these settings introduce additional constraints and significantly increase MILP size (as detailed in Appendix \ref{apx:dataset}.3), posing challenges for effective scheduling.

\subsection{MDP Formulation for Constructive Scheduling}
\label{subsec:prelim-L2schedule}

Most neural schedulers construct schedules  by making a sequence of dispatch decisions,
rather than directly outputting a complete schedule $x:V\to\mathbb R_{\ge 0}\times\mathcal C$.
For a given instance $\mathcal P$, we model this constructive procedure as a Markov decision process (MDP).

At decision step $t$, the state $s_t$ summarizes the current simulation time and the partial schedule built so far,
including (i) the set of unscheduled tasks, (ii) the dependency status (which predecessors have completed),
and (iii) the current status of each pool (available resources and running tasks).
The action space is 
\begin{equation*}
    \mathcal A=\{(v,c)\in V\times\mathcal{C}\mid \rho(v)\leq \lambda(c),K_{acc}(v,c)>0\},
\end{equation*}
where $(v,c)$ means assigning task $v$
to pool $c$.

A neural scheduler implements a stochastic policy, sampling $a_t\sim p_{\theta}(\cdot\mid s_t,\mathcal{P})$ at step $t$. 
A rollout (trajectory) is an action sequence $\omega=(a_1,\ldots,a_N)$ generated by the policy along the constructive simulation.
Its probability is
\begin{equation}
\label{eq:prob-mdp}
p_\theta(\omega) = \prod_{t=1}^{N} p_\theta(a_t \mid s_t, \mathcal P).
\end{equation}
Importantly, a rollout $\omega$ does not by itself specify concrete start times: it must be interpreted by a realization rule to produce a feasible schedule, from which the makespan is evaluated.  The overall reward is then defined as the negative of the makespan.

\subsection{From Constructive Policies to Feasible Schedules}
\label{subsec:prelim-constructive-to-schedule}

The constructive policy defines randomness on the decision side, i.e., action sequences, whereas
the scheduling objective function is defined on the solution side, i.e., feasible schedules with concrete start times.
Bridging these two requires a schedule realization rule (also called a \emph{generation map}) that translates constructive decisions into a feasible schedule. A standard choice in both heuristic and neural schedulers is list scheduling.
Starting from $t_{\mathrm{cur}}=0$, this scheme repeatedly:
(i) determines which task-pool actions are eligible at the current time (respecting precedence, capacity, and compatibility);
(ii) selects one eligible action according to the constructive decisions, starts the task at time $t_{\mathrm{cur}}$,
and updates the pool states; and
(iii) when no further actions are eligible, advances $t_{\mathrm{cur}}$ to the next completion event and releases completed tasks.

Once the generation scheme is fixed, the rollout randomness on the decision side induces a distribution over feasible schedules. Specifically,
the probability of obtaining a schedule is the total probability of all rollouts that are realized into that schedule. Therefore, learning a constructive policy amounts to optimizing the expected objective value under this induced schedule distribution. Different generation rules can lead to different induced distributions. In particular, the set of feasible schedules that can be realized with nonzero probability may differ. We will formalize this dependence in the later analysis.

\section{Order-Space Geometry and Optimality Gaps}\label{sec:theory}

In this section, we will introduce a framework to analyze the limitation of current list-scheduling based approaches and reveal the optimality gap inside them. This framework provides insight for revealing the underlying mechanisms for the suboptimality and guides us to find the best way to improve it.
\subsection{Schedule Order Space}
For any problem $\mathcal{\mathcal{P}}$, let $\mathcal{X}(\mathcal{P})$ denote the space of all feasible schedules. When no confusion arises, we denote $\mathcal{X}(\mathcal{P})$ as $\mathcal{X}$. Although the optimization objective resides in $\mathcal{X}$, the  probabilistic model is defined not directly over the schedule space itself, but over the action sequence space $\Omega_{\text{action}}$:
    \begin{equation*}
\begin{aligned}
    \Omega_{\text{action}}: = \{(a_1,...,a_n) \mid  a_i =(v_i,\bar{c}_i) \in \mathcal A; \,v_i\ne v_j,\, \forall\, i\ne j\}.
\end{aligned}
\end{equation*}
To facilitate a more compact representation for generating schedules, we investigate the structural properties of $\Omega_{\text{action}}$ in this subsection.

The action sequence  exhibits inherent redundancy since swapping two actions involving different pools yields the same schedule. An equivalence relation $\sim$ can be defined as follows:
\begin{equation*} (a_1,...,a_i,a_{i+1},...a_n) \sim (a_1,...,a_{i+1},a_i,...a_n) \text{ if } \bar{c}_i\ne \bar{c}_{i+1}.
\end{equation*}
The quotient space $\Omega_{\mathrm{action}}/\!\sim$ can be identified with an order representation defined as follows. A \emph{schedule order} $w=\{d(v),c(v)\}_{v\in V}$ consists of a pool-assignment map $c(\cdot):V\to\mathcal{C}$ and a rank map $d(\cdot):V\to\mathbb Z^+$. The pool-assignment map is required to be consistent with the action space, such that $(v,c(v))\in\mathcal{A}$ for all $v\in V$. For each pool $c\in \mathcal{C}$, let $n(c):=\#\{v\in V\mid c(v)=c\}$ denote the number of tasks assigned to that pool. The rank map is required to satisfy 
\begin{equation*}
 \{d(v)\mid c(v)=c\}=\{1,\dots,n(c)\},\quad \forall\, c\in\mathcal C.
\end{equation*}
Intuitively, $w=\{d(v),c(v)\}_{v\in V}$ specifies the pool allocation $c(v)$ for each task $v$ and the execution order $d(v)$ of $v$ within its assigned pool. The set of all such orders constitutes the \emph{schedule order space}, denoted by $\mathcal{B}$. The following proposition establishes the equivalence between the quotient space $\Omega_{\mathrm{action}}/\sim$ and the schedule order space $\mathcal{B}$. 
\begin{proposition}[Equivalence between action-quotient and order space]\label{prop:canical-representation-action-seq}
There exists a bijection $\Phi:\Omega_{\rm{action}}/\!\sim \,\to \mathcal B$.
In particular, each equivalence class in $\Omega_{\rm{action}}/\!\sim$ corresponds to a unique
schedule order $w=\{d(v),c(v)\}_{v\in V}\in\mathcal B$.
\end{proposition}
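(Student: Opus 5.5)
The plan is to build $\Phi$ explicitly from the action sequence, check that it is well defined on equivalence classes, and then prove surjectivity and injectivity separately. Throughout I read $\Omega_{\rm action}$ as consisting of complete sequences, i.e.\ $n=|V|$, so that every task appears exactly once. I also take $\sim$ to be the equivalence relation \emph{generated} by the adjacent-swap rule, i.e.\ its reflexive, symmetric and transitive closure.

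\textbf{Construction and well-definedness.} Given $\omega=(a_1,\dots,a_n)$ with $a_i=(v_i,\bar c_i)$, define $c(v_i):=\bar c_i$. Define $d(v_i):=\#\{j\le i \mid \bar c_j=\bar c_i\}$, the position of $v_i$ among the actions on its own pool. Since each $a_i\in\mathcal A$, the pool map is consistent with the action space. For each pool $c$, the ranks of the tasks assigned to $c$ are exactly $1,\dots,n(c)$, so $\tilde\Phi(\omega):=\{d(v),c(v)\}_{v\in V}\in\mathcal B$.

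To pass to the quotient, it suffices to check a single generating swap. Swapping $a_i,a_{i+1}$ with $\bar c_i\neq\bar c_{i+1}$ changes no pool assignment. It also leaves unchanged, for every pool $c$, the relative order of the actions on $c$, because the only pair whose relative order changes lies on two different pools. Hence $\tilde\Phi$ is constant on equivalence classes and induces $\Phi:\Omega_{\rm action}/\!\sim\,\to\mathcal B$.

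\textbf{Surjectivity.} Given $w=\{d(v),c(v)\}\in\mathcal B$, list the pools in any order $c^{(1)},\dots,c^{(m)}$. For each pool, write its tasks in increasing rank, and concatenate these blocks. The resulting sequence consists of actions $(v,c(v))\in\mathcal A$, contains each task exactly once, and by construction has $\tilde\Phi$-image equal to $w$.

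\textbf{Injectivity, the main step.} Suppose $\tilde\Phi(\omega)=\tilde\Phi(\omega')=w$. I will show $\omega\sim\omega'$ by induction on $n$, using a bubble-sort argument.

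Let $a'_1=(v,c(v))$ be the first action of $\omega'$. Since $v$ comes first in $\omega'$, we have $d(v)=1$. Let $a_j$ be the action of $\omega$ involving $v$; it is also $(v,c(v))$, because the pool maps agree. Any $a_i$ with $i<j$ and $\bar c_i=c(v)$ would give $v$ rank at least $2$ in $\omega$, which is impossible. So every action preceding $a_j$ in $\omega$ lies on a pool different from $c(v)$. Therefore $a_j$ can be moved to the front by $j-1$ admissible adjacent swaps, producing $\omega''\sim\omega$ with the same first action as $\omega'$.

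Removing this common first action leaves sequences over $V\setminus\{v\}$ that induce the same order: every rank on $c(v)$ drops by one and all other ranks are unchanged. The induction hypothesis, applied to this smaller instance, then gives that the tails are equivalent, and hence $\omega\sim\omega'$.

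The only delicate point is precisely the observation that the earlier actions lie on different pools, which is what makes every swap admissible. I expect this to be the crux; the rest is bookkeeping.
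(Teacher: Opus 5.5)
Your proof is correct and follows essentially the same route as the paper. Your rank count $d$ records exactly the per-pool subsequences $\omega|_c$ that the paper's canonical form $\mathrm{can}(\omega)$ keeps, your well-definedness check is the paper's observation that a cross-pool swap preserves each $\omega|_c$, and your surjectivity construction is its concatenation of pool blocks. The only difference is in injectivity: you bubble $\omega$ toward $\omega'$ one action at a time by induction on $|V|$, whereas the paper bubbles every sequence to one fixed normal form and concludes that each class contains exactly one canonical representative, and both arguments rest on the same fact that an action can be moved past preceding actions on other pools.
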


 \begin{proof}
Fix a total order of pools $\mathcal C=\{c_1,\dots,c_m\}$.
For any action sequence $\omega=(a_1,\dots,a_n)\in\Omega_{\mathrm{action}}$ with $a_i=(v_i,\bar{c}_i)$,
define its \emph{canonical form} $\mathrm{can}(\omega)$ as follows. For each $j=1,\dots,m$, extract from $\omega$ the subsequence of actions whose associated pool equals $c_j$,
keep their relative order, and concatenate these $m$ subsequences in the order
$c_1,\dots,c_m$.
Equivalently,
\[
\mathrm{can}(\omega)=\bigl(\ \omega|_{c_1}\ ;\ \omega|_{c_2}\ ;\ \cdots\ ;\ \omega|_{c_m}\ \bigr),
\]
where $\omega|_{c}$ denotes the subsequence of $\omega$ consisting of actions involving pool $c$.

(a) $\omega\sim \mathrm{can}(\omega)$.
Indeed, by repeatedly swapping adjacent actions belonging to different pools (which is allowed by $\sim$),
we can bubble all actions involving pool $c_1$ to the front without changing their internal order; then do the same for $c_2$, etc.
Thus, $\omega$ can be transformed into $\mathrm{can}(\omega)$ using only $\sim$-swaps.

(b) $\omega\sim\omega'$ iff $\mathrm{can}(\omega)=\mathrm{can}(\omega')$.
The ``if'' direction follows from Claim 1:
$\omega\sim\mathrm{can}(\omega)=\mathrm{can}(\omega')\sim\omega'$.
For the ``only if'' direction, given that a $\sim$-swap exchanges two adjacent actions involving different pools,
it does not change, for any fixed pool $c$, the relative order of actions involving pool $c$.
Therefore, each subsequence $\omega|_c$ is invariant along the entire $\sim$-class, implying the canonical form is invariant. Consequently, each equivalence class $[\omega]\in\Omega_{\mathrm{action}}/\!\sim$ contains exactly one canonical representative.

(c) Canonical forms are in bijection with $\mathcal B$.
Given a canonical form, within each block corresponding to pool $c$,
the position of a task $v$ inside that block defines $d(v)\in\{1,\dots,n(c)\}$ and the block identity defines $c(v)=c$,
producing an element $w=\{(d(v),c(v))\}_{v\in V}\in\mathcal B$.
Conversely, for any $w\in\mathcal B$, each pool $c$ has a unique ordered list of tasks with ranks $1,\dots,n(c)$. Concatenating these lists in the fixed pool order yields a unique canonical form.
These two constructions are inverses, giving a bijection between the set of canonical forms and $\mathcal B$.

Combining the above, $\Omega_{\mathrm{action}}/\!\sim$ (identified with canonical representatives)
is in bijection with $\mathcal B$.
\end{proof}

 Naturally, for a feasible schedule $x\in \mathcal X$, we can extract its task orders and pool allocations and obtain a schedule order in the schedule order space $\mathcal B$. This induces a map $T:\mathcal X\rightarrow \mathcal B$. However, not every schedule order in $\mathcal{B}$ necessarily arises from a feasible schedule via the map $T$, due to the presence of dependency and resource constraints.  We call the schedule orders in the image of $T$ the feasible schedule orders. 
\begin{definition}
    The feasible schedule order space $\mathcal B_f := T(\mathcal X)\subseteq \mathcal B$ is the image of $T$. Each element $w$ in $\mathcal B_f$ is called a feasible schedule order.
\end{definition}

The feasibility of a schedule order can be characterized by augmenting the precedence constraints with the within-pool ordering constraints induced by $w$.
Given $w=\{(d(v),c(v))\}_{v\in V}\in\mathcal B$, define
\begin{align}
&E_w^{\mathrm{pool}}:=\{(u,v)\in V\times V\mid\ c(u)=c(v),\ d(u)<d(v)\},\notag \\
&G_w:=(V,\,E\cup E_w^{\mathrm{pool}}).\label{eq:Gw}
\end{align}    

\begin{proposition}[Feasibility of schedule orders]
\label{pro:eq-fea}
A schedule order $w=\{(d(v),c(v))\}_{v\in V}\in\mathcal B$ belongs to $\mathcal{B}_f=T(\mathcal{X})$
if and only if
(i) $G_w$ is acyclic, and
(ii) $K_{acc}(v,c(v))>0$ and $\rho(v)\le \lambda(c(v))$ for all $v\in V$.
\end{proposition}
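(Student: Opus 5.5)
The plan is to prove the two directions separately, with the longest-path (earliest-start) schedule on $G_w$ as the main construction. Throughout I read the running set as $F(\tau,c)=\{v\mid c(v)=c,\ s(v)\le\tau<s(v)+t_{\mathrm{act}}(v,c(v))\}$, which is clearly the intended meaning. I also take $T(x)$ to keep the pool map $c(\cdot)$ of $x$ and to rank the tasks of each pool by nondecreasing start time, with ties broken by some fixed rule.

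\emph{Necessity.} Let $x\in\mathcal X$ and $w=T(x)$.

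Condition (ii) is immediate:
\begin{itemize}
\item the compatibility constraint of the problem gives $K_{acc}(v,c(v))>0$;
\item evaluating the resource constraint at $\tau=s(v)$, where $v\in F(s(v),c(v))$, gives $\rho(v)\le\sum_{u\in F(s(v),c(v))}\rho(u)\le\lambda(c(v))$, since demands are nonnegative.
\end{itemize}

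For (i), argue by contradiction and suppose $G_w$ contains a directed cycle. Along every edge $(u,v)$ of $G_w$ we have $s(u)\le s(v)$:
\begin{itemize}
\item for an edge of $E$ this holds strictly, because $s(v)\ge s(u)+t_{\mathrm{act}}(u,c(u))>s(u)$ since $t(u)>0$;
\item for an edge of $E_w^{\mathrm{pool}}$ it holds weakly, by construction of the ranks.
\end{itemize}
Summing the differences around the cycle gives zero, so the cycle contains no edge of $E$. A cycle made only of pool edges must stay inside one pool, because pool edges join tasks of the same pool. Within one pool these edges strictly increase $d$, so no such cycle exists. Hence $G_w$ is acyclic.

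\emph{Sufficiency.} Assume (i) and (ii). Set $c(v)$ as prescribed by $w$. Process $V$ in a topological order of $G_w$ and define
\[
s(v)=\max\Bigl(\{0\}\cup\{s(u)+t_{\mathrm{act}}(u,c(u))\mid (u,v)\in E\cup E_w^{\mathrm{pool}}\}\Bigr).
\]
This is well defined because $G_w$ is a DAG and $K_{acc}(u,c(u))>0$ by (ii). The resulting schedule is checked constraint by constraint:
\begin{itemize}
\item nonnegativity and compatibility hold directly;
\item the precedence constraints hold because every edge of $E$ lies in $G_w$;
\item the capacity constraint holds because each pool processes its tasks one at a time. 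The pool edges chain the tasks of pool $c$ in rank order, so the task of rank $k+1$ starts no earlier than the task of rank $k$ finishes. Hence $|F(\tau,c)|\le1$ for every $\tau$, and the constraint reduces to $\rho(v)\le\lambda(c(v))$, which is (ii).
\end{itemize}
Finally, start times within a pool are strictly increasing in $d$, because processing times are positive. There are therefore no ties, and $T(x)=w$ regardless of the tie-breaking rule, so $w\in\mathcal B_f$.

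The main obstacle is the necessity of acyclicity. The subtle point is equal start times within a pool, which arise when several tasks share a pool's capacity in parallel. Tie-breaking could then appear to create inconsistent pool edges. The argument above handles this: any cycle would have to use only zero-increment edges, and those are exactly the pool edges. Pool edges cannot close a cycle because within each pool they are ordered by the rank map $d$.
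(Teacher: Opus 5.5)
Your proof is correct and follows the paper's argument. For necessity, start times are nondecreasing along every edge of $G_w$ and strictly increasing along $E$-edges, so no cycle can close. For sufficiency, the earliest-start schedule along a topological order of $G_w$ serializes each pool, which reduces the capacity constraint to $\rho(v)\le\lambda(c(v))$. Your explicit $\{0\}$ in the max, your tie-free argument for $T(x)=w$, and your derivation of $\rho(v)\le\lambda(c(v))$ from the resource constraint at $\tau=s(v)$ are small tightenings of steps the paper leaves implicit.
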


\begin{proof}
(\emph{Necessity}) If $w\in \mathcal B_f$, there exists a feasible schedule $x=(s(\cdot), c(\cdot))$ with $T(x)=w$.
Given that $E_w^{\text{pool}}$ has no directed cycle, every directed cycle in $G_w$ has at least an edge from $E$. For any edge $(u,v)\in E$, feasibility implies $ s(u)+t(u)/K_{acc}(u, c(u))\le s(v)$, and $ s(u)<  s(v)$. 
For any $(u,v)\in E_w^{\mathrm{pool}}$, $T(x)=w$ means $u$ is scheduled no later than $v$ on the same pool, which leads to $ s(u)\leq  s(v)$.
Consequently, $G_w$ cannot contain a directed cycle. Moreover, feasibility of $x$ implies $K_{acc}(v,c(v))>0$ and $\rho(v)\le\lambda(c(v))$ for all $v$.

(\emph{Sufficiency}) Assume (i) and (ii). Since $G_w$ is acyclic, fix any topological ordering $\pi$ of $G_w$. Define a schedule $x=( s, c)$ as follows.
Set $ c(v):=c(v)$ for all $v$.
Then define $ s(\cdot)$ recursively along order $\pi$:
for each $v$ processed in the order of $\pi$, let
\begin{equation*}
\begin{aligned}
     s(v):=\max\left\{{s}(u)+ t_{\text{act}}(u, c(u))\mid (u,v)\in E\cup E_w^{\mathrm{pool}}\right\}.
\end{aligned}    
\end{equation*}
Then all precedence constraints and within-pool order constraints are satisfied by construction.
Moreover, tasks on the same pool never overlap in time. Hence, at any $\tau$ each pool runs at most one task, and the resource constraint holds because $\rho(v)\le\lambda(c(v))$ for every task.
Compatibility holds by (ii). Thus $x$ is feasible and satisfies $T(x)=w$, and $w\in\mathcal B_f$.
\end{proof}

\subsection{Generation Maps}

To bridge the gap between decision and solution spaces, we require a realization rule that converts an action sequence into a schedule. In light of Proposition~\ref{prop:canical-representation-action-seq}, which identifies action sequences with canonical schedule orders up to redundancy, it suffices to formulate this realization primarily over schedule orders. Hence, we use the term \emph{generation map} to refer to any map $S: \mathcal{W} \to \mathcal{X}$ that constructs a fully specified schedule from an abstract order representation. The domain $\mathcal{W}$ serves as a generalized  priority space, which, depending on the context, may represent the raw action sequence space $\Omega_{\mathrm{action}}$, the schedule order space $\mathcal{B}$ or its feasible subset $\mathcal{B}_f$ etc.

A standard example is the list-scheduling map $S_{\text{list}}:\Omega_{\mathrm{action}}\rightarrow \mathcal X$, widely used in schedulers. Given an action sequence, $S_{\text{list}}$ follows the time-advance procedure described in Section~\ref{subsec:prelim-constructive-to-schedule} and schedules tasks according to the prescribed within-pool ordering.

We next introduce another generation map, the serial schedule generation scheme (SGS), denoted by  $S_{\text{SGS}}$, as specified in Algorithm~\ref{alg:optimal-section}.
Given a feasible schedule order $w\in\mathcal B_f$, the map $S_{\text{SGS}}$ constructs a schedule by combining
(i) the augmented precedence graph $G_w$ (cf.~\eqref{eq:Gw}) and its ready set $R$ (zero-indegree tasks), and
(ii) an event-based time-advance on each pool that schedules a selected ready task at its earliest feasible start time
under dependency and resource constraints. The next proposition shows that $S_{\text{SGS}}$ is well defined on $\mathcal B_f$.

\begin{algorithm}[tb]
\caption{Serial schedule generation scheme $S_{\text{SGS}}$}
\label{alg:optimal-section}
\begin{algorithmic}[1]
\STATE {\bfseries Input:} $(V,E,\mathcal C,\rho,t,\lambda,K_{\text{acc}})$ and a feasible order $w=\{(d(v),c(v))\}_{v\in V}\in\mathcal B_f$.
\STATE Let $t_{\text{act}}(v,c)=t(v)/K_{\text{acc}}(v,c)$ and build $G_w$ as \eqref{eq:Gw}.
\STATE Initialize $\deg(\cdot)$ in $G_w$, $R\gets\{v\mid\deg(v)=0\}$, $t_{\text{dep}}(v)\gets 0$, and for each $c\in\mathcal C$: $t_{\text{cur}}(c)\gets 0$, $\lambda_{\text{cur}}(c)\gets\lambda(c)$.

\WHILE{$R\neq\emptyset$}
    \STATE Pick any $v\in R$; set $c\gets c(v)$.
    \STATE Set $\tau\gets \max\{t_{\text{cur}}(c),\,t_{\text{dep}}(v)\}$.
    \STATE Advance time on pool $c$ until $v$ is feasible at $\tau$ (release completed tasks on $c$ and update $\lambda_{\text{cur}}(c)$); update $\tau$ accordingly.
    \STATE Start $v$ at $ s(v)\gets \tau$ on pool $c$; set $t_{\text{end}}(v)\gets  s(v)+t_{\text{act}}(v,c)$ and update the pool state $(t_{\text{cur}},\lambda_{\text{cur}})$.
    \STATE Remove $v$ from $R$; for each successor $u$ of $v$ in $G_w$: decrease $\deg(u)$ and add $u$ to $R$ if $\deg(u)=0$.
    \FOR {each $(v,u)\in E$} \STATE update $t_{\text{dep}}(u)\gets \max\{t_{\text{dep}}(u),\,t_{\text{end}}(v)\}$.
    \ENDFOR
\ENDWHILE
\STATE {\bfseries Output:} schedule $x=( s(\cdot), c(\cdot))\in\mathcal X$.
\end{algorithmic}
\end{algorithm}

\begin{proposition}[Well-definedness of $S_{\text{SGS}}$]
\label{pro:SGS-well-defined}
For any feasible schedule order $w=\{(d(v),c(v))\}_{v\in V}\in\mathcal B_f$, Algorithm~\ref{alg:optimal-section}
terminates after $|V|$ iterations and outputs a feasible schedule
$x=S_{\text{SGS}}(w)=( s(\cdot), c(\cdot))\in\mathcal X$ that preserves the order, i.e.,
$T(x)=w$.
\end{proposition}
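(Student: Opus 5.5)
We would split the argument into three parts: termination after $|V|$ iterations, feasibility of the output, and order preservation $T(x)=w$.

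\emph{Termination.} By Proposition~\ref{pro:eq-fea}, $w\in\mathcal B_f$ implies that $G_w$ is acyclic. The loop on lines 4--9 is then exactly Kahn's topological-sort procedure on $G_w$. Each iteration removes one vertex from $R$ and never re-inserts it, because in-degrees only decrease. A standard acyclicity argument shows that $R$ empties only after every vertex has been processed: if some vertex remained unprocessed while $R=\emptyset$, following unprocessed predecessors backwards would produce a cycle in $G_w$. Hence the algorithm runs exactly $|V|$ iterations, and the processing order $\pi$ is a topological order of $G_w$.

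We must also argue that line 7 terminates. Condition (ii) of Proposition~\ref{pro:eq-fea} gives $\rho(v)\le\lambda(c(v))$. Since only finitely many tasks are running on pool $c$, advancing through their completion events eventually frees enough capacity, and the advance ends at the latest when the pool is empty. So a finite $\tau$ is always found.

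\emph{Feasibility.} We check each constraint against the invariant maintained along $\pi$.
\begin{itemize}
\item Dependency: when $v$ is processed, every $E$-predecessor $u$ of $v$ has already been processed, because $\pi$ is topological. Line 11 has therefore set $t_{\text{dep}}(v)\ge t_{\text{end}}(u)$, and since $s(v)\ge\tau\ge t_{\text{dep}}(v)$, we get $s(u)+t_{\text{act}}(u,c(u))\le s(v)$.
\item Compatibility: this follows from condition (ii) of Proposition~\ref{pro:eq-fea}.
\item Nonnegativity: all start times satisfy $s(v)\ge 0$.
\item Resource: line 7 places $v$ on pool $c$ only at a time where residual capacity covers $\rho(v)$, given the tasks already placed on $c$.
\end{itemize}

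The resource constraint is the delicate point. Placing $v$ is safe only if no later-placed task on $c$ starts earlier than $v$. Such a task could retroactively overload the interval of $v$, which a capacity check at insertion time alone does not rule out. We handle this through order monotonicity. Tasks on a pool $c$ are processed in increasing $d$-order, because the edges of $E_w^{\mathrm{pool}}$ force this in any topological order. Since $\tau\ge t_{\text{cur}}(c)$ and $t_{\text{cur}}(c)$ is nondecreasing, start times on each pool are nondecreasing in processing order. Consequently, when $v$ is placed, every task already on $c$ started at or before $s(v)$. Any future task on $c$ starts at or after $s(v)$ and performs its own capacity check over the tasks running at its start time.

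It follows that at any instant $\tau'$, the set $F(\tau',c)$ consists of tasks whose starts are ordered by placement. The capacity check of the last-started task in $F(\tau',c)$ covered all the others, since they were running at its start time. This yields $\sum_{u\in F(\tau',c)}\rho(u)\le\lambda(c)$. This is the step that requires the most care: one must show that the event-based release of completed tasks accounts exactly for the running set, and that capacity usage only changes at start and completion events, so checking at start times suffices.

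\emph{Order preservation.} The assignment is $c(\cdot)$ by construction. Within each pool, start times are nondecreasing in $d$ by the monotonicity above. Hence, when extracting orders via $T$ (with ties broken by processing order, consistent with the convention "scheduled no later than" used in the proof of Proposition~\ref{pro:eq-fea}), the within-pool ranks coincide with $d$, and therefore $T(x)=w$.
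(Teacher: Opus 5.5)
Your proposal is correct and follows the paper's three-part argument: acyclicity of $G_w$ keeps the ready set nonempty for $|V|$ iterations, the $t_{\text{dep}}$ bound together with the capacity check gives feasibility, and the edges of $E_w^{\mathrm{pool}}$ force each pool to be processed in increasing $d$. You go further than the paper by stating explicitly that per-pool start times are nondecreasing (because $\tau\ge t_{\text{cur}}(c)$), which is what justifies both checking capacity only at start times and the extraction $T(x)=w$; the tie-breaking convention you adopt for $T$ is not determined by $x$ alone, and the paper's proof leaves the same assumption unstated.
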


\begin{proof}
Fix $w\in\mathcal B_f$. By Proposition~\ref{pro:eq-fea}, the augmented graph $G_w=(V,E\cup E_w^{\mathrm{pool}})$ is acyclic,
and the assignment satisfies $K_{\mathrm{acc}}(v,c(v))>0$ and $\rho(v)\le \lambda(c(v))$ for all $v\in V$.

\emph{a) Termination.}
Algorithm~\ref{alg:optimal-section} maintains the indegrees $\deg(\cdot)$ with respect to the remaining subgraph induced by the unscheduled tasks.
Since $G_w$ is acyclic, any nonempty induced subgraph has at least one node with indegree $0$. The ready set
$R=\{v:\deg(v)=0\}$ will be nonempty whenever tasks remain.
Since each iteration selects one $v\in R$ and permanently removes it (updating indegrees of its successors), exactly one task is scheduled per iteration.
The algorithm terminates after $|V|$ iterations.

\emph{b) Feasibility.}
Consider an iteration that schedules task $v$ on pool $c=c(v)$.
The algorithm sets $\tau\gets \max\{t_{\text{cur}}(c),t_{\text{dep}}(v)\}$ and then advances time on pool $c$ until $v$ becomes feasible at time $\tau$,
meaning that (i) all tasks on pool $c$ completed by time $\tau$ have been released and $\lambda_{\text{cur}}(c)$ equals the available capacity at $\tau$,
and (ii) $\rho(v)\le \lambda_{\text{cur}}(c)$ holds at $\tau$.
Moreover, by construction $\tau\ge t_{\text{dep}}(v)$, and $t_{\text{dep}}(v)$ is maintained as a lower bound on the completion times of all predecessors of $v$ in $E$,
since after scheduling any edge $(u,v)\in E$ the update
$t_{\text{dep}}(v)\gets \max\{t_{\text{dep}}(v),t_{\text{end}}(u)\}$ is performed.
Hence, starting $v$ at $ s(v)=\tau$ satisfies the dependency constraints.
The resource constraint holds by the pool-wise event update and the check $\rho(v)\le \lambda_{\text{cur}}(c)$ at the start time.
Compatibility holds since $K_{\mathrm{acc}}(v,c(v))>0$ for all $v$.
Therefore, the schedule lies in $\mathcal X$.

\emph{c) Order preservation.}
Pool allocations are preserved because the algorithm always schedules each task $v$ on its prescribed pool $c(v)$.
For within-pool order, note that $E_w^{\mathrm{pool}}$ encodes the rank order within each pool (as in \eqref{eq:Gw});
in particular, if $c(u)=c(v)$ and $d(u)<d(v)$, then there is a directed path from $u$ to $v$ in $G_w$.
Thus, $v$ cannot enter the ready set before all such predecessors have been scheduled, implying that tasks on each pool are scheduled in increasing $d(\cdot)$.
Consequently, extracting the within-pool order from the produced schedule yields exactly $w$, i.e., $T(x)=w$.
\end{proof}

\begin{figure*}[t]
    \centering
    \includegraphics[width=0.7\linewidth]{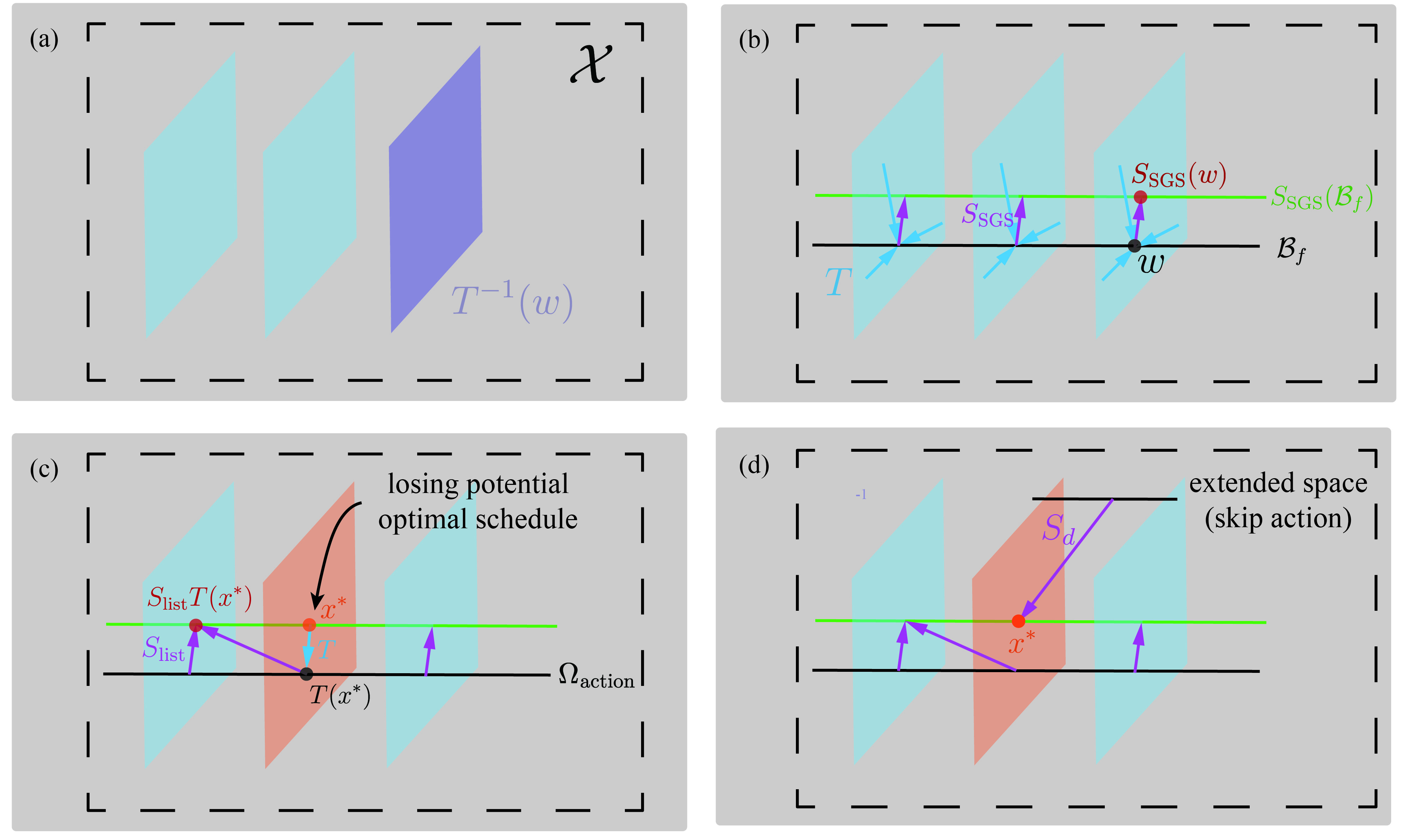}
    \caption{The illustration for spaces and maps. (a) The schedule space, divided into disjoint fibers for feasible schedule orders. (b) The serial schedule generation scheme $S_{\text{SGS}}$ embeds the feasible schedule order space $\mathcal B_f$ as a subset of the schedule space $\mathcal X$, which is ``orthogonal'' to each fiber $T^{-1}(w)$ as only intersect at one point $S_{\text{SGS}}(w)$. $S_{\text{SGS}}$ maps each schedule order to a best schedule (with minimal objective value) in the corresponding fiber. This relationship fix the optimality gaps. (c) The list scheduling map $S_{\text{list}}$ maps each schedule order to a best schedule in some fiber, but the fiber may not be the corresponding fiber. Therefore, the fiber containing the optimal solution may be missed since the optimal schedule order may be mapped to the schedule with other schedule orders. This mismatch leads to an optimality gap. (d) The skip action map $S_d$ is defined on a larger space $ \overline{\Omega}_{\text{action}}\supset \Omega_{\text{action}}$, where the extended part $S_d(\overline{\Omega}_{\text{action}}\backslash \Omega_{\text{action}})$ covers the missed part $S_{\text{SGS}}(\mathcal  B_f)\backslash S_{\text{list}}(\Omega_{\text{action}})$, making $TS_d$ a surjection to $\mathcal B_f$. The extended part closes the optimality gap.}
    \label{fig:illu-space}
\end{figure*}
Proposition~\ref{pro:SGS-well-defined} justifies viewing $S_{\text{SGS}}$ as a generation map from feasible orders to feasible schedules.

\subsection{Optimality Gap of List Scheduling}

A generation map $S$ does not necessarily realize all feasible schedules. Instead it induces a
restricted reachable set
\[
\mathcal X(\mathcal P;S)\;:=\;\{\,S(w)\mid w\in \mathcal W(\mathcal P)\,\}\ \subseteq\ \mathcal X(\mathcal P),
\]
where $\mathcal W(\mathcal P)$ is the valid domain of $S$ for instance $\mathcal P$. 
Accordingly, we define the \emph{optimality gap} of $S$ on $\mathcal P$ by
\begin{equation}
\label{eq:gap-def}
\delta(\mathcal P,S)\;:=\;\min_{x\in \mathcal X(\mathcal P;S)} f(x)\;-\;\min_{x\in \mathcal X(\mathcal P)} f(x)\ \ge\ 0 .
\end{equation}
When $\mathcal X(\mathcal P;S)\subsetneq \mathcal X(\mathcal P)$, this gap can be strictly positive. Notably, list scheduling $S_{\text{list}}$ can exhibit a nonzero optimality gap, i.e., there exists an instance $P_0$ such that $\delta(\mathcal P_0,S_{\text{list}})>0$.

Consider an instance $\mathcal P_0$ with $V=\{1,2,\dots,8\}$, $\mathcal C=\{c_1\}$ and $E=\{(1,4),(1,6),(4,7),(2,5),(5,8)\}$. Task processing times are $t(2)=1.1$, $t(3)=1.2$ and 
$t(i)=1$ for $i\neq 2,3$. Resource demands are $\rho(4)=2$ and $\rho(i)=1$ for $i\neq 4$.
The pool capacity is $\lambda(c_1)=3$ and $K_{\text{acc}}\equiv 1$. Let $x^\star=( s^\star, c^\star)$ be the feasible schedule defined by $ c^\star(v)\equiv c_1$ and $  s^\star(1)= s^\star(2)= s^\star(3)=0,\quad
 s^\star(4)=1.1,\quad  s^\star(5)=1.2,\quad
 s^\star(6)= s^\star(7)=2.1,\quad  s^\star(8)=2.2,$
which yields $f(x^\star)=3.2$. Now consider list scheduling $S_{\text{list}}$ as a map from $\mathcal B(\mathcal P_0)$ to $\mathcal X(\mathcal P_0)$.
For this specific instance, one can verify that for every action sequence $\omega\in \Omega_{\mathrm{action}}(\mathcal{P}_0)$,
list scheduling produces the same schedule $x_0=( s_0, c_0)$, where $ c_0(v)\equiv c_1$ and $ s_0(1)= s_0(2)= s_0(3)=0,\quad
 s_0(6)=1,\quad  s_0(5)=1.1,\quad
 s_0(4)=2,\quad  s_0(8)=2.1,\quad  s_0(7)=3$, resulting in $f(x_0)=4$ (see Fig.~\ref{fig:counterex}).
Therefore, $\mathcal X(\mathcal P_0;S_{\text{list}})=\{x_0\}$, and
\[
\delta(\mathcal P_0,S_{\text{list}})=f(x_0)-f(x^\star)=4-3.2=0.8>0,
\]
indicating the nonzero optimality gap of list scheduling on $\mathcal P_0$.

The existence of optimality gap admits a geometric interpretation. To see this, we consider the disjoint decomposition of the schedule space $\mathcal{X}(\mathcal{P})$ through the map $T$
\begin{equation}
\label{eq:decomposition}
    \mathcal X(\mathcal P)=\bigsqcup_{w\in \mathcal B_f(\mathcal P)} T^{-1}(w).
\end{equation}
Here each preimage $T^{-1}(w)$ is called a fiber, consisting of all schedules sharing the same feasible order $w$ (cf. Fig.~\ref{fig:illu-space}(a)). 
Since $S_{\text{list}}$ only explores its image $S_{\text{list}}(\Omega_{\mathrm{action}}(\mathcal P))\subsetneq \mathcal X(\mathcal P)$,
it may fail to intersect some fibers.
Equivalently, the composed map $T\circ S_{\text{list}}:\Omega_{\mathrm{action}}(\mathcal P)\to \mathcal B_f(\mathcal P)$
is not necessarily surjective. This surjectivity failure  arises from the non-injective nature of $T\circ S_{\text{list}}$. Specifically, 
different inputs $\omega_1\neq \omega_2$ may lead to schedules  that induce the same
schedule order, i.e., $T(S_{\text{list}}(\omega_1))=T(S_{\text{list}}(\omega_2))$.
In such a case, multiple order fibers are effectively folded onto the same output fiber, reducing the set of fibers
hit by $S_{\text{list}}$ and potentially leaving other fibers uncovered.
When all the optimal solutions lie in missed fibers, the best achievable objective value within $\mathcal X(\mathcal P;S_{\text{list}})$
is necessarily larger than the global optimum, resulting in a positive optimality gap (cf.~Fig.~\ref{fig:illu-space}(c)).
\vspace{-4pt}
\begin{figure}
  \centering  \includegraphics[width=1.0\linewidth]{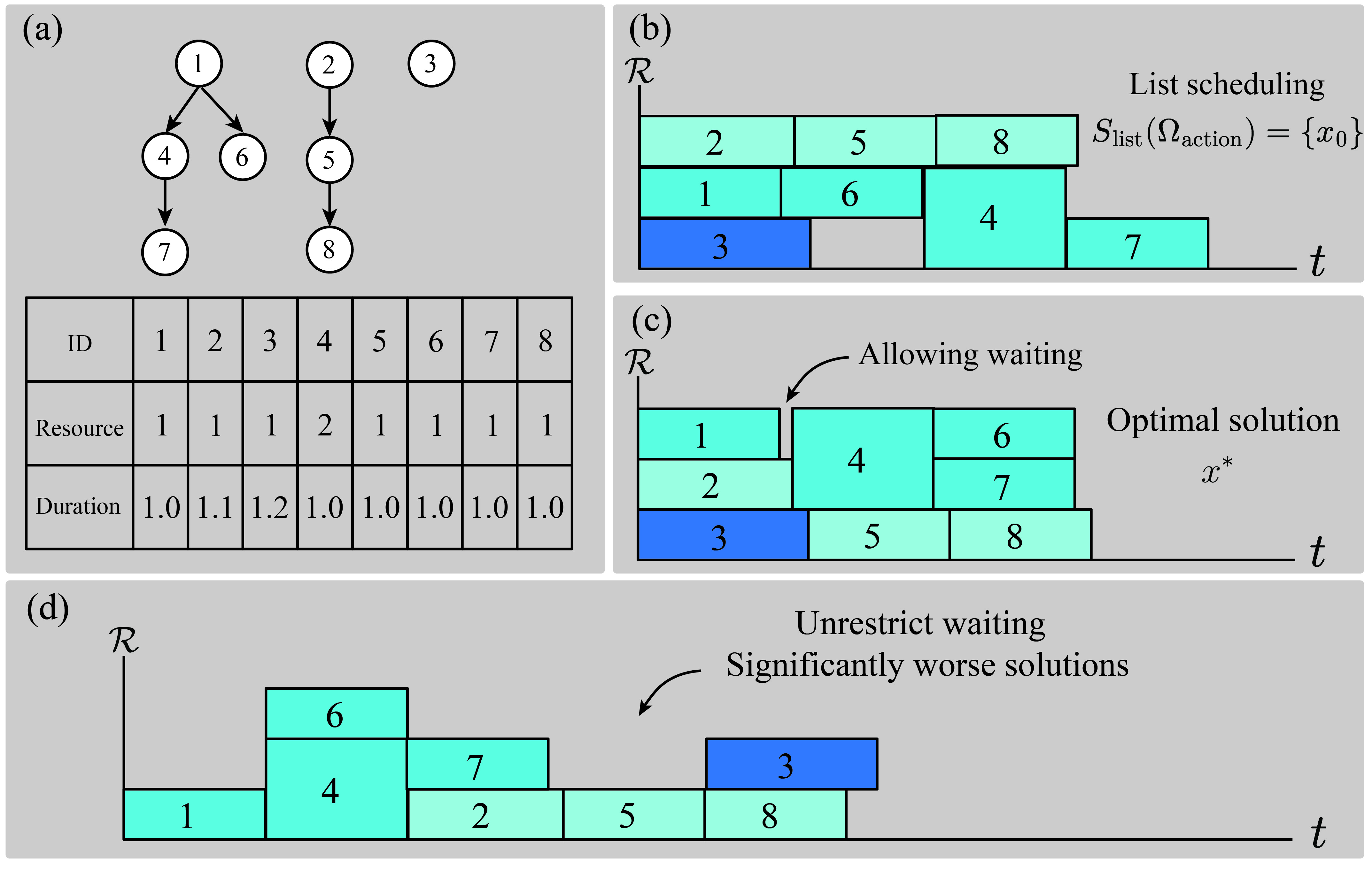}
  \caption{A counterexample showing list scheduling excludes the optimal solution for the DAG scheduling problem with heavy tasks. (a) Problem instance $\mathcal P_0$ with only one pool offering one type of resource with capacity 3. The resource demand and processing time is shown and the compatibility coefficients $K\equiv 1$. Each arrow $v\to w$ corresponds to an edge $(v,w)\in E$. (b) The only solution $x_0$ in the image of list scheduling $S_{\text{list}}(\Omega_{\text{action}}) = \{x_0\}$. The horizontal axis represents the time while the vertical axis represents the resource usage. (c) Optimal solution $x^*$ of the problem. (d) Worse solution for unrestricted waiting, leading to a significantly large objective value (makespan).}
  \label{fig:counterex}
\end{figure}

\subsection{Order-Covering Optimal Generation Maps}

A generation map may incur a positive optimality gap by missing the fiber that contains an optimal schedule.
We now introduce a criterion that rules out this issue by requiring that the map both (i) covers all feasible fibers and
(ii) selects a fiber-wise optimal representative.

\begin{definition}[Order-covering optimal (OCO) generation map]
\label{def:oco}
A generation map $S:\mathcal{W}\to\mathcal X$ is said to satisfy \emph{order-covering optimal} (OCO) condition if the following hold:
\begin{enumerate}
    \item[(O1)] (\emph{Order covering}) The composition $T\circ S:\mathcal W\to\mathcal B_f$ is surjective.
    \item[(O2)] (\emph{Fiber optimality}) For every $w\in\mathcal W$, the schedule $S(w)$ attains the minimal objective value
    among all schedules in the fiber of its induced order:
    \[
        f(x)\ \ge\ f(S(w)),\,\forall x\in\mathcal X \ \text{such that}\ T(x)=T(S(w)).
    \]
\end{enumerate}
\end{definition}

Under the fiber decomposition \eqref{eq:decomposition},
condition \textup{(O1)} ensures that for every feasible order $\hat w\in\mathcal B_f$ there exists some input $w$ such that
$S(w)\in T^{-1}(\hat w)$, i.e., the image $S(\mathcal W)$ intersects every feasible fiber.
Condition \textup{(O2)} further requires that, within each intersected fiber, the chosen point is a minimizer of $f$ over that fiber.
Hence, an OCO map cannot miss the global optimum due to incomplete fiber coverage.

\begin{proposition}[OCO maps eliminate optimality gaps]
\label{thm:oco-gap-zero}
If $S:\mathcal W\to\mathcal X$ satisfies the OCO condition, then
\[
\delta(\mathcal P,S)=0,\quad\text{for any instance $\mathcal{P}$}.
\]
Moreover, for any optimal solution $x^\star\in\mathcal X(\mathcal P)$, there exists an optimal solution
$\hat x\in S(\mathcal W(\mathcal P))$ such that $T(\hat x)=T(x^\star)$.
\end{proposition}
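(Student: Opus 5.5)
The argument will run through the fiber decomposition \eqref{eq:decomposition} and use (O1) and (O2) in turn. Fix an instance $\mathcal P$ and an optimal solution $x^\star\in\mathcal X(\mathcal P)$. This exists because the paper speaks of optimal solutions as given; if needed, one can justify it by noting that $\mathcal B_f$ is finite and each fiber's infimum is attained, for instance by the MILP formulation. Set $\hat w:=T(x^\star)\in\mathcal B_f$. By (O1), $T\circ S$ is surjective onto $\mathcal B_f$, so there is an input $w\in\mathcal W(\mathcal P)$ with $T(S(w))=\hat w$. Define $\hat x:=S(w)\in\mathcal X(\mathcal P;S)$; by construction $T(\hat x)=T(x^\star)$.

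Next I will show that $\hat x$ is optimal. Since $x^\star\in\mathcal X$ and $T(x^\star)=\hat w=T(S(w))$, the point $x^\star$ lies in the fiber of the order induced by $S(w)$. Condition (O2) then gives $f(x^\star)\ge f(S(w))=f(\hat x)$. On the other hand, $\hat x\in\mathcal X(\mathcal P)$ is feasible, so optimality of $x^\star$ gives $f(\hat x)\ge f(x^\star)$. Hence $f(\hat x)=f(x^\star)=\min_{x\in\mathcal X(\mathcal P)}f(x)$, and $\hat x$ is an optimal solution lying in $S(\mathcal W(\mathcal P))$ with $T(\hat x)=T(x^\star)$. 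This proves the second claim.

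For the gap, I will combine two inequalities. Because $\mathcal X(\mathcal P;S)\subseteq\mathcal X(\mathcal P)$, we have $\min_{\mathcal X(\mathcal P;S)}f\ge\min_{\mathcal X(\mathcal P)}f$. Because $\hat x\in\mathcal X(\mathcal P;S)$, we have $\min_{\mathcal X(\mathcal P;S)}f\le f(\hat x)=\min_{\mathcal X(\mathcal P)}f$. Together these give $\delta(\mathcal P,S)=0$ as defined in \eqref{eq:gap-def}. Since $\mathcal P$ was arbitrary, the conclusion holds for every instance.

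The main subtlety is not the chain of inequalities but the existence of the minima used in \eqref{eq:gap-def}. The schedule space $\mathcal X$ is infinite, since start times are real. I would handle this by observing that, by (O2), $f$ restricted to each fiber $T^{-1}(\hat w)$ attains its minimum at $S(w)$ for any $w$ hitting that fiber, and by (O1) every fiber is hit. Since $\mathcal B_f$ is finite, $\min_{\mathcal X(\mathcal P)}f$ is then the minimum over finitely many attained values, so it exists and is attained inside $S(\mathcal W)$. This argument does not need to assume the existence of $x^\star$ in advance.
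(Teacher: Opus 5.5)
Your proof is correct and matches the paper's argument: take an optimal $x^\star$, use (O1) to find $w$ with $T(S(w))=T(x^\star)$, then use (O2) to get $f(S(w))\le f(x^\star)$, so $S(w)$ is an optimal schedule in $\mathcal X(\mathcal P;S)$. The paper assumes the minima in \eqref{eq:gap-def} exist; your extra argument that they are attained (since $\mathcal B_f$ is finite and (O1)--(O2) give an attained minimum on every fiber) is sound and fills that gap.
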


\begin{proof}
Since $TS$ is a surjection to $\mathcal B_f$ and $T(x)\in \mathcal B_f$, there exists $w\in \mathcal W$ such that $TS(w) = T(x)$. By the condition O2 we have $f(x)\ge f(S(w))$. Consequently, $S(w)\in \mathcal X(\mathcal P;S)$ is also an optimal solution. This finishes the proof.
\end{proof}

As shown in  Proposition \ref{pro:SGS-well-defined}, the serial schedule generation scheme $S_{\text{SGS}}$ enjoys a stronger property than \textup{(O1)}, namely $T\circ S_{\text{SGS}}=\mathrm{Id}_{\mathcal B_f}
$. This means that $S_{\text{SGS}}$ is an order-preserving section of $T$ on $\mathcal B_f$.
Consequently, $S_{\text{SGS}}$ establishes a one-to-one correspondence between the feasible order space $\mathcal B_f$
and the subset $S_{\text{SGS}}(\mathcal B_f)\subseteq \mathcal X$.
Moreover, each fiber intersects this subset at exactly one point:
\[
T^{-1}(w)\cap S_{\text{SGS}}(\mathcal B_f)=\{S_{\text{SGS}}(w)\},\quad \forall w\in\mathcal B_f,
\]
as illustrated in Fig.~\ref{fig:illu-space}(b). The next proposition further shows that $S_{\text{SGS}}$ satisfies \textup{(O2)}. Therefore, $S_{\text{SGS}}$ selects a fiber-wise optimal schedule over $T^{-1}(w)$ for each feasible schedule $w$.

\begin{proposition}
\label{pro:SGS-oco}
$S_{\text{SGS}}$ satisfies the OCO condition. Moreover, $T\circ S_{\text{SGS}}=\mathrm{Id}_{\mathcal B_f}$.
\end{proposition}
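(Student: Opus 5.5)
The identity $T\circ S_{\text{SGS}}=\mathrm{Id}_{\mathcal B_f}$ is exactly the order-preservation part of Proposition~\ref{pro:SGS-well-defined}: for every $w\in\mathcal B_f$ the output satisfies $T(S_{\text{SGS}}(w))=w$. This immediately gives (O1), since every $w\in\mathcal B_f$ is its own preimage. So the real content is (O2). Because $T(S_{\text{SGS}}(w))=w$, (O2) reads: for every feasible $x$ with $T(x)=w$, $f(x)\ge f(S_{\text{SGS}}(w))$. I will prove the stronger pointwise statement that $s'(v)\le s(v)$ for all $v$, where $x'=S_{\text{SGS}}(w)=(s',c)$ and $x=(s,c)$ shares the pool assignment. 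Completion times then also satisfy $s'(v)+t_{\mathrm{act}}(v,c(v))\le s(v)+t_{\mathrm{act}}(v,c(v))$, and the makespan inequality follows.

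The pointwise claim goes by induction along the order in which Algorithm~\ref{alg:optimal-section} removes tasks from $R$. Suppose every previously scheduled task $u$ satisfies $s'(u)\le s(u)$, and let $v$ be the current task on pool $c=c(v)$. I first bound the lower bound $\tau_0=\max\{t_{\text{cur}}(c),t_{\text{dep}}(v)\}$ used by the algorithm:
\begin{itemize}
\item[(i)] Every $E$-predecessor $u$ of $v$ is already placed. Feasibility of $x$ and the induction hypothesis give $s'(u)+t_{\mathrm{act}}(u,c(u))\le s(u)+t_{\mathrm{act}}(u,c(u))\le s(v)$, hence $t_{\text{dep}}(v)\le s(v)$.
\item[(ii)] I read $t_{\text{cur}}(c)$ as the start time of the last task placed on $c$. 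By the $E^{\mathrm{pool}}_w$ edges, that task is the one of rank $d(v)-1$ on $c$, and in $x$ it starts no later than $v$ because $T(x)=w$. The induction hypothesis then gives $t_{\text{cur}}(c)\le s(v)$.
\end{itemize}
Together, $\tau_0\le s(v)$.

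The main step is to show that $v$ is resource-feasible at time $s(v)$ in the partial SGS schedule. The algorithm picks the earliest feasible time $\ge\tau_0$, so this gives $s'(v)\le s(v)$. The tasks already placed on $c$ are exactly those of rank below $d(v)$; all of them have $s'(u)\le s(u)\le s(v)$. Take any $\tau'\ge s(v)$ and any such $u$ that is running at $\tau'$ in $x'$, i.e.\ $s'(u)\le\tau'<s'(u)+t_{\mathrm{act}}(u,c)$. Then $s(u)\le s(v)\le\tau'<s'(u)+t_{\mathrm{act}}(u,c)\le s(u)+t_{\mathrm{act}}(u,c)$, so $u$ is also running at $\tau'$ in $x$. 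Hence, throughout $[s(v),s(v)+t_{\mathrm{act}}(v,c))$, the SGS load on $c$ from previously placed tasks is dominated componentwise by the load in $x$ from tasks other than $v$. Since $x$ is feasible, adding $\rho(v)$ stays within $\lambda(c)$.

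I expect the main obstacle to be matching this argument to the precise semantics of line 7 of the algorithm, namely that ``advance time until feasible'' really returns the earliest $\tau\ge\tau_0$ at which $v$ fits for its entire duration. The structural fact that makes this work is that all placed tasks on $c$ start at or before $t_{\text{cur}}(c)\le\tau_0$. After $\tau_0$ the committed load on $c$ is therefore nonincreasing in time, changing only at completion events. As a consequence, fitting at the start instant implies fitting over the whole interval, and scanning completion events finds the earliest feasible time. That earliest time is then at most $s(v)$, which closes the induction and with it (O2).
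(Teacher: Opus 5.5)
Your proof is correct and follows essentially the same route as the paper's: (O1) comes from the identity $T\circ S_{\text{SGS}}=\mathrm{Id}_{\mathcal B_f}$ of Proposition~\ref{pro:SGS-well-defined}, and (O2) from the pointwise bound on start times, proved by induction along the SGS order by showing that the competitor's start time is a feasible candidate, so the earliest-feasible choice can only be earlier. Your version is slightly more careful in two places: you justify $t_{\text{cur}}(c)\le s(v)$ through the rank-$(d(v)-1)$ task, where the paper just says ``by construction'', and you use the fact that the committed load on $c$ is nonincreasing after $\tau_0$ to pass from feasibility at an instant to feasibility over the whole execution interval, which the paper checks only at the instant $\tilde s(v)$.
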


\begin{proof}
Given that the identity $T\circ S_{\text{SGS}}=\mathrm{Id}_{\mathcal B_f}$ follows from Proposition~\ref{pro:SGS-well-defined}, \textup{(O1)} holds. It remains to prove \textup{(O2)}. Fix $w\in\mathcal B_f$ and let $x=S_{\text{SGS}}(w)$.
Take any feasible schedule $\tilde x\in\mathcal X$ such that $T(\tilde x)=T(x)$.
We show that $f(x)\le f(\tilde x)$. Write $x=( s(\cdot), c(\cdot))$ and $\tilde x=(\tilde {{s}}(\cdot), c(\cdot))$ (the pool assignment coincides because $T(x)=T(\tilde x)$).
We claim that for every task $v\in V$,
\begin{equation}
\label{eq:sgs-dominate}
{s}(v)\le \tilde {{s}}(v),
\end{equation}
which implies $t_{\mathrm{end}}(v)\le \tilde t_{\mathrm{end}}(v)$ and then $f(x)\le f(\tilde x)$. To prove \eqref{eq:sgs-dominate}, consider the iteration of Algorithm~\ref{alg:optimal-section} when $v$ is scheduled on pool $c=c(v)$.
Let $\tilde\tau:=\tilde {{s}}(v)$. Since $T(\tilde x)=T(x)$, every task $u$ on pool $c$ with smaller within-pool rank than $v$
must satisfy $\tilde {{s}}(u)\le \tilde\tau$. As such tasks are scheduled before $v$ by $S_{\text{SGS}}$, and (by induction over the algorithm order)
their start times satisfy $ s(u)\le \tilde {{s}}(u)$, their completion times satisfy
$t_{\mathrm{end}}(u)\le \tilde t_{\mathrm{end}}(u)$.

Now compare the resource usage on pool $c$ at time $\tilde\tau$.
If a previously scheduled task $u$ is running at time $\tilde\tau$ in $x$ (i.e., $ s(u)\le \tilde\tau<t_{\mathrm{end}}(u)$),
it holds that $\tilde{ s}(u)\le \tilde\tau$ and $\tilde t_{\mathrm{end}}(u)\ge t_{\mathrm{end}}(u)>\tilde\tau$. $u$ is then also running at time $\tilde\tau$ in $\tilde x$.
Therefore, the set of tasks running on pool $c$ at time $\tilde\tau$ under $x$ is a subset of that under $\tilde x$,
and thus the available resource under $x$ at $\tilde\tau$ is component-wise no smaller than under $\tilde x$.
Since $\tilde x$ is feasible and starts $v$ at $\tilde\tau$, we have $\rho(v)\le \lambda_{\mathrm{cur}}(c)$ at time $\tilde\tau$ in the algorithmic state as well. Moreover, feasibility of $\tilde x$ implies $\tilde\tau\ge t_{\mathrm{dep}}(v)$ (all $E$-predecessors of $v$ complete before $\tilde\tau$),
and by construction $\tilde\tau$ is no earlier than the current pool time maintained by the algorithm.
Hence, $\tilde\tau$ is a feasible start time for $v$ when the algorithm schedules it.
Since Algorithm~\ref{alg:optimal-section} assigns $v$ the \emph{earliest} feasible start time on pool $c$, it follows that
$ s(v)\le \tilde\tau=\tilde{ s}(v)$, proving \eqref{eq:sgs-dominate}. Thus $f(S_{\text{SGS}}(w))\le f(\tilde x)$ for all $\tilde x$ with $T(\tilde x)=T(S_{\text{SGS}}(w))$, which is exactly \textup{(O2)}.
\end{proof}

By combining Proposition \ref{thm:oco-gap-zero} with Proposition \ref{pro:SGS-oco}, it follows that $S_{\text{SGS}}$ consistently removes the optimality gap for any instance. This provides a concrete reference principle for gap elimination. In particular, 
any generation map whose reachable schedules include the image of $S_{\text{SGS}}$ will inherently retain the global optimum. This principle directly informs our subsequent algorithm development. 

\section{Compatibility and Gap Aware Neural Scheduling}
\label{sec:generating-feasible-schedule}
\subsection{Overview: Two-Stage Single-Pass Scheduling}
\label{subsec:overview}
Guided by the order-space view, we design an optimality gap-free scheduling algorithm suitable for learning. 
The analysis therein reveals that the inherent gap of standard list scheduling $S_{\mathrm{list}}$ stems from an order-covering failure where the realized schedules miss feasible fibers containing optimal solutions. The serial SGS $S_{\text{SGS}}$ eliminates this gap by allowing waiting to ensure full fiber coverage. However, its strict order preservation often introduces excessive delays that slow down the convergence of learning. Our scheduling algorithm introduces waiting  in list scheduling by augmenting the action space with an active skip action. The resulting generation map is essentially an extension of $S_{\mathrm{list}}$ to the extended action sequence space. It effectively enlarges the image of $S_{\mathrm{list}}$ to include the image of $S_{\mathrm{SGS}}$, thus eliminating the optimality gap. Moreover, it allows waiting in a controlled way so that learning with this generation map will not be hampered.

In our constructive MDP view, the action selection is governed by a parameterized policy. This policy is formulated through task-pool scores and the skip score. Crucially, these quantities are generated in a single-pass regime, where the  neural network is executed once per instance and is not re-invoked during rollout. The single-pass design significantly boosts the efficiency and scalability of our framework.

To capture the environmental heterogeneity, we propose a weighted cross-attention network (WeCAN) to generate the quantities required by the scheduling algorithm. This network is  compatibility aware through  a weighted cross-attention (WeCA) mechanism to explicitly model task-pool interactions. Notably, this mechanism is size-agnostic, enabling a single trained network to effectively tackle fluctuating environments with a varying number of task types and resource pools.

Putting the above together yields a two-stage process. Stage~I performs a single-pass compatibility-aware network processing for scoring, and Stage~II applies a score-driven skip-extended scheduling algorithm to obtain feasible schedules without generation-induced optimality gaps. The overall framework of our two-stage scheduling process is illustrated in Fig.~\ref{fig:framework}.

\subsection{Score-Driven Skip-Extended Scheduling Algorithm}

Stage~II turns the scores and skip parameters into a concrete feasible schedule via Algorithm~\ref{alg:learning},
which extends the classic list-scheduling  paradigm by allowing active skips.
Algorithm~\ref{alg:learning} takes as input an instance $\mathcal P=(V,E,\mathcal C,\rho,t,\lambda,K_{\mathrm{acc}})$,
task-pool action scores $\{u_{(v,c)}\}_{(v,c)\in\mathcal A}$, and a skip-parameter vector $\psi$
from which a step-dependent skip score $u_{\text{skip}}(k;\psi)$ is computed on the fly,
and outputs a feasible schedule $x=( s(\cdot), c(\cdot))$.
In \textsc{Greedy} mode, the output is deterministic for fixed $(u,\psi)$; in \textsc{Sampling} mode, the same inputs induce
a distribution over feasible schedules through randomized action selection. Under the hood, Algorithm~\ref{alg:learning} implicitly induces a generation map $S_d$ defined on the extended action sequence space $\overline{\Omega}_{\text{action}}$, whose formal definition is available in Appendix \ref{apx:theoritical results}.

\smallskip
\noindent\textbf{Active skip and decreasing skip score.}
The key extension beyond vanilla list scheduling is an active skip action $a_{\text{skip}}$ that advances the current time
to the next completion event even when ready tasks exist.
Accordingly, we assign $a_{\text{skip}}$ a step-dependent score so that it can be selected together with task-pool actions under the same score-to-policy rule:
\begin{equation}
\label{eq:skip-score}
    \begin{aligned}
    q_{\text{skip}}(k)\;&=\;\alpha\exp\Bigl(-\gamma k/(2n)\Bigr)+\beta,\\
u_{\text{skip}}(k) \;&=\; \log(q_{\text{skip}}(k)),
\end{aligned}
\end{equation}
where $n=|V|$, $k$ is the number of decisions taken so far and $\alpha,\beta,\gamma>0$ are positive skip parameters. For conciseness, 
we write $\psi:=(\alpha,\beta,\gamma)$. This decreasing functional form serves to mitigate over-prioritization of the skip action and thus eliminates the structural optimality gap without hampering the learning process. A constant skip score is insufficient to achieve this since the  resulting reachable set may fail to contain the image of $S_{\text{SGS}}$ for gap elimination. This issue is analyzed in Section~\ref{subsec:ablation-skip}. Re-evaluating the skip action via the neural network at every decision step to obtain dynamic scores would necessitate multiple forward passes. To reconcile with the single-pass regime, we instead  predict $\psi$ once and compute $u_{\text{skip}}(k;\psi)$ analytically during rollout. 

\smallskip
\noindent\textbf{Feasibility masking and action selection.}
At each decision step, Algorithm~\ref{alg:learning} masks task-pool actions that are infeasible under the current partial schedule,
including violations of (i) precedence constraints, (ii) pool capacity constraints, and (iii) compatibility constraints. Incompatibility corresponds to $K_{\mathrm{acc}}(v,c)=0$.
The skip action is masked only when no task is currently running (so that no completion event exists).
Given scores and masks $M_a$, we form a distribution over the extended action set
$\bar{\mathcal A}=\mathcal A\cup\{a_{\text{skip}}\}$ via
\begin{equation}
\label{eq:prob}
p_\theta(a_t=a)=\frac{\exp(u_a+M_a)}{\sum_{\tilde a\in\bar{\mathcal A}}\exp(u_{\tilde a}+M_{\tilde a})}, \quad\text{for all $a\in\bar{\mathcal{A}}$}.
\end{equation}
In \textsc{Greedy} mode we select $a_t\in\arg\max_{a\in\bar{\mathcal A}}(u_a+M_a)$ to obtain a deterministic schedule,
whereas in \textsc{Sampling} mode we sample actions for exploration during training.

\begin{algorithm}[tb]
   \caption{Score-driven skip-extended scheduling algorithm}
   \label{alg:learning}
\begin{algorithmic}[1]
   \STATE {\bfseries Input:} Instance $\mathcal P=(V,E,\mathcal C,\rho,t,\lambda,K_{\mathrm{acc}})$; action scores $\{u_{(v,c)}\}_{(v,c)\in\mathcal A}$; skip-parameter set $\psi$; mode \textsc{Greedy} / \textsc{Sampling}.
   \STATE {\bfseries Output:} A feasible schedule $x=( s(\cdot), c(\cdot))$.
   \STATE Initialize current time $t_{\mathrm{cur}}\gets 0$, unscheduled tasks $U\gets V$, and pool states (available resources and running tasks).
   \WHILE{$U\neq \emptyset$}
      \STATE Compute the ready set $R$ at time $t_{\mathrm{cur}}$.
      \STATE Apply feasibility masks to scores $\{u_{(v,c)}\}$ according to dependency/resource/compatibility constraints at $t_{\mathrm{cur}}$.
      \STATE Set the skip score $u_{\mathrm{skip}}$ by the rule \eqref{eq:skip-score}; mask $a_{\mathrm{skip}}$ iff no task is currently running.
      \STATE Form the policy over $\bar{\mathcal A}=\mathcal A\cup\{a_{\mathrm{skip}}\}$ by \eqref{eq:prob}, and select an action $a_t$ by the specified mode.
      \IF{$a_t=(v,c)\in\mathcal A$}
          \STATE Start task $v$ on pool $c$ at time $t_{\mathrm{cur}}$: set $ s(v)\gets t_{\mathrm{cur}}$, $ c(v)\gets c$.
          \STATE Update pool state and remove $v$ from $U$.
      \ELSE
          \STATE Advance time to the next completion event and release all tasks completing at the new $t_{\text{cur}}$.
      \ENDIF
      \STATE Update dependency status and pool states.
   \ENDWHILE
\end{algorithmic}
\end{algorithm}

\begin{theorem}
\label{thm1}
Let $n=|V|$. Algorithm~\ref{alg:learning} satisfies:
\begin{enumerate}
\item The rollout terminates in at most $2n$ decisions and outputs a feasible schedule.
\item In \textsc{Sampling} mode, for any real-valued scores $\{u_{(v,c)}\}$ and parameters $\psi$,
the induced distribution assigns strictly positive probability to every schedule in the image
$\mathcal X(\mathcal P; S_{\text{SGS}})$ of the serial schedule generation scheme. In particular, it assigns positive probability to
(i) at least one schedule realizing any given feasible order in $\mathcal B_f(\mathcal P)$,
and (ii) at least one optimal schedule of $\mathcal P$.

\item In \textsc{Greedy} mode, for each instance $\mathcal P$ there exist scores
$\{u_{(v,c)}\}$ and parameters $\psi$ such that Algorithm~\ref{alg:learning} outputs an optimal schedule.
\item For the counterexample instance $\mathcal P_0$, statements (2)--(3) fail if $a_{\text{skip}}$ is removed.
\end{enumerate}
\end{theorem}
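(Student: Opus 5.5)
Item~(1) comes from counting decisions. Every task action schedules exactly one task, so there are exactly $n$ of them. A skip is allowed only when some task is running, and it advances $t_{\mathrm{cur}}$ to the next completion event, releasing at least one task. A task that has been released never runs again, so over the whole rollout there are at most $n$ distinct completion events, and hence at most $n$ skips. This gives at most $2n$ decisions in total.

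The rollout also cannot get stuck. If $U\neq\emptyset$ and no task action is eligible, then either some task is running, so the skip is unmasked, or nothing is running. In the second case every predecessor of every unscheduled task is finished, so some task in $U$ is ready. All pools are then empty, and $\rho(v)\le\lambda(c)$ for some compatible $c$ because we assume the instance has at least one feasible schedule, so that task has an eligible action. Feasibility of the output then follows directly from the masks, which enforce precedence, capacity at $t_{\mathrm{cur}}$ and compatibility. Since start times never decrease, capacity only needs to be checked at start events.

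For item~(2), take $x=S_{\text{SGS}}(w)$ and build an extended action sequence $\bar\omega$ that Algorithm~\ref{alg:learning} realizes as exactly $x$. Sort the tasks by start time $s(v)$, breaking ties by the order in which $S_{\text{SGS}}$ scheduled them. Then simulate: at the current time $t_{\mathrm{cur}}$, emit the actions $(v,c(v))$ for all tasks with $s(v)=t_{\mathrm{cur}}$ in that order, and then emit one skip.

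The key lemma is that every start time in $x$ is either $0$ or a completion time of some task. This holds because in Algorithm~\ref{alg:optimal-section} the start time $\tau$ is a maximum of $t_{\mathrm{cur}}(c)$, $t_{\mathrm{dep}}(v)$ and completion times on pool $c$, all of which are completion times or $0$. Together with induction, the lemma shows that a sequence of skips reaches each needed start time exactly; skips that pass through completion events where nothing starts are simply repeated.

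At each emitted task action we must check that it is unmasked. Precedence holds because $x$ is feasible. Capacity holds because the running set at $t_{\mathrm{cur}}$ in the simulation equals the running set in $x$ at that time. Each skip is unmasked because the next target start time is a completion time, so some task is running when we skip toward it.

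Every action in $\bar{\mathcal A}$ that is not masked has probability $\exp(u_a)/Z>0$, where $u_{\mathrm{skip}}=\log q_{\mathrm{skip}}$ is finite because $q_{\mathrm{skip}}>0$. So $\bar\omega$ has positive probability for any real scores and any $\psi$. Parts~(i) and~(ii) then follow from Propositions~\ref{pro:SGS-oco} and~\ref{thm:oco-gap-zero}. I expect the main obstacle to be stating this lemma carefully and handling ties cleanly, including the fact that a skip releases all tasks that complete at the same instant.

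For item~(3), apply the construction of item~(2) to an optimal order $w^\star=T(x^\star)$, obtaining a sequence $\bar\omega^\star$ of length $L\le 2n$. Choose task scores that strictly decrease along the positions of the task actions in $\bar\omega^\star$, for example $u_{(v,c(v))}=M(n-\mathrm{pos}(v))$, and set every other pair's score to $-\infty$, or to a very negative number. Then at any step where $\bar\omega^\star$ prescribes a task, that task has the largest score among the unmasked actions; its predecessors in the sorted order have already been scheduled, and it is unmasked. We also need the skip score to lose at those steps and win at the steps where $\bar\omega^\star$ prescribes a skip. Since $u_{\mathrm{skip}}(k)$ depends only on $k$ and decreases monotonically, a single threshold curve cannot separate arbitrary patterns. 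The fix I would try first is to make the skip lose against the currently best unmasked task exactly when a task is prescribed. Order the task scores by position and let them decrease very steeply, so that the next prescribed task's score sits above $u_{\mathrm{skip}}(k)$ at task steps. At a skip step, every remaining unmasked task is one that starts later and, by the sorted-order choice, has a score below $u_{\mathrm{skip}}(k)$; the steep geometric decay of the task scores leaves room to fit the smooth curve $\log(\alpha e^{-\gamma k/2n}+\beta)$ between these values. If this fitting fails for the exponential family, I would use a nearly flat curve ($\gamma\to0$) together with score gaps chosen inductively.

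For item~(4), I cite the counterexample of Section~\ref{sec:theory}: without skips, $\mathcal X(\mathcal P_0;S_{\mathrm{list}})=\{x_0\}$ with $f(x_0)=4>3.2$. So no optimal schedule and no SGS-image point of the order $T(x^\star)$ is reachable with positive probability, and none is reachable in greedy mode either.
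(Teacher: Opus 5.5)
\paragraph{Items (1), (2), (4).} These follow the paper's proof. You count $n$ dispatches plus at most $n$ completion-triggered skips. You replay an SGS schedule using the fact that every start time is $0$ or a completion time. You cite $\mathcal X(\mathcal P_0;S_{\text{list}})=\{x_0\}$. Sorting the replay by start time is a sensible refinement: the paper replays the raw SGS trace, whose start times need not be monotone across pools. Your no-deadlock remark in (1) also covers a point the paper leaves implicit.

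\paragraph{The gap is in item (3).} You first fix task scores that decrease with position, chosen without regard to where the skips occur (linear, then ``steeply geometric''). You then assert that $u_{\text{skip}}(k)=\log(\alpha e^{-\gamma k/(2n)}+\beta)$ can be fitted between them. Nothing supports this, and it cannot be expected in general, for two reasons:
\begin{itemize}
\item The family has only three parameters and is convex in $k$, so its one-step decrements $u_{\text{skip}}(k)-u_{\text{skip}}(k+1)$ strictly decrease.
\item A rollout may require the curve to stay above the score of an eligible but deferred task through a long run of voluntary skips. It may then need to drop below several consecutive prescribed scores within a few steps.
\end{itemize}
With scores chosen in advance, making their decay steeper only enlarges the drops the curve must make later, when its decrements are smallest. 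Your fallback (``nearly flat curve, gaps chosen inductively'') points the right way but is not a construction, and flatness plays no role.

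\paragraph{The missing idea: reverse the order of choices.} Take any $\psi$. Since $\alpha,\gamma>0$, the curve is strictly decreasing. Let $\Delta>0$ be its smallest one-step decrement over the $N\le 2n$ decisions of $\bar\omega^\star$. Anchor each prescribed dispatch to the curve at the decision index $t$ where $\bar\omega^\star$ prescribes it: $u_{(v_t,c_t)}:=u_{\text{skip}}(t)+\Delta/2$. Give every other pair a score at most $\log\beta$, which lies strictly below every value of $u_{\text{skip}}$ because $q_{\text{skip}}>\beta$. Then:
\begin{itemize}
\item At a dispatch step, the prescribed action beats the skip, every unprescribed pair, and every task prescribed later.
\item At a skip step $t$, any unmasked task is either unprescribed or prescribed at some $j>t$. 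Its score is therefore at most $u_{\text{skip}}(t+1)+\Delta/2<u_{\text{skip}}(t)$.
\end{itemize}
This is essentially the paper's construction, which uses $\varepsilon=\Delta/4$ and offsets $\pm 2\varepsilon$.
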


\begin{proof}
We prove the four statements in order.

(1) Termination and feasibility.
Feasibility follows from the masking rule: at any time $t_{\mathrm{cur}}$,
only actions that respect precedence constraints in $E$, pool-capacity constraints, and compatibility constraints
are unmasked. Hence, any dispatched action produces a feasible partial schedule. For termination, since each dispatch action schedules exactly one previously unscheduled task, at most $n=|V|$ dispatches can occur.
Given that each skip action advances $t_{\mathrm{cur}}$ to the next completion event and at least one running task completes at that event, the total number of skips is at most the number of task completions, which is at most $n$.
Therefore, the rollout terminates in at most $2n$ decisions and outputs a feasible schedule.

(2) Strictly positive probability for every schedule in $\mathcal X(\mathcal P;S_{\text{SGS}})$.
Under the sampling policy in~\eqref{eq:prob}, for any step $t$, every unmasked action has strictly
positive probability since $\exp(\cdot)>0$ and the denominator is finite and positive.
Therefore, it suffices to show that for any schedule $x\in \mathcal X(\mathcal P;S_{\text{SGS}})$, there exists
a step-by-step feasible action sequence that realizes $x$ under Algorithm~\ref{alg:learning}.

Fix any $x\in \mathcal X(\mathcal P;S_{\text{SGS}})$. By definition of the image, there exists a feasible order
$\hat w\in \mathcal B_f(\mathcal P)$ such that $x = S_{\text{SGS}}(\hat w)$.
Consider the SGS procedure (Algorithm~\ref{alg:optimal-section}) on $\hat w$ that produces $x$.
This run induces an \emph{SGS trace}:
an ordered list of dispatched tasks $(v_1,\dots,v_n)$ with pool assignments $c(v_i)$ and start times
$\tau_i := s_x(v_i)$ (the start time of $v_i$ in $x$).
A key property of SGS is that each $\tau_i$ is an \emph{event time} of the partial schedule built so far:
$\tau_i=0$ for the first dispatch(es), and whenever $\tau_i>0$ it equals the completion time of at least one
previously started task (since SGS sets the start time as $t_{\text{dep}}$ or $t_{\text{cur}}(c)$, while both being task completion time).

We now construct an action sequence for Algorithm~\ref{alg:learning} that realizes $x$.
Maintain an index $i$ pointing to the next unscheduled task in the trace.
At any decision time $t_{\mathrm{cur}}$:
\begin{itemize}
\item If $t_{\mathrm{cur}} = \tau_i$, dispatch $(v_i,c(v_i))$ and increment $i$.
(If multiple consecutive tasks in the trace share the same start time $\tau_i=t_{\mathrm{cur}}$,
dispatch them in that trace order before taking any skip.)
\item If $t_{\mathrm{cur}} < \tau_i$, take $a_{\mathrm{skip}}$.
\end{itemize}
We claim every chosen action above is unmasked, and the resulting rollout reproduces $x$.

\emph{Dispatch steps are unmasked.}
When $t_{\mathrm{cur}}=\tau_i$ and we dispatch $(v_i,c(v_i))$, all precedence constraints of $v_i$
are satisfied because $x$ is feasible and starts $v_i$ at $\tau_i$.
Moreover, by induction the partial schedule constructed by Algorithm~\ref{alg:learning} matches $x$ on already dispatched tasks,
then the set of running tasks on each pool at time $\tau_i$ (also the remaining pool capacity and compatibility status)
coincides with that in $x$. Since $x$ starts $v_i$ at $\tau_i$ on pool $c(v_i)$, the resource/compatibility masks also allow $(v_i,c(v_i))$.
Thus, $(v_i,c(v_i))$ is unmasked.

\emph{Skip steps are unmasked and do not overshoot $\tau_i$.}
When $t_{\mathrm{cur}}<\tau_i$, by the event-time property of SGS, the next start time $\tau_i$ cannot be strictly
after $t_{\mathrm{cur}}$ while having no running task on every pool; otherwise the system would be idle and SGS would be able to
start the next trace task at $t_{\mathrm{cur}}$, contradicting $\tau_i>t_{\mathrm{cur}}$.
Hence, at least one task is running and $a_{\mathrm{skip}}$ is unmasked. Furthermore, $a_{\mathrm{skip}}$ advances $t_{\mathrm{cur}}$ to the next completion event among currently running tasks,
which is the next event time of the (matching) partial schedule and cannot exceed the next SGS dispatch time $\tau_i$.
Repeating skip finitely many times reaches $t_{\mathrm{cur}}=\tau_i$.

Combining the two cases, the above construction yields a well-defined rollout that realizes $x$, and every chosen action is unmasked.
Since unmasked actions have strictly positive sampling probability under \eqref{eq:prob},
the probability of this rollout (also for generating $x$) is strictly positive.
This proves that, in \textsc{Sampling} mode, the induced distribution assigns strictly positive probability to every
$x\in \mathcal X(\mathcal P;S_{\text{SGS}})$.

The two stated consequences follow immediately:
(i) for any feasible order $\hat w\in\mathcal B_f(\mathcal P)$, the schedule $S_{\text{SGS}}(\hat w)$ lies in the image,
then has positive probability; and
(ii) by the OCO property of $S_{\text{SGS}}$ there exists at least one optimal schedule in $\mathcal X(\mathcal P;S_{\text{SGS}})$,
which also has positive probability.

(3) Existence of scores yielding an optimal schedule in \textsc{Greedy} mode.
Fix the optimal schedule $\hat x\in \mathrm{Im}(S_{\text{SGS}})$ constructed above, and fix a feasible targeted rollout
$\omega^\star=(a_1,\dots,a_N)$ that realizes $\hat x$ under Algorithm~\ref{alg:learning}, where each $a_t$ is unmasked at its step.
We construct static task--pool scores $\{u_{(v,c)}\}$ and skip parameters $\psi$ such that \textsc{Greedy} mode reproduces $\omega^\star$
and therefore outputs $\hat x$.

Choose $\psi=(\alpha,\beta,\gamma)$ with $\alpha>0$ and $\gamma>0$ so that the skip score $u_{\mathrm{skip}}(t)$ defined in~\eqref{eq:skip-score}
is strictly decreasing for $t=1,\dots,N$.
Set
\[
\Delta:=\min_{1\le t\le N-1}\bigl(u_{\mathrm{skip}}(t)-u_{\mathrm{skip}}(t+1)\bigr)\;>\;0,
\qquad
\varepsilon:=\Delta/4.
\]
Define static task--pool scores as follows. For every dispatch step $t$ with $a_t=(v_t,c_t)\in\mathcal A$, set $
u_{(v_t,c_t)}:=u_{\mathrm{skip}}(t)+2\varepsilon$. For every other task--pool pair $(v,c)\in\mathcal A$ that does not appear as a dispatch action in $\omega^\star$, set 
$u_{(v,c)}:=u_{\mathrm{skip}}(N)-2\varepsilon$. We claim that under these scores, \textsc{Greedy} mode selects $a_t$ at each step $t$.
Proceed by induction on $t$. Assume the first $t-1$ decisions coincide with $\omega^\star$; then the algorithm is in the same state
as the targeted rollout at step $t$. Hence, $a_t$ is unmasked and the set of unmasked actions is identical.
Consider two cases.

If $a_t=a_{\mathrm{skip}}$, then for any unmasked dispatch action $(v,c)$, either $(v,c)$ appears as a dispatch action
in $\omega^\star$ at some later index $j>t$, in which case
\[
u_{(v,c)}=u_{\mathrm{skip}}(j)+2\varepsilon \le u_{\mathrm{skip}}(t+1)+2\varepsilon
< u_{\mathrm{skip}}(t)=u_{\mathrm{skip}}(t),
\]
where the strict inequality follows from $u_{\mathrm{skip}}(t)-u_{\mathrm{skip}}(t+1)\ge \Delta>2\varepsilon$,
or $(v,c)$ never appears in $\omega^\star$, in which case $u_{(v,c)}=u_{\mathrm{skip}}(N)-2\varepsilon<u_{\mathrm{skip}}(t)$.
Therefore, $a_{\mathrm{skip}}$ has strictly larger score than any other unmasked action and is selected by \textsc{Greedy} mode.

If $a_t=(v_t,c_t)\in\mathcal A$, then $u_{(v_t,c_t)}=u_{\mathrm{skip}}(t)+2\varepsilon>u_{\mathrm{skip}}(t)$, so it beats $a_{\mathrm{skip}}$.
For any other unmasked dispatch action $(v,c)$, either it appears later in $\omega^\star$ at some $j>t$, in which case
$u_{(v,c)}=u_{\mathrm{skip}}(j)+2\varepsilon<u_{\mathrm{skip}}(t)+2\varepsilon=u_{(v_t,c_t)}$,
or it never appears in $\omega^\star$, in which case $u_{(v,c)}=u_{\mathrm{skip}}(N)-2\varepsilon<u_{(v_t,c_t)}$.
Thus, $(v_t,c_t)$ has strictly largest score among unmasked actions and is selected by \textsc{Greedy} mode.

In both cases, \textsc{Greedy} selects $a_t$, completing the induction. Hence, \textsc{Greedy} reproduces the entire rollout $\omega^\star$
and outputs $\hat x$, which is optimal.

(4) Failure without the skip action on $\mathcal P_0$.
Removing $a_{\mathrm{skip}}$ reduces Algorithm~\ref{alg:learning} to the standard list-scheduling realization rule.
For the counterexample instance $\mathcal P_0$,
list scheduling cannot reach any optimal schedule. Consequently, statements (2)--(3) fail on $\mathcal P_0$ without skip.

This completes the proof.
\end{proof}

\subsection{Compatibility-Aware WeCAN Architecture}\label{sec:network}

\begin{figure*}
  \centering
  \includegraphics[width=0.95\linewidth]{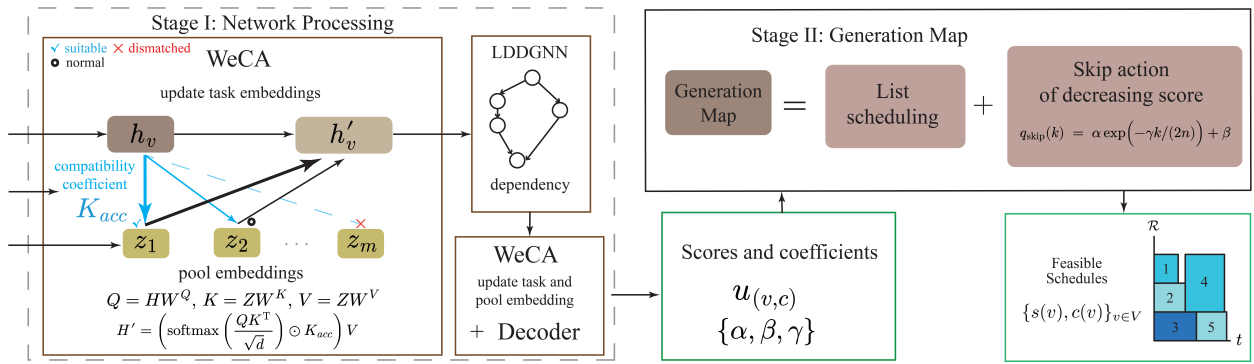}
  \caption{The two-stage framework of our architecture. The first stage involves once network processing by our efficient and adaptive weighted cross-attention architecture. It embeds the problem attribution and provides scores and coefficients for the second stage. The second stage employs a generation map extended from list scheduling. This extension restores surjectivity onto the schedule order space and ensures the inclusion of an optimal solution in the image of the generation map, without significantly slowing down training convergence.}
  \label{fig:framework}
\end{figure*}

In Stage~I, WeCAN generates the scores and skip parameters required by Algorithm~\ref{alg:learning}. This network adopts a standard encoder-decoder architecture. The encoder encodes the scheduling problem into embeddings, and the decoder converts these embeddings into the scores and skip parameters that drive the scheduling procedure. We now elaborate on the architecture of the encoder and decoder, respectively. 

The encoder is designed to extract expressive embeddings from the heterogeneous scheduling problem, by processing task-resource interactions and the DAG structure. Specifically, each task node $v$ is characterized by a feature vector $(t(v), \rho(v))$ representing its processing time and resource demands, while each resource pool $c$ is defined by its capacity vector $\lambda(c)$. These raw features are first turned into two sets of initial embeddings, i.e., task embeddings $\{\bm{h}_v^{(0)}\}_{v\in V}$ and pool embeddings $\{\bm{z}_c^{(0)}\}_{c\in\mathcal{C}}$, through two separate multi-layer perceptrons (MLPs). The initial embeddings capture only isolated characteristics of tasks and pools, failing to capture complicated environmental heterogeneity and intricate task dependencies inherent in the problem. To resolve the former, we propose the WeCA mechanism to synthesize heterogeneous environmental information, especially the varying task-resource compatibilities. Notably, this mechanism remains adaptable regarding the number of task types and resource pools, without changing the network architecture. To address the latter, a longest directed distance-based GNN is applied to encode the intricate precedence dependencies encompassed within the DAG structure. Below we present more details about these two main components in our encoder. 

\textbf{WeCA.} The WeCA layer executes weighted message passing through a gating  mechanism, enabling compatibility-aware interactions between tasks and resource pools. We focus here on the directional flow from resource pools to tasks. The reverse direction is defined symmetrically using the transposed compatibility matrix $K_{acc}^{\rm T}$. Let $\bm{H}\in\mathbb{R}^{|V|\times d}$  and $\bm{Z}\in\mathbb{R}^{|\mathcal C|\times d}$ denote the stacked embedding matrices for tasks and resource pools, respectively. The WeCA operation is formalized as:
    \begin{align*}
    &\bm{Q} = \bm{H}\bm{W}^Q,\, \bm{K}=\bm{Z}\bm{W}^K,\,\bm{V}=\bm{Z}\bm{W}^V,\\
    &\mathrm{WeCA}(\bm{H},\bm{Z},K_{acc}) := \left(\mathrm{softmax}\left(\frac{\bm{Q}\bm{K}^{\rm T}}{\sqrt{d}}\right)\odot K_{acc}\right)\bm{V}.
\end{align*}
In practice, the multi-head and skip-connection techniques are also incorporated, as in the vanilla Transformer architecture ~\cite{attention}, to enhance the expressive power and training stability. By integrating the compatibility coefficients directly into the attention weights, the WeCA mechanism allows each task to selectively attend to pools that are both feasible and advantageous under heterogeneous execution conditions. Specifically, this mechanism evaluates potential pools for each task and aggregates their features, weighted by compatibility. Recall that task $v$ cannot run on pool $c$ if and only if $K_{acc}(v,c) = 0$. Thus, incompatible pool assignments are inherently masked in the above attention calculation. Moreover, a larger $K_{acc}(v,c)$ indicates that task $v$ runs faster on pool $c$, indicating better compatibility. Therefore, for each task, the WeCA layer effectively gathers and integrates information primarily from its most compatible pools. Crucially, unlike approaches relying on fixed-size embeddings, this mechanism maintains adaptability without imposing fixed dimensionality constraints on the network architecture. This enables the framework to evaluate context-dependent resource capacity for each task more realistically within the heterogeneous environment.

A critical design choice in WeCA is applying the compatibility coefficient matrix as a multiplicative gating factor outside the softmax normalization, rather than as an additive logarithmic attention bias within it. For comparison, an inside variant would be formulated as:
\begin{equation*}
\text{WeCA}_{\text{inside}}(\bm{H},\bm{Z},K_{acc})
=\mathrm{softmax}\left(\frac{\bm{Q}\bm{K}^{\rm T}}{\sqrt{d}} + \log K_{acc}\right)\bm{V}, 
\end{equation*}
where $\log K_{acc}$ serves as the bias term. We opt for the former specifically to preserve information regarding the overall compatibility degree of tasks in the heterogeneous scheduling scenario. Consider a scenario of two pools with identical capacity and two tasks $v_1,v_2$ with the same attribute. Suppose $v_1$ is compatible with only one pool while $v_2$ is compatible with both pools. Despite the identical attribute, these tasks differ significantly in their environmental compatibility. If the inside placement were taken, the normalization effect could lead to the same embeddings for both tasks, failing to distinguish their different compatibility profiles. Conversely, the outside placement better reflects a task's overall compatibility within the environment, resulting in a more accurate and distinguishable embedding.

\textbf{LDDGNN.}
To encode precedence information in DAGs, we use an attention-based GNN, as standard message-passing GNNs often struggle to capture long-range directed dependencies.
Our key design choice is to parameterize attention by the longest directed distance (LDD) $d_e(v,w)$. This metric is positive when $v$ reaches $w$ and negative when $w$ reaches $v$, with the magnitude being the length of the longest directed path. For the case without a directed path, the LDD is classified as incomparable ($+\infty$) or disconnected ($-\infty$) according to connectivity in the undirected graph.
The motivation is that $|d_e(v,w)|$ provides a simple proxy for dependency depth in the DAG. Among multiple directed connections, the longest path best distinguishes the levels of precedence separation, which is crucial for modeling delay propagation along critical paths.
In the spirit of Graphormer~\cite{Ying22nips} and Topoformer~\cite{Gag22nips}, we inject LDD into attention through an LDD-dependent bias and mask.
Concretely, in head $j$ we modify the attention logit as
\begin{equation*}
\begin{aligned}
    \displaystyle \alpha_{v,w}^j = \frac{\langle \displaystyle \bm{q}_v^j, \displaystyle \bm{k}_w^j\rangle}{\sqrt{d}} + b_{d_e(v,w)}+ M_{v,w}^j,
\end{aligned}
\end{equation*}
where $b_{d_e(v,w)}$ is a learnable scalar indexed by $d_e(v,w)$ and $M_{v,w}^j$ is an attention mask. Precise definitions of the LDD, the bias term, and the attention mask are provided in Appendix \ref{apx:implementation_details}.1. The remaining computations follow a standard multi-head attention block. This formulation enables each head to selectively attend to structurally relevant relations, such as ancestors or descendants within a desired depth range.

In the full encoder, task embeddings are first updated by one WeCA layer and a LDDGNN. The encoder then alternately updates the task embeddings and the pool embeddings using WeCA layers, so that information propagates back and forth between tasks and pools.

Given the final task embeddings $\{\bm{h}^{(L)}_v\}_{v\in V}$ and pool embeddings $\{\bm{z}^{(L)}_c\}_{c\in\mathcal{C}}$ from the encoder, the decoder then generates task-pool action scores and skip parameters as follows. The score of each task-pool pair $(v,c)$ is calculated through a weighted inner product:
\begin{align*}
             u_{(v,c)} &=  (\bm{W}^Q_{s} \bm{h}_v^{(L)})^{\rm T}(\bm{W}^K_{s} \bm{z}_c^{(L)}) + \log(K_{acc}(v,c)).
\end{align*}
In addition, skip parameters $\psi=\mathrm{MLP}\left(\frac{1}{|V|}\sum_{v\in V}\bm{h}_v\right)$ are derived from the average task embedding with an MLP. 

\subsection{Learning with a parameterized probabilistic model}

Our framework learns a parameterized probabilistic model to minimize the expected makespan of realized schedules over the distribution of problem instances.
Specifically, the probabilistic model $p_{\theta}(\omega|\mathcal{P})$ is defined in \eqref{eq:prob-mdp} with the action policy $p_{\theta}(a_t| s_t,\mathcal{P})$ specified in \eqref{eq:prob}. Crucially, this action policy is formulated through scores and skip parameters that are computed once by the network from  the instance $\mathcal{P}$ without re-encoding the evolving state $s_t$. This single-pass parameterization substantially improves efficiency and scalability, without sacrificing the solution quality, which is empirically demonstrated in Appendix \ref{apx:narvsar}. 

To learn the probabilistic model over action sequences, we use a reinforcement learning formulation in which each sampled sequence is evaluated by the makespan of its realized schedule. Let $\omega$ denote an action sequence sampled from $p_\theta(\cdot\mid \mathcal P)$ and let $S_d(\omega)$ be the schedule realized by our generation map. The sampling and realization processes are jointly accomplished by Algorithm~\ref{alg:learning}. 
By denoting $\hat f(\omega):= f\!\big(S_d(\omega)\big)$, the optimization problem for learning is formulated as:
\[
\min_\theta\; L(\theta)=\mathbb E_{\mathcal P}\Big[\mathbb E_{\omega\sim p_\theta(\cdot\mid \mathcal P)}\big[\hat  f(\omega)\big]\Big].
\]
The gradient of $L(\theta)$ is given by 
\begin{equation*}
\nabla L(\theta)
=\mathbb{E}_{\mathcal P}\!\left[\mathbb{E}_{\omega\sim p_\theta(\cdot\mid \mathcal P)}
\left[(\hat f(\omega)-b(\mathcal P))\,\nabla_\theta \log p_\theta(\omega \mid \mathcal P)\right]\right].
\end{equation*}
Here $b(\mathcal P)$ is a baseline, e.g., the instance-wise average makespan, to reduce the variance. We then solve this problem using the policy gradient algorithms in RL.

\begin{table*}[htbp]
  \caption{Experimental results on TPC-H datasets.}
  \label{table:tpch-main}
  \setlength{\tabcolsep}{3.4pt}
  \centering
  \begin{tabular}{lccccccccc}
    \toprule
    & \multicolumn{3}{c}{TPC-H-30, 3 pools} & \multicolumn{3}{c}{TPC-H-50, 3 pools} & \multicolumn{3}{c}{TPC-H-100, 3 pools} \\
    & Makespan & Improvement & Time & Makespan & Improvement & Time & Makespan & Improvement & Time \\
    \midrule
    SFT     & 27404 & -28.12\% & 0.23 & 49172 & -32.31\% & 0.78 & 84986 & -24.27\% & 3.08 \\
    MOPNR   & 25052 & -17.13\% & 0.30 & 43545 & -17.17\% & 0.99 & 77362 & -13.12\% & 3.34 \\
    CP      & 23869 & -11.59\% & 0.29 & 41597 & -11.93\% & 0.90 & 74364 & -8.74\%  & 3.35 \\
    HEFT    & 23177 & -8.36\%  & 0.18 & 39315 & -5.79\%  & 0.54 & 70137 & -2.56\%  & 1.86 \\
    Tetris  & 23170 & -8.33\%  & 0.21 & 38654 & -4.01\%  & 0.62 & 71296 & -4.25\%  & 2.13 \\
    PEFT    & 22930 & -7.20\%  & 0.18 & 39881 & -7.31\%  & 0.46 & 69831 & -2.11\%  & 1.43 \\
    IPPTS   & 21389 & 0.00\%   & 0.11 & 37164 & 0.00\%   & 0.34 & 68387 & 0.00\%   & 1.33 \\
    \midrule
    PPO-BiHyb              & 19788 & 7.49\%  & 33.07  & 35085 & 5.59\%  & 59.48  & 66436 & 2.85\%  & 179.14 \\
    One-Shot-S(256)        & 20399  & 4.63\%  & 2.26 & 35561  & 4.31\%  & 4.16 & 66173  & 3.24\%  & 9.85 \\
    ScheduleNet-S(256)     & 21474   & -0.40\% & 2.02 & 36607   & 1.50\%  & 4.04 & 69644  & -1.84\% & 9.43 \\
    HGN-Two-Stage-S(256)   & 21068  & 1.50\%  & 2.08 & 37068  & 0.26\%  & 4.11 & 69670  & -1.88\% & 9.05 \\
    \midrule
    WeCAN-Greedy           & 19578 & 8.47\%  & 0.15  & 33428 & 10.05\% & 0.50  & 62587 & 8.48\%  & 1.72 \\
    WeCAN-S(64)            & 19053  & 10.92\% & 1.54 & 32912  & 11.44\% & 2.86 & 61662  & 9.83\%  & 5.26 \\
    WeCAN-S(256)           & \textbf{18964 } & \textbf{11.34\%} & 2.43 & \textbf{32814 } & \textbf{11.70\%} & 4.39 & \textbf{61373 } & \textbf{10.26\%} & 10.43 \\
    \bottomrule
  \end{tabular}
\end{table*}
\begin{table*}[htbp]
\caption{Experimental results on computation graph datasets with 500 tasks.}
\label{table:ali-main}
\centering
\setlength{\tabcolsep}{3.6pt}
\begin{tabular}{lccccccccc}
\toprule
& \multicolumn{3}{c}{Erd\H{o}s-R\'{e}nyi} & \multicolumn{3}{c}{Layer Graphs} & \multicolumn{3}{c}{Stochastic Block} \\
& Makespan & Improvement & Time & Makespan & Improvement & Time & Makespan & Improvement & Time \\
\midrule
SFT     & 13317 & -19.99\% & 0.81 & 16158 & -30.04\% & 0.34 & 14408 & -27.96\% & 0.53 \\
MOPNR   & 12771 & -15.07\% & 1.07 & 14714 & -18.42\% & 0.38 & 13148 & -16.77\% & 0.68 \\
Tetris  & 13084 & -17.90\% & 0.52 & 14271 & -14.86\% & 0.44 & 13666 & -21.37\% & 0.64 \\
CP      & 12457 & -12.25\% & 1.08 & 14797 & -19.09\% & 0.40 & 13388 & -18.90\% & 0.74 \\
PEFT    & 11311 & -1.92\%  & 0.45 & 12425 & 0.00\%   & 0.71 & 11438 & -1.58\%  & 0.56 \\
IPPTS   & 11679 & -5.24\%  & 0.57 & 16710 & -34.49\% & 0.64 & 12490 & -10.92\% & 0.49 \\
HEFT    & 11098 & 0.00\%   & 0.55 & 12428 & -0.02\%  & 0.75 & 11260 & 0.00\%   & 0.57 \\
\midrule
PPO-BiHyb            & 10795 & 2.73\%  & 65.51 & 11883 & 4.36\%  & 73.7 & 10885 & 3.33\%  & 73.7 \\
One-Shot-S(256)      & 11071  & 0.24\%  & 4.45 & 12277  & 1.19\%  & 3.83 & 11377  & -1.04\% & 4.00 \\
ScheduleNet-S(256)   & 11355  & -2.32\% & 4.16 & 11923  & 4.04\%  & 4.04 & 11076  & 1.63\%  & 4.06 \\
HGN-Two-Stage-S(256) & 11434  & -3.03\% & 4.05 & 12316  & 0.88\%  & 3.91 & 11712  & -4.01\% & 3.95 \\
\midrule
WeCAN-Greedy         & 10270 & 7.46\%  & 0.57 & 11173 & 10.08\% & 0.26 & 10539 & 6.40\%  & 0.41 \\
WeCAN-S(64)          & 10115  & 8.86\%  & 3.21 & 10862  & 12.58\% & 3.07 & 10074  & 10.53\% & 3.06 \\
WeCAN-S(256)         & \textbf{10083 } & \textbf{9.15\%} & 4.94 & \textbf{10752 } & \textbf{13.46\%} & 4.30 & \textbf{10019 } & \textbf{11.02\%} & 4.58 \\
\bottomrule
\end{tabular}
\vspace{-5pt}
\end{table*}
\begin{table*}[!t] 
    \caption{Full results on RI-W datasets.}
    \label{table-heavy-full}
    \centering
    {
        \begin{tabular}{lcccccc}
        \toprule
         & \multicolumn{3}{c}{RI-W-30} & \multicolumn{3}{c}{RI-W-50} \\
         & Makespan & Improvement & Time & Makespan & Improvement & Time \\
        \midrule
        Tetris & 29324 & -14.11\% & 0.20 & 44296 & -16.46\% & 0.56\\
        CP & 28144 & -9.52\% & 0.26 & 41378 & -8.79\% & 0.75\\
        \midrule
        HEFT & 26856 & -4.51\% & 0.18 & 39825 & -4.71\% & 0.52 \\
        PEFT & 28260 & -9.97\% & 0.13 & 39928 & -4.98\% & 0.48 \\
        IPPTS & 25697 & 0.00\% & 0.21 & 38034 & 0.00\% & 0.51 \\
        \midrule
        PPO-BiHyb & 24383 & 5.11\% & 33.74 & 36958 & 2.83\% & 55.47 \\
        OneShot-S(256) & 27425  & -6.72\%  & 2.12 & 39826  & -4.71\%  & 4.02 \\
        \midrule
        WeCAN-S(256) & \textbf{24099 } & \textbf{6.22\% } & 2.50 & \textbf{35695 } & \textbf{6.15\% } & 4.72 \\
        \bottomrule
        \end{tabular}
    }
\end{table*}

\section{Numerical Experiments}\label{sec:experiments}

\subsection{Experimental Setup}
All experiments are conducted on an internal cluster with NVIDIA Tesla A100 (80 GB), 4$\times$Intel Xeon Silver 4310 \@ 2.10GHz (12 cores) and 192GB memory.

\textbf{Datasets.} We evaluate the proposed method on three datasets that span representative application settings in DAG scheduling, including query plans, computation graphs and resource-intensive workloads. 
i) \textit{TPC-H dataset:} a  dataset that comprises  real-world DAG tasks derived from industrial queries, reflecting DAG scheduling workloads in data-intensive query processing systems. We use the version curated by \cite{Wang21nips} and augment it with randomly generated memory demands and additional task types, where each type is associated with a set of compatibility coefficients. The problems in TPC-H-30, TPC-H-50, and TPC-H-100 contain 275, 459, and 918 tasks on average.
 ii) \textit{computation graph dataset:} a synthetic dataset generated using approaches from \cite{jeon2023iclr}, comprising neural-network computation graphs arising in ML compilers and runtime optimization, including layer graphs, Erdos-Rényi graphs, and stochastic block model graphs. Each problem contains 500 tasks. iii) \textit{resource-intensive workload (RI-W) dataset:} a dataset designed to reflect resource-intensive scheduling scenarios, where a small fraction of ``heavy tasks'' have high resource demand and long processing time. This scenario is prevalent in contemporary cloud computing platforms that provide heterogeneous resources for diverse user requirements. In such environments, a portion of resource-intensive tasks, such as large model training jobs, may coexist with standard computational workloads. 
 For the main benchmark settings, tasks are scheduled on three heterogeneous resource pools. 
 The heterogeneity substantially increases the problem size. Additional dataset details and size statistics are provided in Appendix \ref{apx:dataset}.

\textbf{Baselines.}
We compare our method with the following baselines. i) \textit{Heuristic baselines:} list scheduling algorithms, including critical path (CP), shortest first task (SFT), and most operations remaining (MOPNR); Tetris \cite{tetris}, a dynamic list scheduling heuristic for multi-resource pool scheduling; HEFT \cite{heft}, PEFT~\cite{PEFT} and IPPTS~\cite{IPPTS}, heterogeneous scheduling algorithms; For the first four list scheduling algorithms, we apply three pool-selection rules and select the one with the best makespan. ii) \textit{Neural baselines:} PPO-BiHyb \cite{Wang21nips}, a bi-level neural scheduler with beam search that involves multi-round neural network processing, and One-Shot \cite{jeon2023iclr}, a one-shot neural scheduler that generates schedules sequentially based on list scheduling; these two methods are among the strongest existing learning-based baselines for DAG scheduling. We also include ScheduleNet \cite{Wang2022SchNet} and HGN-Two-Stage \cite{Song2023TwoStage}, two HGNN-based methods for heterogeneous graph learning. Since ScheduleNet and HGN-Two-Stage are not designed for DAG scheduling, we adapt them to our setting. The implementation details for these baselines are available in Appendix \ref{apx:baselines}.    

Our method is evaluated in two modes: greedy, which selects actions with the highest probability $p_\theta$ and sampling (S($n$)), which generates $n$ samples based on $p_\theta$ within our accelerated environment and selects the schedule with the minimum makespan. We report the makespan, running time or relative improvement over the best heuristic baseline, with further experimental details provided in Appendix \ref{apx:add_results} and \ref{apx:implementation_details}.

\subsection{Main Benchmark Results}
 On the TPC-H dataset, WeCAN demonstrates up to 11.7\% makespan improvement over the best heuristic and 7.3\% over the best neural baseline, with superior performance in instances with 300$\sim$500 nodes and robust results for 1,000 nodes (see Table~\ref{table:tpch-main}). Additionally, our results show that WeCAN excels at learning robust scheduling policies in complex heterogeneous environments, as evidenced by its performance across diverse TPC-H instances. Owing to single-pass neural processing, WeCAN-greedy achieves lower running time than PPO-BiHyb and comparable running time to One-Shot-greedy, ScheduleNet, HGN-Two-Stage and heuristic baselines, while delivering superior makespan. This similarity in running time arises because, in heterogeneous environments, the generation map's runtime dominates for both WeCAN and One-Shot, approaching the minimum time required to generate a schedule. 
 
 On the computation graph dataset, WeCAN demonstrates up to 13.5\% makespan improvement over the best heuristic and 9.5\% over the best neural baseline, with superior performance across different types of graphs (see Table~\ref{table:ali-main}). Given the prevalence of heterogeneous resource environments in ML compilers, our results demonstrate WeCAN's applicability for efficient scheduling in neural network compilation. 
 
 On the RI-W dataset, WeCAN achieves the best makespan among all compared methods on both RI-W-30 and RI-W-50. It improves the makespan by up to 6.2\% over the best heuristic baseline. Compared with learning-based baselines, WeCAN achieves up to 3.3\% improvement over PPO-BiHyb while requiring more than 10$\times$ less inference time, and up to 10.9\% improvement over OneShot with comparable inference time. These results demonstrate that WeCAN is effective for resource-intensive workloads, especially when strong resource contention makes scheduling decisions more sensitive.

\subsection{Ablation on Skip Design}\label{subsec:ablation-skip}
\begin{figure*}
  \centering
  \includegraphics[width=0.95\linewidth]{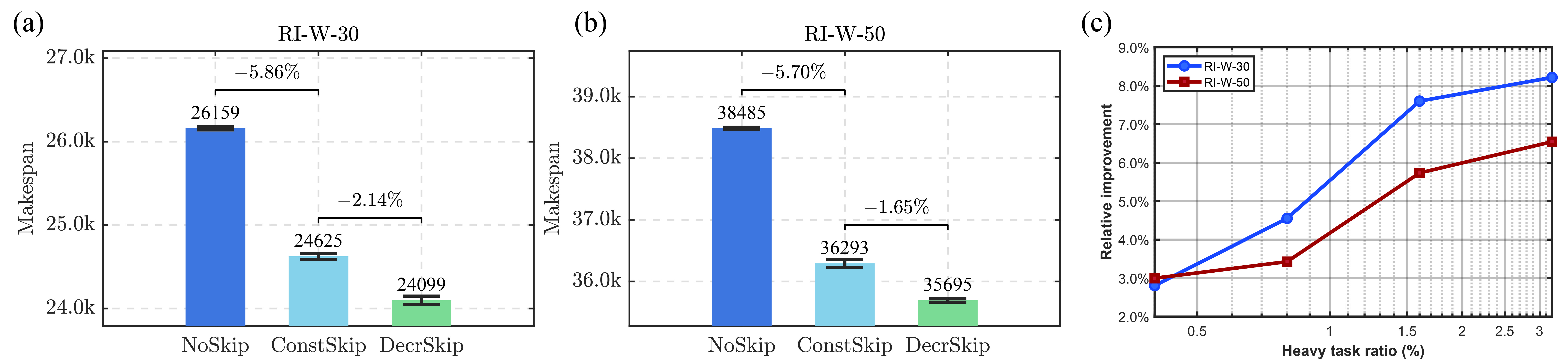}
  \caption{Ablation results on the skip design. (a-b) Results of different skip action score formulations on RI-W-30/50 datasets. ``NoSkip'', ``ConstSkip'' and ``DecrSkip'' correspond to the variants without skip action, with constant-score skip action and with our decreasing-score skip action, respectively. Error bars denote standard deviation. (c) Relative improvement from enabling skip under different heavy-task ratios. }
  \label{fig:skip-abl} 
\end{figure*}

 Across the main benchmark results, timeline-insertion or SGS-like heuristics such as HEFT and IPPTS often achieve smaller makespan than immediate list-scheduling heuristics. This observation is consistent with our order-space analysis that immediate list scheduling can be restrictive under cumulative resource constraints. Motivated by this phenomenon, we conduct ablation experiments on the skip formulation using RI-W-30 and RI-W-50, where resource-intensive tasks amplify such restrictions. The results are presented in  Fig.~\ref{fig:skip-abl} (a-b). WeCAN with a decreasing skip score achieves lower makespan than its non-skip variant, its constant-skip variant, and all competing baselines. These findings verify the effectiveness of the skip action and the necessity of its decreasing formulation.

A non-decreasing skip score does not guarantee order coverage. When $u_{\text{skip}}(k)$ is non-decreasing, once some ready task has a score below $u_{\text{skip}}(k)$, it can remain suppressed until the skip action is masked, which systematically induces unnecessary idling and may exclude desirable orders. The counterexample instance $\mathcal P_0$ (Fig.~\ref{fig:counterex}(a)) illustrates this failure mode. Even though a skip is introduced to enable the optimal schedule, a non-decreasing skip score can still force additional idle periods and yield a worse schedule (Fig.~\ref{fig:counter-fixskip}), showing that the extended image may still miss the optimum and thus the optimality gap is not consistently eliminated.

To further validate the skip design across varying workload intensities, we study the performance of skip action in RI-W datasets with different ratios of heavy tasks. The results in Fig. \ref{fig:skip-abl} (c) reveal that the skip action benefits more as the percentage of heavy tasks increases. This correlation reveals that the optimality gap is positively correlated with the ratio of heavy tasks. Consequently, skip-extended scheduling is especially important when the ratio of heavy tasks is notably large. 

\begin{figure}[htbp]
    \centering
    \includegraphics[width=0.95\linewidth]{figures/pic-fix-skip.jpg}
    \caption{For the counterexample instance $P_0$, a constant-score skip extension still misses the optimum and cannot consistently eliminate the optimality gap.}
    \label{fig:counter-fixskip}
\end{figure}

\begin{table}[t]
\caption{Ablation study results on TPC-H datasets with different architectures.}
\label{table:components}
\centering
\footnotesize
\setlength{\tabcolsep}{1.2pt}
\renewcommand{\arraystretch}{0.95}
\begin{tabular}{@{}>{\raggedright\arraybackslash}p{0.35\columnwidth}rrrr@{}}
\toprule
& \multicolumn{2}{c}{TPC-H-30} & \multicolumn{2}{c}{TPC-H-50} \\
\cmidrule(lr){2-3}\cmidrule(l){4-5}
Architecture
& \multicolumn{1}{c}{Makespan}
& \multicolumn{1}{c}{Improvement}
& \multicolumn{1}{c}{Makespan}
& \multicolumn{1}{c}{Improvement} \\
\midrule
IPPTS & 21389 & 0.0\% & 37164 & 0.0\% \\
WeCA + LDDGNN & \textbf{19908} & \textbf{6.92\%} & \textbf{34260} & \textbf{7.81\%} \\
WeCA-inside + LDDGNN & 20729 & 3.09\% & 34980 & 5.88\% \\
WeCA-alt + LDDGNN & 20234 & 5.40\% & 34815 & 6.32\% \\
WeCA-inside-alt + LDDGNN & 21981 & -2.77\% & 36984 & 0.48\% \\
No-WeCA + LDDGNN & 23066 & -7.84\% & 40308 & -8.46\% \\
WeCA + GAT (forward) & 20747 & 3.00\% & 35224 & 5.22\% \\
WeCA + GAT (bidirection) & 20873 & 2.41\% & 35177 & 5.35\% \\
\bottomrule
\end{tabular}
\end{table}
    
\subsection{Ablation on WeCAN Architecture}
We conduct ablation experiments to assess the contributions of the WeCA layers and the LDDGNN module in WeCAN. For WeCA layers, we test the inside variant (WeCA-inside), removing the initial task-only WeCA layer while keeping the alternating updates (WeCA-alt), its inside counterpart (WeCA-inside-alt), and removing all WeCA layers (No-WeCA). For LDDGNN, we compare a standard GAT (GAT forward) and a bidirectional GAT (GAT bidirection). All network variants share the same layer count and hidden dimensions, with fewer WeCA layers offset by additional LDDGNN layers. We train and test each model on TPC-H-30 and TPC-H-50. For each of 10 test problems, we generate 256 samples, computing mean makespan and relative improvement to the best heuristic, IPPTS. Results in Table~\ref{table:components} show that modifying or removing either component increases the makespan. In particular, removing WeCA layers leads to a clear performance degradation, which highlights the important role of WeCA in modeling task and pool interactions. Both GAT variants yield higher makespan than LDDGNN, demonstrating the superiority of LDDGNN.

\subsection{Learning under Different Generation Maps}

\begin{figure*}[htbp]
    \centering
    \includegraphics[width=0.8\textwidth]{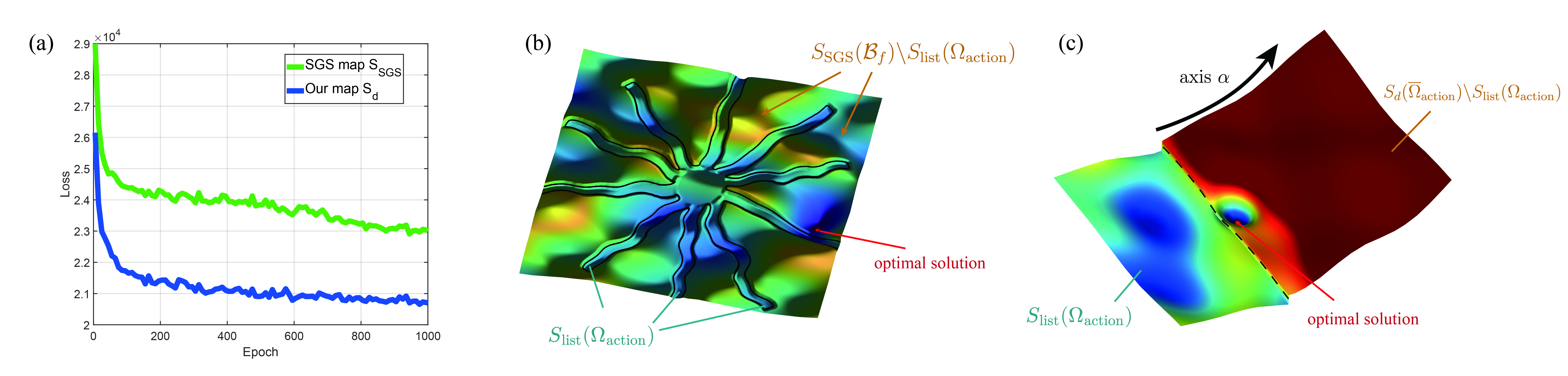}
    \caption{Training results with different generation maps and their explanation by toy illustration. (a) Training loss on the TPC-H-30 dataset, serial schedule generation scheme $S_{\text{SGS}}$ exhibits significantly slower convergence and inferior performance compared to our generation map $S_d$. (b) The toy illustration for landscape by $S_{\text{SGS}}$. The flat basins inside black lines represent the image of list scheduling. The extension introduces extra regions containing the optimal solution and the numerous extra mountains corresponding to worse solutions. This irregular mixed structure slows down the convergence. (c) The toy illustration for landscape by $S_d$. The extended spaces brings worse solutions but the worst solutions are mostly far away in the axis of $\alpha$ while the optimal solutions are usually near the boarder line. This regular structure stabilizes and accelerates the convergence.}
    \label{fig:learning-with-generation-map}
    \vspace{-5pt}
\end{figure*}
Training curves under the serial SGS $S_{\text{SGS}}$ and our generation map $S_d$ on the TPC-H-30 benchmark are depicted in Fig.~\ref{fig:learning-with-generation-map} (a). It can be seen that  $S_{\text{SGS}}$ converges noticeably slower and reaches a worse final loss level. 

\textit{Why $S_{\text{SGS}}$ slows down learning}.
Replacing list scheduling $S_{\text{list}}$ with  $S_{\text{SGS}}$ enlarges the image by admitting more feasible schedules, but this expansion is too unconstrained for learning. To preserve the schedule order, $S_{\text{SGS}}$ permits arbitrary idle time before launching the next task when resources are not yet sufficient. As a result, many inferior schedules are introduced across fibers in an irregular manner (Fig.~\ref{fig:learning-with-generation-map} (b)), which makes policy optimization more difficult and slows down convergence.

\textit{Why $S_d$ is a milder extension.}
In contrast, the poor schedules introduced by $S_d$ predominantly arise from excessive skips, which correspond to large $(\alpha,\beta)$ in the skip-score formula. Hence, the undesirable region is more structured and concentrated, rather than being scattered throughout the feasible order space as in $S_{\text{SGS}}$. The toy landscape illustration in Fig.~\ref{fig:learning-with-generation-map} (c) visualizes this distinction. $S_d$ enlarges the image while keeping the worst solutions far away along the skip-parameter axes, so learning can focus on the reachable set of $S_{\text{list}}$ and its nearby boundary where performance improvements are most likely to occur. This design retains the ability to include optimal schedules while avoiding a substantial slowdown in training.

\subsection{Generalization Results}

\begin{figure*}[htbp]
  \centering
  \includegraphics[width=0.95\linewidth]{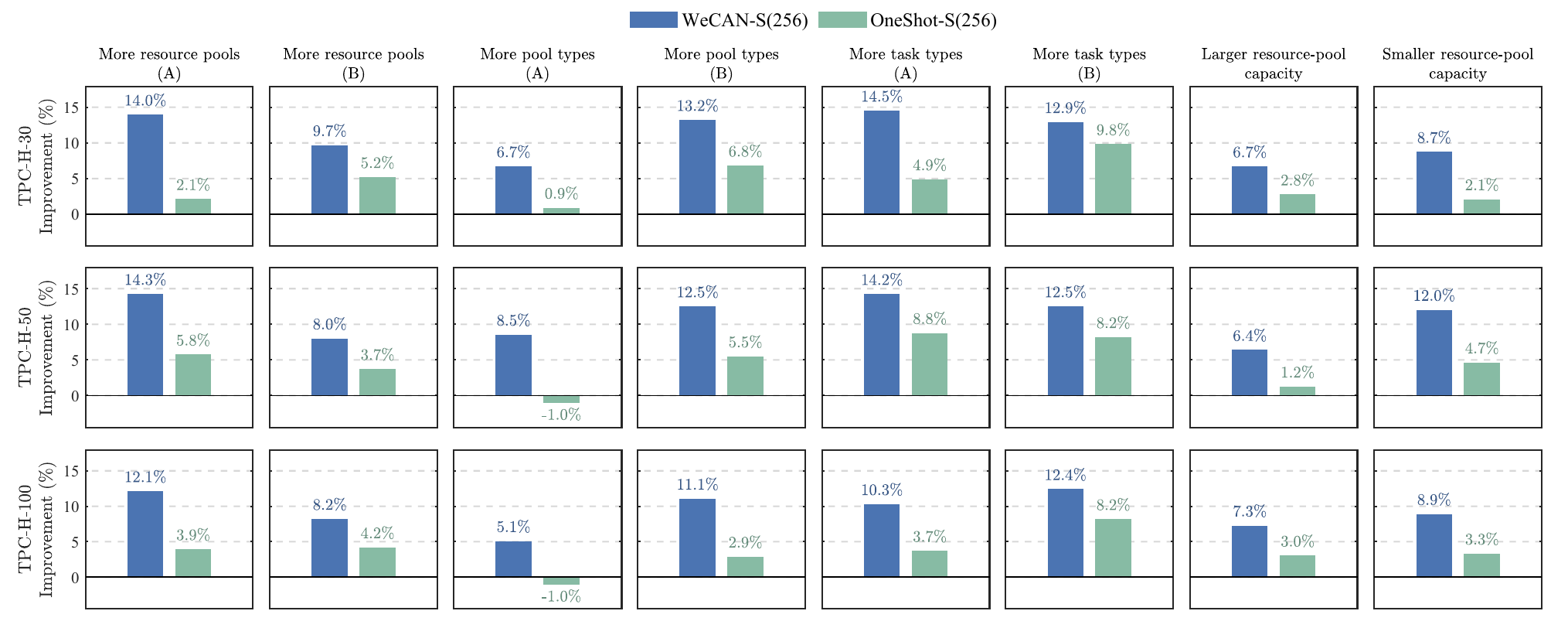}
  \caption{Evaluations of models on TPC-H with different environment fluctuations under fixed training conditions. The percent of improvement over best heuristics are labeled.}
  \label{fig:gen}  
\end{figure*}

To evaluate the cross-size generalization ability of WeCAN to large-scale problems, we present the experimental results on TPC-H-150 ($\approx$ 1500 tasks) and TPC-H-200 ($\approx$ 2000 tasks) in Table~\ref{table:large}. The model is trained on TPC-H-30 with the original 3 pool environment.  Our results show that our WeCAN still outperforms the best heuristics by about $7\%-10\%$ and the best learning methods by about $5\%$, while remaining comparable running time in greedy mode. These results indicate that WeCAN scales favorably to large scale problems.

Furthermore, we conduct experiments in varying resource environments to evaluate generalization from a fixed training environment. Fig.~\ref{fig:gen} shows that our WeCAN remains robust under varying fluctuations in the number and capacities of resource pools as well as the pool and task types that dictate the compatibility coefficients. WeCAN consistently outperforms OneShot and heuristic baselines across all evaluation scenarios. This validates that WeCAN effectively utilizes the environment feature while remaining robust and scalable for environment sizes. Details of the result can be found in Appendix \ref{apx:add_results}.

\begin{table*}[htbp]
\caption{Full generalization results on large-scale TPC-H datasets}
\label{table:large}
\setlength{\tabcolsep}{2.8pt}
\centering
\small
\resizebox{\textwidth}{!}{%
\begin{tabular}{l*{12}{c}}
\toprule
 & \multicolumn{6}{c}{3 pools} & \multicolumn{6}{c}{12 pools} \\
\cmidrule(lr){2-7}\cmidrule(lr){8-13}
 & \multicolumn{3}{c}{TPC-H-150} & \multicolumn{3}{c}{TPC-H-200}
 & \multicolumn{3}{c}{TPC-H-150} & \multicolumn{3}{c}{TPC-H-200} \\
\cmidrule(lr){2-4}\cmidrule(lr){5-7}\cmidrule(lr){8-10}\cmidrule(lr){11-13}
 & Makespan & Improvement & Time
 & Makespan & Improvement & Time
 & Makespan & Improvement & Time
 & Makespan & Improvement & Time \\
\midrule
Tetris & 109079 & -2.54\% & 3.65 & 140438 & -2.52\% & 6.09 & 30552 & -16.54\% & 4.80 & 37737 & -11.18\% & 8.51 \\
HEFT & 108285 & -1.80\% & 3.26 & 139094 & -1.53\% & 4.98 & 27415 & -4.58\% & 3.11 & 34834 & -2.63\% & 5.21 \\
PEFT & 108490 & -1.99\% & 3.05 & 140154 & -2.31\% & 5.04 & 27943 & -6.59\% & 3.60 & 35539 & -4.71\% & 4.98 \\
IPPTS & 106372 & 0.00\% & 2.74 & 136992 & 0.00\% & 4.80 & 26215 & 0.00\% & 5.75 & 33942 & 0.00\% & 4.88 \\
PPO-BiHyb & 102770 & 3.39\% & 447.15 & 134341 & 1.94\% & 740.21 & 25607 & 2.32\% & 465.01 & 33358 & 1.72\% & 700.26 \\
One-Shot-S(256) & 101690  & 4.40 \%  & 15.10 & 131941  & 3.69 \%  & 23.50 & 25679  & 2.04 \%  & 16.83 & 33016 & 2.73 \% & 26.15 \\
WeCAN-Greedy & 96225 & 9.54\% & 2.65 & 124345 & 9.23\% & 4.97 & 24761 & 5.55\% & 3.48 & 31864 & 6.12\% & 6.09 \\
WeCAN-S(256) & \textbf{95491 } & \textbf{10.23 \%} & 19.10 & \textbf{124140 } & \textbf{9.38 \%} & 29.70 & \textbf{24338 } & \textbf{7.16 \%} & \textbf{18.36} & \textbf{31422 } & \textbf{7.42 \%} & \textbf{30.40} \\
\bottomrule
\end{tabular}%
}
\end{table*}
\section{Conclusion}
In this paper, we presented WeCAN, an end-to-end reinforcement learning framework for heterogeneous DAG scheduling with task-pool compatibility coefficients under the single-pass inference regime. 
We also developed an order-space analysis that characterizes the restricted reachable set induced by a generation map and formalizes the resulting generation-induced optimality gap. The analysis further provides explicit sufficient conditions for gap elimination. 
Guided by this principle, WeCAN combines a size-agnostic weighted cross-attention encoder modeling compatibility-gated task-pool interactions, and a skip-extended generation map using an analytically parameterized decreasing skip rule. The generation map improves order reachability while preserving single-pass efficiency without repeated network calls.
Empirical results on TPC-H query settings, RI-W workload, and ML-compiler computation graphs demonstrate that WeCAN improves makespan over baselines with inference time comparable to classical heuristics.
An interesting direction for future work is to extend the same order-space analysis and single-pass realization principles to more complex scenarios, such as communication- or data-movement-aware scheduling, stochastic processing times, or dynamic workload arrivals.

{\footnotesize
\bibliographystyle{IEEEtran}
\bibliography{reference}
}

\newpage

\clearpage
\onecolumn

\appendices
\renewcommand{\thesubsection}{\thesection.\arabic{subsection}}
\renewcommand{\thesubsectiondis}{\thesubsection}

\renewcommand{\thesubsubsection}{\thesubsection.\arabic{subsubsection}}
\renewcommand{\thesubsubsectiondis}{\thesubsubsection}



\section{Supplement theoretical results}\label{apx:theoritical results}

\setcounter{figure}{7}
\setcounter{table}{5}
\setcounter{theorem}{1}
\setcounter{proposition}{5}
\setcounter{equation}{8}
\setcounter{algorithm}{2}

\subsection{MILP formulations of the scheduling problem}

A heterogeneous scheduling problem $\mathcal P=(V,E,\mathcal C,t,\rho,\lambda,K_{acc})$ can be formulated as
\begin{equation}
\label{eq:he-prob-concise}
\begin{aligned}
\min_{x=(s(\cdot), c(\cdot))}\quad&
f(x)=\max_{v\in V}\bigl[ s(v)+t_{\mathrm{act}}(v, c(v))\bigr],\\
\mathrm{s.t.}\qquad&
 s(v)+t_{\mathrm{act}}(v, c(v))\le  s(w),\quad \forall (v,w)\in E,\\
&
\sum_{v\in F(\tau,c)} \rho(v)\le \lambda(c),\quad \forall \tau\ge 0,\ \forall c\in\mathcal C,\\
&
K_{acc}(v, c(v))>0,\quad \forall v\in V,\\
&
 s(v)\ge 0,\quad \forall v\in V.
\end{aligned}
\end{equation}
Here, $F(\tau,c)$ is defined as 
\begin{equation*}
    F(\tau,c):=\bigl\{v\in V\mid  c(v)=c,\, s(v)\leq\tau<t_{\mathrm{act}}(v, c(v))\bigr\}.
\end{equation*}

For the homogeneous scheduling problem $(V,E,t,\rho,\lambda)$ with $r$ types of resources, where $K\equiv 1$ and $|C|=1$, the third constraint is naturally satisfied. $V$ is the task set with $|V|=n$ and $E$ is the edge set. Let constant $t_i$ be the processing time of task $i$, $\rho_l^i$ be the $l$-th type resource demand of task $i$, and $\lambda_l$ be the capacity of the $l$-th type resource. Let constant $C_1^{\text{up}}$ be a sufficiently large number and can be set to $\sum_{i\in V} t_i$. Let variable $s_i$ denote the start time of task $i$, variable $t_{max}$ denote the makespan. Let variable $u_{ij}$ equal $0$ if and only if task $i$ starts before task $j$ starts, variable $w_{ij}$ equal $0$ if and only if task $i$ starts after or at the same time as task $j$ finishes, variable $x_{ij}$ equal $1$ if and only if task $j$ is running when task $i$ starts. The corresponding mixed integer linear programming (MILP) formulation is given by (\ref{eq:ho-milp}).

\begin{equation}
\label{eq:ho-milp}
    \begin{aligned}
    \min\;& t_{max}\\
    \st\; &  s_i+t_i\le t_{max}, \forall i \in V,\\
         & s_i+t_i\le s_j, \forall (i,j) \in E,\\
         & s_i < s_j+C_1^{\text{up}}u_{ij}, \forall i,j \in V,\\
         &s_j \le s_i+C_1^{\text{up}}(1-u_{ij}), \forall i,j\in V,\\
         & s_i < s_j +t_j + C_1^{\text{up}}(1-w_{ij}), \forall i,j\in V,\\
         & s_j+t_j \le s_i + C_1^{\text{up}}w_{ij}, \forall i,j\in V,\\
         & u_{ij}+w_{ij}\le 1 + x_{ij}, \forall i,j\in V, \\
         & u_{ij}+w_{ij}\ge  2x_{ij}, \forall i,j\in V,\\
         & \rho_l^i+ \sum_{j\ne i} x_{ij} \rho_l^j \le \lambda_l, \forall i \in V,l \in \{1,...,r\},\\
         & s_i\ge 0, t_{max}\ge 0,\, u_{ij},w_{ij},x_{ij} \in \{0,1\}, \forall i,j\in V.
    \end{aligned}
\end{equation}

For the heterogeneous scheduling problem $\mathcal P = (V,E,\mathcal C,t,\rho,\lambda, K_{acc})$ with $r$ types of resources, where $\mathcal C$ is the pool set and $K_{acc}$ is the compatibility coefficient matrix. Let variables $s_i, t_{max}, u_{ij}, w_{ij}, x_{ij}$ and constants $t_i, \rho_l^i$ share the same meaning as in the homogeneous case. Let variable $v_i^k$ equal $1$ if and only if task $i$ is assigned to pool $k$, variable $y_{ijk}$ equal $1$ if and only if task $j$ is running on pool $k$ when task $i$ starts. Let constant $C_0^{\text{up}}$ be a sufficiently large bound and can be set to $2\max\{1/K_{acc}(i,k): K_{acc}(i,k) \ne 0,\,\forall i,k\}$. Let constant $a_{ik} = \min\{1/K_{acc}(i,k), C_0^{\text{up}}\}$, constant $C_1^{\text{up}}$ be a sufficiently large number and can be set to $\sum_{i\in V, k\in C}t_ia_{ik} + \sum_{i\in V, l =1...,r}\rho_l^i$. Let constant $\lambda_l^k$ be the $l$-th type resource capacity of pool $k$. The MILP formulation of the heterogeneous scheduling problem is formulated by (\ref{eq:he-milp}). The MILP formulation for heterogeneous scheduling involves substantially more intricate constraints and a greater number of variables and constraints compared to the homogeneous case, significantly increasing the difficulty of solving the problem.
\begin{equation}
\label{eq:he-milp}
    \begin{aligned}
    \min\;& t_{max},\\
    \st\; & s_i+t_i\sum_{k} v_i^k a_{ik} \le t_{max}, \forall i\in V,\\
         & s_i+t_i\sum_{k\in C} v_i^k a_{ik}\le s_j, \forall (i,j) \in E,\\
         & s_i < s_j+C_1^{\text{up}}u_{ij}, \forall i,j\in V,\\
         &s_j \le s_i+C_1^{\text{up}}(1-u_{ij}), \forall i,j\in V,\\
         & s_i < s_j +t_j\sum_{k\in C} v_j^k a_{jk} + C_1^{\text{up}}(1-w_{ij}), \forall i,j\in V,\\
         & s_j+t_j\sum_{k\in C} v_j^k a_{jk} \le s_i + C_1^{\text{up}}w_{ij}, \forall i,j\in V,\\
         & u_{ij}+w_{ij}\le 1 + x_{ij}, \forall i,j\in V, \\
         & u_{ij}+w_{ij}\ge  2x_{ij}, \forall i,j\in V,\\
         & x_{ij}+v_j^k\ge 2y_{ijk}, \forall i,j\in V,k\in C, \\
         & x_{ij}+v_j^k\le 1+y_{ijk}, \forall i,j\in V,k\in C,\\
         & \sum_k v_i^k = 1, \forall i\in V,k\in C,\\ 
         & 1 - v_i^k + C_0^{\text{up}} - a_{ik} > 0,\, \forall i\in V,k\in C,\\
         & \rho_l^i+ \sum_{j\ne i} y_{ijk} \rho_l^j \le C_1^{\text{up}}(1-v_i^k) +\lambda_l^k, \forall i\in V,k\in C,l\in\{1,...,r\},\\
         & s_i\ge 0 ,t_{max}\ge 0,\, u_{ij},w_{ij},x_{ij},y_{ijk},v_i^k \in \{0,1\}, \forall i,j\in V,k\in C.
    \end{aligned}
\end{equation}

 For a given solution $z = \{s_i, t_{max}, u_{ij},w_{ij},x_{ij},y_{ijk},v_i^k\}_{i,j,k}$, we define \begin{equation*}
     \phi(z) = x,\, \text{where } x(i) = (s(i), c(i)) =(s_i,\{k:v_i^k = 1\}).
 \end{equation*} 
 By construction, the map $\phi$ takes any feasible MILP solution as input and outputs a schedule in the schedule space $\mathcal X$ (defined in the main text). Due to the definition of the variables, we immediately have that $\phi$ builds a one-to-one correspondence between the space of all feasible MILP solutions and $\mathcal X$, which further reveals that the MILP formulation \eqref{eq:he-milp} is equivalent to the problem \eqref{eq:he-prob-concise}.
\begin{proposition}
    The map $\phi$ is a bijection from the space of all feasible MILP solutions to $\mathcal X$.
\end{proposition}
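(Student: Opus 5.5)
The plan is to show that $\phi$ is well defined into $\mathcal X$, injective, and surjective. Throughout, a feasible MILP solution $z$ is treated as determined by its continuous part $(s_i,t_{max})$ and assignment variables $v_i^k$. The auxiliary binaries $u_{ij},w_{ij},x_{ij},y_{ijk}$ should be forced by the big-$M$ constraints. One caveat: $t_{max}$ is only bounded below by the completion times, so it is not forced. For bijectivity I would therefore identify solutions with the same $(s,v)$, or equivalently work with optimal $t_{max}=\max_i(s_i+t_i\sum_k v_i^k a_{ik})$. The proof has to state this convention explicitly.

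\emph{Well-definedness.} Take a feasible $z$. The constraint $\sum_k v_i^k=1$ makes $c(i)$ a single pool. The constraint $1-v_i^k+C_0^{\text{up}}-a_{ik}>0$ with $a_{ik}=C_0^{\text{up}}$ whenever $K_{acc}(i,k)=0$ forbids $v_i^k=1$ on incompatible pools. Hence $t_i\sum_k v_i^k a_{ik}=t_{\mathrm{act}}(i,c(i))$, which gives the precedence constraints and $s_i\ge0$.

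For the resource constraint, I would first decode the indicators:
\begin{itemize}
\item the pair of $u$-constraints forces $u_{ij}=0$ iff $s_i<s_j$;
\item the pair of $w$-constraints forces $w_{ij}=0$ iff $s_i\ge s_j+t_{\mathrm{act}}(j)$;
\item the constraints $u_{ij}+w_{ij}\le 1+x_{ij}$ and $u_{ij}+w_{ij}\ge 2x_{ij}$ force $x_{ij}=u_{ij}w_{ij}$, so $x_{ij}=1$ iff $s_j\le s_i<s_j+t_{\mathrm{act}}(j)$;
\item the $y$-constraints give $y_{ijk}=x_{ij}v_j^k$.
\end{itemize}
Then the capacity constraint, which is active only for $k=c(i)$, says that at each start time $s_i$ the load on pool $c(i)$ is at most $\lambda(c(i))$. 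The load on a pool is piecewise constant and right-continuous, and it can only increase at start times of tasks on that pool. Checking it at those start times therefore suffices for all $\tau$. This gives $\phi(z)\in\mathcal X$.

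\emph{Surjectivity.} Given $x\in\mathcal X$, set $s_i=s(i)$, $v_i^k=\mathbb 1[c(i)=k]$, and $t_{max}=f(x)$. Define $u,w,x,y$ by the indicator meanings above. Each big-$M$ constraint then holds because $C_1^{\text{up}}$ dominates every relevant difference of times. This needs the unstated assumption that start times are bounded by $C_1^{\text{up}}$, which holds WLOG for schedules without gratuitous idle time; otherwise $C_1^{\text{up}}$ must be read as instance- and solution-adapted. The capacity constraints reduce to feasibility of $x$ evaluated at $\tau=s_i$.

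\emph{Injectivity.} This follows from the forcing argument: $(s,c)$ determines $v$, and then $u,w,x,y$ are uniquely determined. Together with the $t_{max}$ convention, this shows the map is injective.

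The main obstacle is the strict inequalities combined with the big-$M$ bounds. Showing that $C_1^{\text{up}}$ is truly large enough, and that strictness does not exclude boundary cases such as $s_i=s_j$, requires care. I would handle this case by case on the sign of $s_i-s_j$ and of $s_i-s_j-t_{\mathrm{act}}(j)$.
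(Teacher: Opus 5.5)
This is essentially the paper's argument: its proof consists only of the observation that all remaining variables are determined by $s(\cdot)$ and $c(\cdot)$. Your version is correct and more complete, since it also checks that $\phi(z)\in\mathcal X$ (the capacity check at start times suffices for all $\tau$) and that schedules lift to feasible MILP points.

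Your two caveats are genuine defects that the paper's proof passes over. First, $t_{max}$ is only bounded below, so injectivity needs your normalization. Second, $C_1^{\text{up}}$ cannot dominate arbitrary start-time gaps: if $s_j-s_i\ge C_1^{\text{up}}$ then $u_{ji}$ is forced to $1$ and $s_j<s_i+C_1^{\text{up}}$ fails. Surjectivity must therefore be restricted, e.g.\ to schedules with makespan at most $C_1^{\text{up}}$, a class that contains every schedule without fully idle intervals.
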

\begin{proof}
    Due to the definition of variables, the rest variables can be inferred from $s(\cdot)$ and $c(\cdot)$. Therefore, $\phi$ is bijection and surjection.
\end{proof}

Although the map $\phi$ theoretically establishes a one-to-one correspondence between the solution spaces, the MILP formulation \eqref{eq:he-milp} exhibits significantly higher computational complexity than our formulation \eqref{eq:he-prob-concise}. The MILP approach necessitates the explicit linearization of disjunctive constraints regarding resource capacities and task ordering, which requires introducing numerous auxiliary binary variables. Consequently, the problem size scales quadratically ($O(n^2)$) in terms of both variables and constraints. In contrast, our formulation \eqref{eq:he-prob-concise} naturally characterizes the scheduling problem using only start times and pool assignments, maintaining a linear scale ($O(n)$) in the solution space. To quantitatively illustrate this advantage, we summarize the problem size complexity in Table~\ref{tab:complexity_comp}, which highlights the substantial reduction in dimensionality achieved by our formulation. Problem sizes of these different formulations  on specific benchmarks can be seen in Table~\ref{tab:pb_sz} of Appendix \ref{apx:dataset}.3. 

\begin{table}[htbp]
\centering
\caption{Complexity comparison of problem formulations}
\label{tab:complexity_comp}
\renewcommand{\arraystretch}{1.2}
\begin{tabular}{llcc}
\toprule
Setting & Formulation & Variables & Constraints \\
\midrule
\multirow{2}{*}{Homogeneous} & Problem \eqref{eq:he-prob-concise} & $n$ & $|E| + O(n)$ \\
& MILP \eqref{eq:ho-milp} & $3n^2$ & $6n^2$ \\
\midrule
\multirow{2}{*}{Heterogeneous} & Problem \eqref{eq:he-prob-concise} & $2n$ & $|E| + O(n)$ \\
& MILP \eqref{eq:he-milp} & $(3+n_c)n^2$ & $(6+2n_c)n^2$ \\
\bottomrule
\end{tabular}
\end{table}

\subsection{Details of generation map $S_d$}

As mentioned in Section \ref{sec:generating-feasible-schedule} of the main text, Algorithm 2 induces a skip-extended generation map $S_d$, whose domain is first extended from $\Omega_\text{action}$ by the extended action set $\bar{\mathcal A}$. The domain is defined as follows:

\begin{equation}
\overline{\Omega}_{\text{action}} = \left\{ (a_1, \dots, a_N) \;\middle|\; 
\begin{aligned}
    & a_i \in \bar{\mathcal{A}}, \, \forall i \in \{1, \dots, N\}, \\
    & \#\{i:a_i\in \mathcal A\} = n, \\
    & (a_i=(v_i,\overline c_i), a_j=(v_j,\overline c_j))\Rightarrow v_i\ne v_j,\, \forall i\ne j
\end{aligned}
\right\}.
\end{equation}
The induced map $S_d:\overline\Omega_\text{action}\to \mathcal X$ is formulated as Algorithm \ref{alg:S_d}.

\begin{algorithm}[htb]
   \caption{Induced skip-extended generation map $S_d$}
   \label{alg:S_d}
\begin{algorithmic}[1]
   \STATE {\bfseries Input:} Instance $\mathcal P=(V,E,\mathcal C,\rho,t,\lambda,K_{\mathrm{acc}})$; extended action sequence $(a_1,...,a_N) \in \overline \Omega_\text{action}$.
   \STATE {\bfseries Output:} A feasible schedule $x=( s(\cdot), c(\cdot))$.
   \STATE Initialize current time $t_{\mathrm{cur}}\gets 0$, unscheduled tasks $U\gets V$, unselected action indices $I\gets \{1,...,N\}$ and pool states (available resources and running tasks).
   \WHILE{$U\neq \emptyset$}
      \STATE Compute feasibility masks for $\mathcal A$ according to dependency/resource/compatibility constraints at $t_{\mathrm{cur}}$.
      \IF{All action in $\mathcal A$ are masked}
          \STATE Advance time to the next completion event and release all tasks completing at the new $t_{\text{cur}}$.
          \STATE Update dependency status and pool states.
          \STATE \textbf{continue}
      \ENDIF
      
      \STATE Find the smallest unmasked action indices $t$ in $I$.
      
      \IF{$a_t=(v,c)\in\mathcal A$}
          \STATE Start task $v$ on pool $c$ at time $t_{\mathrm{cur}}$: set $ s(v)\gets t_{\mathrm{cur}}$, $ c(v)\gets c$.
          \STATE Update pool state and remove $v$ from $U$.
      \ELSE
          \STATE Advance time to the next completion event and release all tasks completing at the new $t_{\text{cur}}$.
      \ENDIF
      \STATE Update dependency status and pool states.
      \STATE Remove $t$ from $I$.
   \ENDWHILE
\end{algorithmic}
\end{algorithm}

\subsection{Trials of local search}
\label{apx:role-ls}

Although the serial schedule generation scheme  \( S_{\text{SGS}} \) cannot be  used directly for learning due to it slowing down the training, it enables an alternative two-step approach: sampling and refinement, with the latter realized by local search. In the first step, a sampler generates suboptimal feasible schedules, and in the second step, local search identifies potential optimal schedules in their neighborhoods. This approach mitigates optimality gaps in \( S_{\text{list}} \) by leveraging \( S_{\text{SGS}} \) in local search, aligning with WeCAN's goal of effectively and gap-aware scheduling.

To illustrate this process, we propose a local search strategy using the serial schedule generation scheme \( S_{\text{SGS}} \). For a feasible schedule \( x \in \mathcal X \) from the sampler, the natural projection \( T \) yields its schedule order \( T(x) \in \mathcal B_f \). The neighborhood \( N(T(x)) \subset \mathcal B_f \) comprises feasible schedule orders obtained by inserting a task into another position in a pool. The local search algorithm selects the optimal schedule in \( S_{\text{SGS}}(N(T(x))) \), completing one search step. Theorem \ref{thm3} guarantees the connectivity of \( \mathcal B_f \), ensuring any feasible schedule can reach an optimal schedule through finite task insertions.

\begin{theorem}
\label{thm3}
For any feasible schedule $x\in\mathcal X$, there exists an optimal schedule $x^\star\in\mathcal X$
and a finite sequence of feasible schedules $x^{(0)},x^{(1)},\dots,x^{(m)}\in\mathcal X$
with $x^{(0)}=x$ and $x^{(m)}=x^\star$ such that
\[
T\!\left(x^{(i+1)}\right)\in N\!\left(T\!\left(x^{(i)}\right)\right),\qquad i=0,\dots,m-1.
\]
\end{theorem}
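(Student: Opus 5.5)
The plan is to reduce the statement to a connectivity property of the feasible order space $\mathcal B_f$, and then lift it back to schedules with $S_{\text{SGS}}$.

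\emph{Choice of target.} Let $x_0^\star$ be any optimal schedule and set $w':=T(x_0^\star)\in\mathcal B_f$. Define $x^\star:=S_{\text{SGS}}(w')$. By Proposition~\ref{pro:SGS-oco}, $T(x^\star)=w'$ and $f(x^\star)\le f(x_0^\star)$, so $x^\star$ is optimal.

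\emph{Reduction.} Put $w:=T(x)$. It suffices to build feasible orders $w=w_0,w_1,\dots,w_m=w'$ in $\mathcal B_f$ such that $w_{i+1}\in N(w_i)$ for each $i$. Given such a chain, set $x^{(0)}:=x$ and $x^{(i)}:=S_{\text{SGS}}(w_i)$ for $i\ge 1$. By Proposition~\ref{pro:SGS-well-defined}, each $x^{(i)}$ is feasible with $T(x^{(i)})=w_i$, and $x^{(m)}=x^\star$. Feasibility of each $w_i$ will be checked through Proposition~\ref{pro:eq-fea}, i.e.\ acyclicity of $G_{w_i}$ plus compatibility and capacity of the pool assignment. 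I read the insertion neighborhood as allowing a task to be removed from its pool and inserted at any position of any pool, including a different one.

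\emph{Construction.} Since $w'\in\mathcal B_f$, the graph $G_{w'}$ is acyclic. Fix a topological ordering $\pi=(v_1,\dots,v_n)$ of $G_{w'}$. Define $w_k$ as follows. The \emph{placed} tasks $v_1,\dots,v_k$ receive their $w'$-pools. On each pool $c$, the placed tasks assigned to $c$ form a prefix, ordered by $\pi$; this agrees with the $w'$-order because $E^{\mathrm{pool}}_{w'}\subseteq G_{w'}$. This prefix is followed by the \emph{unplaced} tasks whose $w$-pool is $c$, in their $w$-order. Then $w_0=w$ and $w_n=w'$. Moreover, $w_k$ is obtained from $w_{k-1}$ by a single insertion: remove $v_k$ and insert it at the end of the placed prefix of pool $c'(v_k)$. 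If this move is trivial, that index is simply dropped from the chain. Condition (ii) of Proposition~\ref{pro:eq-fea} holds for every $w_k$, because each task carries either its $w$-pool or its $w'$-pool, and both assignments are admissible.

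\emph{Acyclicity of $G_{w_k}$ (the main step).} The edges of $G_{w_k}$ fall into four types:
\begin{enumerate}
\item[(a)] edges among placed tasks;
\item[(b)] edges among unplaced tasks;
\item[(c)] edges from placed to unplaced tasks;
\item[(d)] edges from unplaced to placed tasks.
\end{enumerate}
I claim type (d) is empty. Pool edges never go from an unplaced task to a placed one, since placed tasks form prefixes. An $E$-edge $(u,v)$ with $v$ placed forces $u$ to precede $v$ in $\pi$, because $E\subseteq G_{w'}$; hence $u$ is also placed.

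Consequently, any directed cycle in $G_{w_k}$ lies entirely among placed tasks or entirely among unplaced tasks. Among placed tasks, every edge, whether from $E$ or a within-pool edge, goes forward in $\pi$, so no cycle is possible there. Among unplaced tasks, every edge is an $E$-edge or a $w$-pool edge. These form a subgraph of $G_w$, which is acyclic since $w=T(x)\in\mathcal B_f$ (Proposition~\ref{pro:eq-fea}). Therefore $G_{w_k}$ is acyclic, $w_k\in\mathcal B_f$, and the chain has $m\le n$ steps, which completes the argument.
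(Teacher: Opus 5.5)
Your proof is correct and takes a genuinely different route from the paper's. The paper proves connectivity of $\mathcal B_f$ by induction on $|V|$. It picks an $E$-sink $v_0$ and moves it to the end of its pool in both $w$ and $w'$; there $v_0$ is a sink of the augmented graph, so feasibility is preserved. It then deletes $v_0$, applies the induction hypothesis to the reduced instance, and re-appends $v_0$ along the resulting path.

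You instead build a direct, non-recursive path. You walk a topological order $\pi$ of $G_{w'}$ and migrate each task exactly once to the end of a growing ``placed'' prefix of its $w'$-pool. Acyclicity follows from the observation that no edge of $G_{w_k}$ runs from an unplaced to a placed task. Any cycle would therefore have to lie in the $\pi$-monotone placed part or in a subgraph of $G_w$, and neither can contain one. Your version gives an explicit bound $m\le n$ and handles differing pool assignments uniformly.

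Your construction also avoids two slips in the paper's argument. First, the paper's induction tacitly assumes $v_0$ sits on the same pool in $\tilde w$ and $\tilde w'$; otherwise an extra cross-pool move of $v_0$ is needed before concatenating. Second, its Step~2 sets $x^{(0)}:=S_{\mathrm{SGS}}(w^{(0)})$, which need not equal $x$. You keep $x^{(0)}=x$ and make the optimality of $x^\star=S_{\mathrm{SGS}}(T(x_0^\star))$ explicit via (O2).

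One edge case remains. If $T(x)=T(x_0^\star)$, every move in your chain is trivial, so after dropping them the chain is empty and $x^{(m)}=x$, which need not equal $S_{\mathrm{SGS}}(w')$. If $x$ is optimal, simply take $x^\star:=x$ and $m=0$. Otherwise you need one of two fixes:
\begin{enumerate}
\item count a trivial reinsertion as a move, so that $w\in N(w)$; or
\item make a round trip $w\to u\to w$ through some feasible neighbor $u\in N(w)$, which is legitimate because insertions are reversible.
\end{enumerate}
When $\mathcal B_f=\{w\}$, neither fix is available. This happens, for example, with a single pool whose tasks form an $E$-chain, taking $x$ to contain idle time. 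Under the strict reading of ``another position'', the statement as written then fails. This is a defect of the statement, not of your strategy.
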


\begin{proof}
It suffices to prove that the feasible order space $\mathcal B_f:=T(\mathcal X)$ is connected under insertion moves,
and then lift an order path to schedules via $S_{\mathrm{SGS}}$.

\textit{Step 1 (connectivity of $\mathcal B_f$).}
We prove by induction on $n=|V|$ that any two orders $w,w'\in\mathcal B_f$ are connected by finitely many insertions.
The case $n=1$ is immediate.

Assume the claim holds for $n-1$ tasks. Fix any $w,w'\in\mathcal B_f$ on $V$.
Since $(V,E)$ is a DAG, there exists a \emph{sink} task $v_0$ with no successors, i.e., no edge $(v_0,u)\in E$.
From $w$ (resp.\ $w'$), perform one insertion that moves $v_0$ to the end of its current pool,
obtaining $\tilde w\in N(w)$ (resp.\ $\tilde w'\in N(w')$).
This move keeps all pool assignments unchanged, preserving compatibility/capacity feasibility; moreover,
moving a sink task later in its pool cannot create a directed cycle in the augmented graph,
so $\tilde w,\tilde w'\in\mathcal B_f$.

Remove $v_0$ from $\tilde w,\tilde w'$ and relabel ranks within each pool, yielding
$\bar w,\bar w'\in\mathcal B_f^{(n-1)}$ for the reduced instance on $V\setminus\{v_0\}$.
By the induction hypothesis, there exists a sequence
\[
\bar w=\bar w^{(0)},\bar w^{(1)},\dots,\bar w^{(L)}=\bar w',
\qquad \bar w^{(\ell+1)}\in N(\bar w^{(\ell)}).
\]
For each $\ell$, add $v_0$ back to the end of the same pool as in $\tilde w$, producing $w^{(\ell)}\in\mathcal B_f$.
Then $w^{(\ell+1)}\in N(w^{(\ell)})$ holds because the insertion acts only on tasks in $V\setminus\{v_0\}$.
Concatenating $w\to\tilde w=w^{(0)}\to\cdots\to w^{(L)}=\tilde w'\to w'$ gives a finite insertion path in $\mathcal B_f$.

\textit{Step 2 (lift to schedules).}
Let $w^{(0)},\dots,w^{(M)}$ be an insertion path in $\mathcal B_f$ from $T(x)$ to $T(x^\star)$.
Define $x^{(i)}:=S_{\mathrm{SGS}}(w^{(i)})$. By Proposition~3 each $x^{(i)}$ is feasible and
\[
T\!\left(x^{(i)}\right)=w^{(i)}.
\]
Hence $T(x^{(i+1)})\in N(T(x^{(i)}))$ for all $i$, and $x^{(M)}$ is optimal, completing the proof.
\end{proof}

This local search approach with projection iteratively improves schedules by exploring the original space piece by piece, with Theorem \ref{thm3} guaranteeing its feasibility in approaching an optimal solution. Figure \ref{fig:a4} illustrates how local search with the serial schedule generation scheme \( S_{\text{SGS}} \) also fixes the inherent optimality gap in list scheduling. Panel (a) shows that the optimum is excluded by the list scheduling map $S_\text{list}$ through mapping it to a schedule with a different order. Panel (b) shows that although $S_\text{list}$ excludes the optimal solution, applying the local search process on $T(S_\text{list}(\omega))$ and then performing the serial schedule generation scheme map $S_\text{SGS}$ may also helps to obtain the optimal solution $S_{\text{SGS}}(LS(T(S_\text{list}(\omega))))$. This provides an alternative way for overcoming the optimality gaps. Experimental results validating the effectiveness of local search are shown in Appendix~\ref{apx:results-ls}.

\begin{figure}
  \centering
  \includegraphics[width=1.0\linewidth]{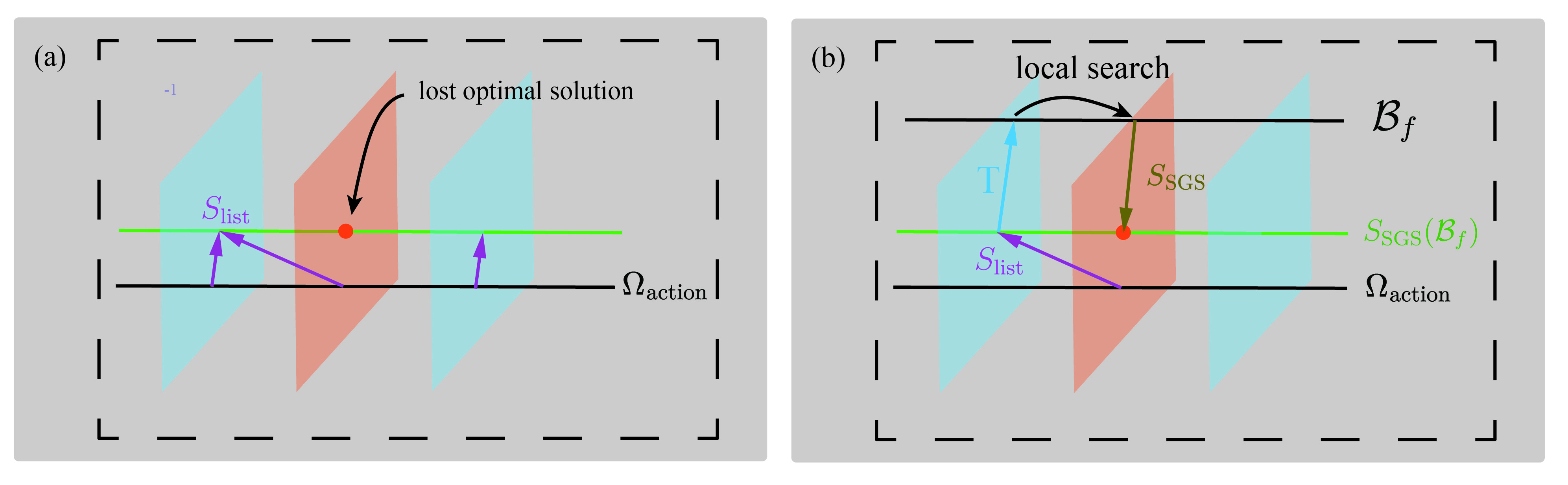}
  \caption{
  Local search with projection. (a) The optimal schedule is excluded by the list scheduling map \( S_{\text{list}} \) due to a different schedule order. (b) Local search \( LS \) on the schedule order \( T(S_{\text{list}}(\omega)) \), followed by the serial schedule generation scheme  \( S_{\text{SGS}} \), recovers the optimal schedule \( S_n(LS(T(S_{\text{list}}(\omega)))) \).}
  \label{fig:a4}
\end{figure}

\section{Benefit from single-pass network framework}\label{apx:narvsar}

Instead of utilizing auto-regressive models, we employ a single-pass (non-auto-regressive) model because it takes the following two benefits and does not bring noticeable performance loss.
\begin{itemize}
    \item \textbf{Computation‐speed gains from non‐auto-regressive decoding:}  
   The single‐pass architecture rapidly generates solutions with just one neural network forward, enabling rapid schedule generation even on CPU‐only servers. As shown in Appendix \ref{apx:inference_time} and Table \ref{table:eva-time}, over 90 \% of inference time is spent in the generation‐map step (which any scheduler must perform), so our total runtime approaches the theoretical lower bound. By contrast, an auto-regressive decoder would rerun the network for each of the $n$ tasks, on TPC-H-30 this inflates total cost by $\approx$ 10× on GPU, and on TPC-H-100 by $\approx$ 100× on CPU, making it orders of magnitude slower.
   \item \textbf{Reduced training GPU memory with non‐autoregressive decoding:}  
   During training, backpropagation must keep all forward‐pass activations in memory which dominates the GPU memory costs of DAG scheduling. In evaluation process, activations can be immediately freed. But in training process, they must be kept after the solution generated and reward calculated. An auto-regressive decoder runs the full network at each of the $n$ scheduling steps, retaining $n$ sets of activations, so you quickly run out of GPU memory and must use tiny batch sizes (e.g. on TPC-H-30/50) and cannot scale to larger DAG scheduling problem. In contrast, our single‐pass design invokes the network exactly once per problem, storing only one set of activations. This lets us train on much larger problems (up to $\approx$ 1000 tasks) with practical batch sizes.

\end{itemize}
  
Unlike combinatorial problems such as TSP where each decision reshapes the remaining solution, in DAG scheduling each choice mainly affects which tasks are running (and which masks are active). Once a task finishes, it no longer influences later decisions beyond updating the dependency mask. Because any selectable task already has all its parents completed, available tasks are independent except for shared resource contention. Intuitively, this matches real-world practice: you plan your next jobs based on currently running workloads, not on jobs that have already finished. Moreover, the following two experiments results shows that non-auto-regressive model does not lead to noticeable performance loss.

\textit{Evidence from local search:}  
   If early decisions had a strong ripple effect, perturbing the action order would yield substantial improvements. We implemented a local search that reinserts any action at any position and keeps the best reorder. As shown in Table~\ref{table:ls} of Appendix \ref{apx:add_results}, 100 local-search steps further improve WeCAN-S(256) by about 2.6\% on RI-W-30 and 1.1\% on RI-W-50, while increasing runtime by nearly two orders of magnitude. This suggests that the single-pass policy already captures most of the useful ordering information once the generation map is properly designed.

 \textit{Minimal gains from auto-regressive models in DAG scheduling:} Before adopting the non-auto-regressive (NAR) model, we start our work by implementing an auto-regressive (AR) version that encodes both current and future resource availability (i.e., running tasks) for each pool. Since pools may have variable numbers of running tasks but embeddings must be fixed-size, we represent each pool’s state with quantile information. Let $K$ be a integer-value hyper-parameters. For pool $c$, $t_{cur}$ denotes the current time, $r_{cur}$ the current resource capacity vector, and $r_f$ the full capacity. At each step, pool $c$ is embedded into $(r+1)\cdot(K+1)$-dimensional dynamical feature vector $(t_1,r_1,...,t_{K+1},r_{K+1})$, where $r$ is the number of resource types. Here, $t_i$ is the earliest time the resource type 1 will be released to reach $r_{cur}^{(1)} + \frac{i-1}{K}(r_f^{(1)}-r_{cur}^{(1)})$ if no more task is coming, and $r_i$ is the corresponding resource vector at $t_i$. Notably, we have $t_1=t_{cur}$, $r_1=r_{cur}$, $t_{K+1}$ is the maximal finish time for tasks running on pool $c$, and $r_{K+1}=r_{ful}$, so this embedding captures the current resource state, the full resource capacity, and critical resource states over time.

The AR model is identical to the NAR model except at the decoder: at each step, the decoder extracts these dynamic features for every pool, processes them with an MLP for pool embeddings, then applies the WCA layer to update both task embeddings and pool–task scores. By taking $K=5$, we evaluated the AR model on the TPC-H dataset, taking the maximal batch size (20 for TPC-H-30 and 8 for TPC-H-50) not out of GPU-memory.
   
\begin{table*}[h!]
\centering
\caption{Performance comparison on TPC-H dataset.}
\label{tab:tpch-results}
\begin{tabular}{lcccccc}
\toprule
 & \multicolumn{2}{c}{TPC-H-30} & \multicolumn{2}{c}{TPC-H-50} & \multicolumn{2}{c}{TPC-H-100} \\
\cmidrule(lr){2-3} \cmidrule(lr){4-5} \cmidrule(lr){6-7}
 & \multicolumn{1}{c}{Makespan} & \multicolumn{1}{c}{Time} & \multicolumn{1}{c}{Makespan} & \multicolumn{1}{c}{Time} & \multicolumn{1}{c}{Makespan} & \multicolumn{1}{c}{Time} \\
\midrule
WeCAN-Greedy(NAR)     & 19578 & 0.15 & 33428 & 0.50 & 62587 & 1.72 \\
WeCAN-S(256)(NAR)     & 18964$\pm$10 & 2.43 & 32814$\pm$47 & 2.86 & 61373$\pm$28 & 10.43 \\
WeCAN-AR-Greedy       & 19386 & 1.96 & 33786 & 3.39 & (out of GPU-memory) & * \\
WeCAN-AR-S(256)       & 18910$\pm$26 & 3.57 & 33201$\pm$75 & 7.05 & (out of GPU-memory) & * \\
\bottomrule
\end{tabular}
\end{table*}
   
From the results, the AR model achieves only less than 1\% better makespan on TPC-H-30 and even 1\% worse performance on TPC-H-50 (this may due to the small batch size due to larger GPU memory costs), but requires substantially more inference time (more than 10 times of NAR) and training GPU memory. This trade-off is the main reason we adopted the non-auto-regressive model, single-pass design.

\section{Dataset}\label{apx:dataset}
\subsection{TPC-H dataset}

In the TPC-H dataset\footnote{http://tpc.org/tpch/default5.asp}, each DAG contains between 2 and 18 nodes, with $9.18$ nodes on average to be scheduled. The dataset used by Wang et. al.~\cite{Wang21nips}\footnote{https://github.com/Thinklab-SJTU/PPO-BiHyb/tree/main/dag\_data/tpch} only includes one type of resource demand, with an average value of 125.8, a minimum of 1, and a maximum of 593. To better reflect heterogeneous real-world scenarios, we extend this dataset by introducing a second resource dimension (representing memory usage) and assigning a random type to each task to indicate its primary computational load (e.g., CPU, IO, etc.). The second resource demand is sampled uniformly from $\{30, 40, 50\}$, and task types are sampled  uniformly from $\{0, 1, 2\}$. 

We construct training and test datasets under three different problem-size settings, denoted as TPC-H-30, TPC-H-50, and TPC-H-100. Under each setting, a problem instance consists of 30, 50, or 100 DAGs, respectively. The training datasets under these three settings contain 2,464, 3,850, and 7,700 DAGs in total. The corresponding test datasets consist of 10 problem instances built from unseen DAGs. For the test datasets under the TPC-H-50 and TPC-H-100, TPC-H-150 settings, the DAG structures and the first-type resource demands are identical to those used by Wang et. al. \cite{Wang21nips}; we augment them by introducing a second resource dimension and task-type annotations. Since the TPC-H-30 and TPC-H-200 setting was not considered in the prior work, we generate both the DAGs and the associated resource demands from scratch, following the same generation protocol used by Wang et. al.\cite{Wang21nips}.

We schedule tasks on 3 pools: pool 1 is of type 0, and pools 2 and 3 are of type 1. The first resource capacities of the three pools are $600$, $800$, and $500$, respectively, and the second resource capacities are $260$, $240$, and $240$.  We define the compatibility coefficients as follows: for pool type 0, the coefficients are $(1.0, 1/0.8, 1/0.7)$ for task types 0, 1, and 2, respectively; for pool type 1, the coefficients are $(0, 1/1.0, 1/1.1)$. 

To evaluate the generalization ability of our model, we train it on the original environment and then test its performance across the following eight distinct fluctuating environmental settings.
\begin{itemize}
    \item \textbf{More pools:} (a) adding an extra pool with capacity $(500,\, 240)$ and type 1; (b) adding an extra pool with capacity $(200,\, 100)$ and type 0; (i) adding three copies for each pool.
  
    \item \textbf{More pool types:} (c) modifying the type of pool 3 to a different new pool type with compatibility coefficients $(1/2.0,\, 1/1.0, \, 1/1.1)$; (d) modifying the type of pool 3 to a different new pool type with compatibility coefficients $(0,\, 1/1.2,\, 1/0.8)$.
    \item \textbf{More task types:} (e) reclassifying half of the type 2 tasks as a new task type, which has a compatibility coefficient of $1/1.4$ with pool type 0 and $1/0.8$ with pool type 1; (f) reclassifying half of the type 2 tasks as another new task type, which has a compatibility coefficient of $1/1.0$ with pool type 0 and $1/2.0$ with pool type 1, and reclassifying the rest half as the new type in (e).
    \item \textbf{Different pool capacity:} (g) increasing the first resource capacities for the three pools to $800, 1100, 500$ and second capacities to $340, 320, 240$; (h) decreasing the first resource capacities for the three pools to $600, 800, 500$ and second capacities to $180, 180, 240$.
\end{itemize}
The fluctuation scenarios in Figure 6 correspond to the environments described above.

\subsection{Computation graphs}

We use the three types of computation graphs introduced by Jeon et. al. \cite{jeon2023iclr}. The first type, the layered graph, is a synthetic structure designed to resemble the computational graphs commonly found in certain classes of neural networks. The other two types are established families of random undirected graphs, which we convert into DAGs following the same transformation procedure described by Jeon et. al. \cite{jeon2023iclr}. We generate graphs with specific configurations based on Jeon et al.'s methodology. For the layered graph, we use the parameters $\sigma_N =0.75,  \rho_E=0.2, \rho_S=0.14$. For the Erd\H{o}s-R\'{e}nyi graph, the edge probability is set to $p=0.05$. For the stochastic block model, we use the parameters $p_{in}=0.3$ and $p_{out}=0.005$. For each instance, we generate 10 graphs of size 50, resulting in a total of 500 task nodes per problem. 

Task durations are generated following the approach of Jeon et. al. ~\cite{jeon2023iclr}. The duration $t(v)$ for task $v$ is calculated as:
\begin{equation*}
    t(v) = round(100\times m(v))+1,
\end{equation*}
where $m(v)$ is sampled from a Gaussian mixture model (GMM) and projected onto the non-negative domain. The GMM consists of four Gaussian components, each with equal mixture weight. The means of the components are $(0.5,\,1,\,3,\,5)$, and their corresponding standard deviations are $(0.5,\,1,\,1,\,1)$. For resource requirements, each task's processor demand is uniformly sampled from the set $\{2,\, 4,\, 8,\, 16\}$, and memory demand is selected from $\{1,\, 2,\, 3\}$, also with uniform probability. Task types are sampled from $\{0,\,1,\,2\}$ with probability $1/6,\,1/6,\,2/3$, respectively. Task scheduling is performed with three resource pools: pool 1 is of type 0, while pools 2 and 3 are of type 1.The first resource capacities of the three pools are $16,\, 12,\, 64,$ respectively, and the second resource capacities are $15,\, 20,\, 50$. The compatibility coefficients of pools with type 0 are $(1.0,\,1/0.8,\,1/1.2)$ and those for pools with type 1 are $(0,\,1/1.2,\,1/0.8)$.

\subsection{RI-W Datasets}

We further construct resource-intensive workload (RI-W) datasets to evaluate scheduling performance under workloads whose execution is affected by a small number of resource-intensive tasks. In this setting, most tasks follow the standard TPC-H workload distribution, while a small fraction of tasks require substantially larger resource demands and longer processing times. These resource-intensive tasks may occupy major resource capacities for long periods and create bottleneck decisions involving waiting, packing, and task ordering. Such workloads provide a challenging setting in which restricted generation maps are more likely to exhibit generation-induced suboptimality. We instantiate the RI-W datasets by deriving query instances from the TPC-H dataset and modifying a small fraction of tasks to emulate these resource-intensive jobs.

For the RI-W-30 and RI-W-50 datasets, we start from the corresponding TPC-H-30 and TPC-H-50 settings. We identify non-root tasks with durations exceeding $1000$ seconds, randomly select half of them, and assign them a resource demand of $(300,210)$ and task type 0. Approximately $1.6\%$ of all tasks are replaced in this manner, resulting in an average of $4.8$ and $8.0$ heavy tasks per problem instance in the RI-W-30 and RI-W-50 datasets, respectively. To accommodate these heavy tasks, the first resource capacities of the three pools are increased to $1200$, $3000$, and $800$, while the second resource capacities are set to $260$, $260$, and $240$. The compatibility coefficients are kept unchanged.

To further evaluate the ability of algorithms to address optimality gaps under different workload intensities, we construct additional RI-W variants based on TPC-H-30 and TPC-H-50. Specifically, we identify all non-root DAG tasks and replace $0.4\%$, $0.8\%$, $1.6\%$, and $3.2\%$ of them with heavy tasks that have resource demand $(300,210)$, task type 0, and processing time $2000$. The environment settings and compatibility coefficients are the same as those used in RI-W-30 and RI-W-50. These variants are used to study how the benefit of skip-extended generation changes as the ratio of resource-intensive tasks increases.

\subsection{Problem size of benchmarks}

We summarize the key statistics regarding the average problem size of the benchmarks. Table~\ref{tab:pb_sz} reports the size of the action set, the number of MILP variables, the number of MILP constraints, and the number of nonzero elements in the MILP constraints. The RI-W datasets have the same DAG counts and pool numbers as the corresponding TPC-H-30 and TPC-H-50 settings; they differ in task durations and resource demands rather than in problem-size order. The two homogeneous lines with a single pool correspond to the largest benchmark instances tested in prior works \cite{Wang21nips,jeon2023iclr}. The heterogeneous cases correspond to the benchmarks evaluated in our study. As shown in the table, the heterogeneous settings substantially enlarge the problem size, making the benchmarks we test considerably more challenging than those in previous works.

\begin{table}[htbp]
\centering
\caption{Problem size of Benchmarks}
\label{tab:pb_sz}
\renewcommand{\arraystretch}{1.2}
\begin{tabular}{lcccc}
\toprule
& Action set & Variables & Constraints & Nonzeros\\
\hline
Homogeneous & $n$ & $3n^2$ & $6n^2$ & $19n^2$ \\
\hline
TPC-H-150, 1 pool & 1500 & $6.8 \times 10^6$ & $1.4 \times 10^7$ & $4.3 \times 10^7$ \\
\hline
TPC-H-300, 1 pool & 3000 & $2.7 \times 10^7$ & $5.4 \times 10^7$ & $1.7 \times 10^8$ \\
\hline
Heterogeneous & $nn_c$ & $(3+n_c)n^2$ & $(6+2n_c)n^2$ & $(18+8n_c+rn_c)n^2$ \\
\hline
Computation-Graph, 3 pool & 1500 & $1.5 \times 10^6$ & $3 \times 10^6$ & $1.2 \times 10^7$ \\
\hline
TPC-H-100, 3 pool & 3000 & $6.0 \times 10^6$ & $1.2 \times 10^7$ & $4.8 \times 10^7$ \\
\hline
TPC-H-200, 3 pool & 6000 & $2.4 \times 10^7$ & $4.8 \times 10^7$ & $2 \times 10^8$ \\
\hline
TPC-H-200, 12 pool & 24000 & $6.0 \times 10^7$ & $1.2 \times 10^8$ & $5.5 \times 10^8$ \\
\bottomrule
\end{tabular}
\end{table}

\section{Baseline algorithms}\label{apx:baselines}
\subsection{Heuristic baselines}
We here provide implementation details of heuristic baselines. For clarity, we denote the average processing time $\Bar{t}(v)$ for task $v$ as the average of its actual processing times across all compatible resource pools $c$ where $K_{acc}(v, c) > 0$.
\begin{itemize}
    \item \textbf{Shortest first task scheduling (SFT).} The scheduler calculates the average processing time for each task and selects the unmasked task with the smallest value.
    \item \textbf{Most operations remaining (MOPNR).} For each task $v$, we define the number of remaining operations  as the number of tasks that directly or indirectly depend on $v$. At each step, the scheduler selects the unmasked task with the largest number of remaining operations.
    \item \textbf{Critical path scheduling (CP).} A directed path originating from task $v$ is any path leading from $v$ to one of its descendants $w$ in the DAG, i.e., a sequence $v = v_0, v_1, \dots, v_k = w$ in which $(v_i, v_{i+1}) \in E$ for all $i = 0, \dots, k-1$. The length of a path is defined as the sum of the average processing times of all tasks along it. The critical path of a task $v$ refers to the longest directed path originating from $v$. The scheduler selects the unmasked task with the largest critical path length.
    \item \textbf{Tetris\cite{tetris}.} A heuristic method for multi-pool cases where jobs are arranged as a Tetris game on the two-dimension space of makespan and resource. The Tetris score between task $v$ and pool $c$ is defined to be the normalized inner product between their resource feature. Let $\rho(v)$ be the resource demand vector of task $v$ and $\lambda(c), \lambda_{cur}(c)$ be the full resource vector and the current resource vector of pool $c$. The Tetris score is defined to be $s_{tetris}(v,c) = (\rho(v)/\lambda(c)) \cdot (\lambda_{cur}(c)/\lambda(c)) = \sum_{k=1}^r (\rho^k(v)/\lambda^k(c)) \cdot (\lambda_{cur}^k(c)/\lambda^k(c))$. The scheduler keeps choosing the unmasked task with the highest Tetris score.
    \item \textbf{HEFT\cite{heft}.} A dependency-aware scheduling algorithm designed for heterogeneous DAG scheduling. It maintains a timeline for each pool and inserts the unmasked task with the highest critical path length into the timeline gap where it can complete at the earliest possible time among all feasible pools.
    \item \textbf{PEFT\cite{PEFT}.} A dependency-aware scheduling algorithm designed for heterogeneous DAG scheduling. It is similar to HEFT but takes the average optimal cost table value as the priority.
    \item \textbf{IPPTS\cite{IPPTS}.} A dependency-aware scheduling algorithm designed for heterogeneous DAG scheduling. It is similar to HEFT but takes the multiplication of the average on ``predict cost matrix'' and out degree as the priority.
\end{itemize}

For each list scheduling based heuristic algorithm (i.e., all except HEFT, PEFT, IPPTS), the pool assignment throughout the scheduling process is determined by one of the following three rules:
\begin{itemize}
    \item \textbf{Earliest finishing time (EFT).} For a given task $v$, the scheduler selects the pool $c$ that allows the earliest feasible completion, which, in the context of resource compatibility, corresponds to the pool with the largest $K_{acc}(v, c)$ value.
    \item \textbf{Tetris score.} For a given task $v$, the scheduler selects the compatible pool $c$ such that the Tetris score between $v$ and $c$ is the highest.
    \item \textbf{Balance choosing.} For a given task $v$, the scheduler selects the pool $c$ that achieves a balance between fast execution and good resource fit.  Specifically, it selects the pool with the largest combined score defined as $s_{tetris}(v,c)\cdot K_{acc}(v,c)$.
\end{itemize}
Each list scheduling based heuristic (SFT, MOPNR, CP, Tetris) is evaluated under all three pool assignment rules, resulting in three complete schedules. The smallest makespan is reported as the final result, and run time is the sum obtained across the three runs.

\subsection{Neural baselines} 
We provide the implementation details of neural baselines. PPO-BiHyb and One-Shot employ the earliest finishing time as the rule for pool selection. 

\begin{itemize} 
    \item \textbf{PPO-BiHyb\cite{Wang21nips}.} The network architecture, as well as the training and inference settings, remain consistent with the original source code, except for the node features and the underlying heuristic algorithm. Specifically, the node features include the average processing time (averaged over all pool types) and the real processing times for each individual pool type. We observe that IPPTS (in TPC-H) and HEFT (in Computation graphs) achieve the best performance among all heuristic baselines. Therefore, we adopt them as the underlying heuristic algorithm in the bilevel framework of PPO-BiHyb. 
    \item \textbf{OneShot\cite{jeon2023iclr}.} As the source code of Topoformer is unavailable, Graphormer with additional attention masks is used as the encoder instead. Each task has a feature vector $(\Bar{t}, \rho_1,\rho_2)$ of dimension 3, where $\Bar{t}$ is the average processing time. This design enables OneShot to handle varying environment structures, facilitating the comparison of its generalization ability with that of our WeCAN. The training settings align with those of our WeCAN, except for an additional $\ell_2$ regularization term in the loss function mentioned in their paper \cite{jeon2023iclr}.

    \item \textbf{ScheduleNet\cite{Wang2022SchNet}.}
    We adapt the ScheduleNet-style heterogeneous GAT encoder to our task--pool DAG scheduling by constructing a two-type heterogeneous graph
    (task nodes and pool nodes) and using \emph{four} directed relations:
    (i) task$\to$pool edges,
    (ii) pool$\to$task reverse edges,
    (iii) precedence edges task$\to$task,
    and (iv) reverse precedence edges (treating the DAG direction explicitly).
    The encoder performs relation-specific attention/message passing and type-aware updates (task/pool have separate transformations) and the actual processing time $
t_{\mathrm{act}}(v,c):=t(v)/K_{acc}(v,c)$ is embedded into edge features,
    producing embeddings $\{h_v\}$ and $\{h_c\}$ in a single forward pass.
    We then score feasible task--pool pairs by a shared decoder
    \[
    u(v,c)=\mathrm{Dec}\bigl([h_v \Vert h_c]\bigr).
    \]
    The resulting logits are plugged into the same feasibility-masked generator used in our framework.

    \item \textbf{HGN-Two-Stage\cite{Song2023TwoStage}.}
    The original method targets FJSP with a two-stage heterogeneous encoder.
    We adapt its \emph{two-stage} update pattern to the task--pool DAG setting (operations$\mapsto$tasks, machines$\mapsto$pools)
    while replacing the FJSP neighborhoods with (a) task--pool edges and (b) directed precedence edges.
    Each layer performs:
    \emph{Stage-I (pool update)}: update pool embeddings by attending over incident task$\to$pool edges, augmented with pool self-loops
    (neighbor edges and self-loops are normalized together, but use distinct projections);
    \emph{Stage-II (task update)}: update task embeddings by fusing three streams:
    forward-precedence attention, backward-precedence attention, and a pool$\to$task aggregation (implemented as a mean aggregator followed by a small MLP),
    concatenated with the task residual and passed through an MLP.
    After the final layer, we use the same pairwise decoding $u(v,c)$ and the same feasibility-masked generator as above.
\end{itemize}

\noindent\textbf{Implementation alignment.}

For OneShot, ScheduleNet and HGN-Two-Stage, we match the main embedding dimension and depth to our WeCAN to control parameter budget.
For ScheduleNet and HGN-Two-Stage, we keep only the heterogeneous encoders and replace the original step-wise decision heads by the same single-pass decoding and generation interface used in our framework.
Moreover, the hidden dimension, layer number, and attention-head configuration are set identical to our WeCAN, ensuring matched parameter budgets across neural methods.

\section{Additional results}\label{apx:add_results}

\subsection{Additional results on generalization ability}

We present full experimental results evaluating the generalization ability of WeCAN. In Tables~\ref{table:flua} to~\ref{table:fluh}, we report the performance of models trained in a fixed TPC-H environment and evaluated under eight environment fluctuations (a)-(h), as defined in Appendix~\ref{apx:dataset}. Additionally, in Table~\ref{table:cs} we report the performance of models trained on TPC-H-30 while tested on TPC-H-50 and TPC-H-100, which contain more tasks. In this table, WeCAN-Train30 and OneShot-Train30 indicate the models trained on TPC-H-30 while WeCAN and OneShot denote models trained and tested under the same task setting. The TPC-H-30 results in Tables \ref{table:flua}- \ref{table:fluh} correspond to the fluctuations in Figure 7.

The results show that our WeCAN  consistently outperforms OneShot and heuristic baselines across all evaluation scenarios. Moreover, it maintains stable performance under both fluctuating environments and varying task numbers, demonstrating strong generalization ability.

\begin{table*} [htbp]
\caption{Experimental results on TPC-H datasets on fluctuating environment (a), which corresponds to the ``more resource pool (A)'' in Figure 7.} 
  \label{table:flua}
  \setlength{\tabcolsep}{3.5pt}
  \centering
  \begin{tabular}{lccccccr}
    \toprule
     & \multicolumn{2}{c}{TPC-H-30, 3 pools} & \multicolumn{2}{c}{TPC-H-50, 3 pools} & \multicolumn{2}{c}{TPC-H-100, 3 pools}\\
        & {Makespan} & {Improvement} & {Makespan} & {Improvement}  & {Makespan} & {Improvement}\\
    \midrule
    SFT     & 25556 & -26.15\% & 43289& -19.12\%  & 77494 & -18.64\%\\
    MOPNR & 23080  & -13.93\%   & 40724 & -12.06\% & 71166 & -8.95\%\\
    Tetris & 21889 & -8.05\%   & 37682 & -3.69\% & 67923 &  -3.99\%\\
    CP & 22431 & -10.73\%   & 39041 & -7.43\% & 68278 &  -4.53\%\\
    HEFT & 22622 & -11.67\%   & 38643 & -6.34\% & 67872 &  -3.91\%\\
    PEFT & 22012 & -8.66\%   & 38111 & -4.87\% & 67872 &  -1.44\\
    IPPTS & 20258 & 0.00\%   & 36340 & 0.00\% & 65318 &  0.00\%\\
    \midrule
    OneShot-S(256)    & 19830 &  2.11\% & 62744 & 5.75\%  & 62744 & 3.94\% \\
    \midrule
    WeCAN-Greedy &  18167    & 10.32\%   & 32244 & 11.27\% & 58764 & 10.03\% \\
    WeCAN-S(64)   &  17496  &  13.64\%  & 31416 & 13.55\% & 57732 & 11.61\% \\
    WeCAN-S(256)   &  \textbf{17430}  &  \textbf{13.96\%}  & \textbf{31157} & \textbf{14.26\%} & \textbf{57404} & \textbf{12.12\%} \\
    \bottomrule
  \end{tabular}
\end{table*}

\begin{table*}[htbp]
  \caption{Experimental results on TPC-H datasets on fluctuating environment (b), which corresponds to the ``more resource pool (B)'' in Figure 7.} 
  \label{table:flub}
  \setlength{\tabcolsep}{3.5pt}
  \centering
  \begin{tabular}{lccccccr}
    \toprule
     & \multicolumn{2}{c}{TPC-H-30, 3 pools} & \multicolumn{2}{c}{TPC-H-50, 3 pools} & \multicolumn{2}{c}{TPC-H-100, 3 pools}\\
        & {Makespan} & {Improvement} & {Makespan} & {Improvement}  & {Makespan} & {Improvement}\\
    \midrule
    SFT     & 23741 & -29.89\% & 41476 & -33.97\%  & 74061 & -27.30\%\\
    MOPNR & 21869  & -19.65\%   & 38129 & -23.16\% & 67597 & -16.19\%\\
    Tetris & 20000 & -9.42\%   & 32237 & -4.13\% & 58796 &  -1.06\%\\ 
    CP & 21089 & -15.38\%   & 34598 & -11.75\% & 62874  &  -8.07\%\\
    HEFT & 18624 & -1.89\%   & 31866 & -2.93\% & 58531 &  -0.60\%\\
    PEFT & 19834 & -8.51\%   & 32719 & -5.68\% & 59118 &  -1.61\%\\
    IPPTS & 18278 & 0.00\%   & 30960 & 0.00\% & 58180 &  0.00\%\\
    \midrule
    OneShot-S(256)    & 17330 &  5.19\% & 29807 & 3.72\%  & 55739 & 4.20\% \\
    \midrule
    WeCAN-Greedy &  17119    & 6.34\%   & 29251 & 5.52\% & 55314 & 4.93\% \\
    WeCAN-S(64)   &  16578  &  9.30\%  & 28580 & 7.69\% & 53891 & 7.37\% \\
    WeCAN-S(256)   &  \textbf{16059}  &  \textbf{9.68\%}  & \textbf{28476} & \textbf{8.02\%} & \textbf{53401} & \textbf{8.21\%} \\
    \bottomrule
  \end{tabular}
\end{table*}

\begin{table*}[htbp]
  \caption{Experimental results on TPC-H datasets on fluctuating environment (c), which corresponds to the ``more pool types (A)'' in Figure 7.} 
  \label{table:fluc}
  \setlength{\tabcolsep}{3.5pt}
  \centering
  \begin{tabular}{lccccccr}
    \toprule
     & \multicolumn{2}{c}{TPC-H-30, 3 pools} & \multicolumn{2}{c}{TPC-H-50, 3 pools} & \multicolumn{2}{c}{TPC-H-100, 3 pools}\\
        & {Makespan} & {Improvement} & {Makespan} & {Improvement}  & {Makespan} & {Improvement}\\
    \midrule
    SFT    & 26626 & -28.43\% & 44708 & -25.14\%  & 80460 & -17.13\%\\
    MOPNR & 24792  & -19.58\%   & 41589 & -16.41\% & 75168 & -9.43\%\\
    Tetris & 22829 & -10.12\%   & 36691 & -2.70\% &  68863 &  -0.25\%\\
    CP & 22519 & -8.58\%   &  38745& -8.45\% & 72774 &  -5.94\%\\
    HEFT & 20732 & 0.00\%   & 35727 & 0.00\% & 68996 &  -0.44\%\\
    PEFT & 21848 & -5.38\%   & 36572 & -2.37\% & 68692 &  0.00\%\\
    IPPTS & 20977 & -1.18\%   & 35871 & -0.40\% & 69374 &  -0.99\%\\
    \midrule
    OneShot-S(256)    & 20545 &  0.90\% & 36091 & -1.02\%  & 69410 & -1.04\% \\
    \midrule
    WeCAN-Greedy &   19816  & 4.42\%   & 32893 & 7.93\% & 65760 & 4.27\% \\
    WeCAN-S(64)   &  19449  &  6.19\%  & 32696 & 8.48\% & 65326 & 4.90\% \\
    WeCAN-S(256)   &  \textbf{19340}  &  \textbf{6.71\%}  & \textbf{32675} & \textbf{8.54\%} & \textbf{65199} & \textbf{5.09\%} \\
    \bottomrule
  \end{tabular}
\end{table*}

\begin{table*}[htbp]
  \caption{Experimental results on TPC-H datasets on fluctuating environment (d), which corresponds to the ``more pool types (B)'' in Figure 7.} 
    \label{table:flud}
  \setlength{\tabcolsep}{3.5pt}
  \centering
  \begin{tabular}{lccccccr}
    \toprule
     & \multicolumn{2}{c}{TPC-H-30, 3 pools} & \multicolumn{2}{c}{TPC-H-50, 3 pools} & \multicolumn{2}{c}{TPC-H-100, 3 pools}\\
        & {Makespan} & {Improvement} & {Makespan} & {Improvement}  & {Makespan} & {Improvement}\\
    \midrule
    SFT     & 27303 & -27.61\% & 47400 & -27.93\%  & 84195 & -24.71\%\\
    MOPNR &  24964 & -16.68\%   & 43010 & -16.08\% & 77059 & -14.14\%\\
  
    Tetris & 23203 & -8.45\%   & 38885 & -4.95\% &  71034 &  -5.21\%\\
    CP & 23129 & -8.10\%   & 41465 & -11.91\% & 73481 &  -8.84\%\\
    HEFT & 22416 & -4.77\%   & 38648 & -4.31\% & 69443 &  -2.86\%\\
    PEFT & 22658& -5.90\%   & 39257 & -5.95\% & 68984 &  -2.18\%\\
    IPPTS & 21396 & 0.00\%   & 37052 & 0.00\% & 67515 &  0.00\%\\
    \midrule
    OneShot-S(256)    & 19938 &  6.81\% & 35018 & 5.49\%  & 65561 & 2.89\% \\
    \midrule
    WeCAN-Greedy &   19146  & 10.51\%   & 33282 & 10.18\% & 61271 & 9.25\% \\
    WeCAN-S(64)   &  18716  &  12.52\%  & 32526 & 12.22\% & 60285 & 10.71\% \\
    WeCAN-S(256)   &  \textbf{18569}  &  \textbf{13.21\%}  & \textbf{32417} & \textbf{12.51\%} & \textbf{60051} & \textbf{11.06\%} \\
    \bottomrule
  \end{tabular}
\end{table*}

\begin{table*}[htbp]
  \caption{Experimental results on TPC-H datasets on fluctuating environment (e), which corresponds to the ``more task types (A)'' in Figure 7.} 
    \label{table:flue}
  \setlength{\tabcolsep}{3.5pt}
  \centering
  \begin{tabular}{lccccccr}
    \toprule
     & \multicolumn{2}{c}{TPC-H-30, 3 pools} & \multicolumn{2}{c}{TPC-H-50, 3 pools} & \multicolumn{2}{c}{TPC-H-100, 3 pools}\\
        & {Makespan} & {Improvement} & {Makespan} & {Improvement}  & {Makespan} & {Improvement}\\
    \midrule
    SFT     & 27254 & -26.86\% & 49040 & -27.47\%  & 83922 & -24.88\%\\
    MOPNR &  24809 & -15.47\%   & 43135 & -12.12\% & 77623 & -15.51\%\\
    Tetris & 23629 & -9.98\%   & 38484 & -0.03\% &  71358 &  -6.19\%\\
    CP & 23488 & -9.33\%   & 41564 & -8.04\% & 72609 &  -8.05\%\\
    HEFT & 22768 & -5.97\%   & 40136 & -4.33\% & 68243 &  -1.55\%\\
    PEFT & 22521 & -4.82\%   & 39917 & -3.76\% & 68598 &  -2.08\%\\
    IPPTS & 21484 & 0.00\%   & 38471 & 0.00\% & 67201 &  0.00\%\\
    \midrule
    OneShot-S(256)    & 20435 &  4.88\% & 35093 & 8.78\%  & 64709 & 3.71\% \\
    \midrule
    WeCAN-Greedy &   18908 & 11.99\%   & 33950 & 11.75\% & 61759 & 8.10\% \\
    WeCAN-S(64)   &  18456  &  14.10\%  & 33179 & 13.75\% & 60752 & 9.60\% \\
    WeCAN-S(256)   &  \textbf{18365}  &  \textbf{14.52\%}  & \textbf{32994} & \textbf{14.24\%} & \textbf{60278} & \textbf{10.3\%} \\
    \bottomrule
  \end{tabular}
\end{table*}

\begin{table*}[htbp]
  \caption{Experimental results on TPC-H datasets on fluctuating environment (f), which corresponds to the ``more task types (B)'' in Figure 7.} 
    \label{table:fluf}
  \setlength{\tabcolsep}{3.5pt}
  \centering
  \begin{tabular}{lccccccr}
    \toprule
     & \multicolumn{2}{c}{TPC-H-30, 3 pools} & \multicolumn{2}{c}{TPC-H-50, 3 pools} & \multicolumn{2}{c}{TPC-H-100, 3 pools}\\
        & {Makespan} & {Improvement} & {Makespan} & {Improvement}  & {Makespan} & {Improvement}\\
    \midrule
    SFT     & 29738 & -18.8\% & 50118 & -27.54\%  & 94002 & -21.59\%\\
    MOPNR & 28402  & -13.46\%   & 45473 & -15.72\% & 86306 & -11.63\%\\
    Tetris & 25983 & -3.80\%   & 40322 & -2.61\% &  78031 &  -0.93\%\\
    CP & 26427 & -5.57\%   & 43760 & -11.36\% & 85981 & -11.21\%\\
    HEFT & 26707 & -6.69\%   & 41042 & -4.44\% & 83114 &  -7.50\%\\
    PEFT & 25812 & -3.11\%   & 39556 & -0.66\% & 77313 &  0.00\%\\
    IPPTS & 25032 & 0.00\%   & 39296 & 0.00\% & 77989 &  -0.87\%\\
    \midrule
    OneShot-S(256)    & 22573 &  9.82\% & 36077 & 8.19\%  & 69004 & 8.18\% \\
    \midrule
    WeCAN-Greedy &   22864  & 8.66\%   & 36088 & 8.16\% & 68934 & 10.84\% \\
    WeCAN-S(64)   &  21847  &  12.73\%  & 34539 & 12.11\% & 68299 & 11.66\% \\
    WeCAN-S(256)   &  \textbf{21801}  &  \textbf{12.91\%}  & \textbf{34378} & \textbf{12.52\%} & \textbf{67722} & \textbf{12.41\%} \\
    \bottomrule
  \end{tabular}
\end{table*}

\begin{table*}[htbp]
  \caption{Experimental results on TPC-H datasets on fluctuating environment (g), which corresponds to the ``Larger resource-pool capacity'' in Figure 7.} 
    \label{table:flug}
  \setlength{\tabcolsep}{3.5pt}
  \centering
  \begin{tabular}{lccccccr}
    \toprule
     & \multicolumn{2}{c}{TPC-H-30, 3 pools} & \multicolumn{2}{c}{TPC-H-50, 3 pools} & \multicolumn{2}{c}{TPC-H-100, 3 pools}\\
        & {Makespan} & {Improvement} & {Makespan} & {Improvement}  & {Makespan} & {Improvement}\\
    \midrule
    SFT     & 20842 & -29.98\% & 36076 & -29.92\%  & 63677 & -21.21\%\\
    MOPNR &  18417 & -17.58\%   & 32614 & -17.45\% & 57660 & -9.75\%\\
    Tetris & 18417 & -14.86\%   & 30438 & -9.62\% &  54054 &  -2.89\%\\
    CP & 17790 & -10.95\%   & 29502 & -6.24\% & 53009 &  -0.90\%\\
    HEFT & 17360 & -8.26\%   & 29010 & -4.47\% & 52935 &  -0.76\%\\
    PEFT & 17799 & -11.01\%   & 30286 & -9.07\% & 53773 &  -2.35\%\\
    IPPTS & 16035 & 0.00\%   & 27768 & 0.00\% & 52536 &  0.00\%\\
    \midrule
    OneShot-S(256)    & 15592 &  2.76\% & 27420 & 1.25\%  & 50939 & 3.04\% \\
    \midrule
    WeCAN-Greedy &   15982  & 0.33\%   & 27050 & 2.58\% & 50361 & 4.14\% \\
    WeCAN-S(64)   &  15013  &  6.37\%  & 26159 & 5.80\% & 48882 & 6.96\% \\
    WeCAN-S(256)   &  \textbf{13498}  &  \textbf{6.70\%}  & \textbf{25990} & \textbf{6.40\%} & \textbf{48711} & \textbf{7.28\%} \\
    \bottomrule
  \end{tabular}
\end{table*}

\begin{table*}[htbp]
  \caption{Experimental results on TPC-H datasets on fluctuating environment (h), which corresponds to the ``Smaller resource-pool capacity'' in Figure 7.} 
    \label{table:fluh}
  \setlength{\tabcolsep}{3.5pt}
  \centering
  \begin{tabular}{lccccccr}
    \toprule
     & \multicolumn{2}{c}{TPC-H-30, 3 pools} & \multicolumn{2}{c}{TPC-H-50, 3 pools} & \multicolumn{2}{c}{TPC-H-100, 3 pools}\\
        & {Makespan} & {Improvement} & {Makespan} & {Improvement}  & {Makespan} & {Improvement}\\
    \midrule
    SFT     & 33138 & -28.00\% & 54852 & -18.54\%  & 100653 & -17.54\%\\
    MOPNR & 29829  & -15.22\%   & 50931 & -10.07\% & 92989 & -8.59\%\\
    Tetris & 29179 & -12.71\%   & 48891 & -5.66\% &  91940 &  -7.36\%\\
    CP & 27760 & -7.23\%   & 46985 & -1.54\% &  86498 &  -1.01\%\\
    HEFT & 28119 & -8.61\%   & 47602 & -2.87\% & 88638 &  -3.50\%\\
    PEFT & 28012 & -8.20\%   & 48255 & -4.29\% & 89048 &  -3.98\%\\
    IPPTS & 25889 & 0.00\%   & 46272 & 0.00\% & 85636 &  0.00\%\\
    \midrule
    OneShot-S(256)    & 25349 &  2.08\% & 44122 & 4.65\%  & 82797 & 3.32\% \\
    \midrule
    WeCAN-Greedy &   24525  & 5.27\%   & 41563 & 10.18\% & 79378 & 7.31\% \\
    WeCAN-S(64)   &  23791  & 8.10\%  & 40778 & 11.87\% & 78293 & 8.57\% \\
    WeCAN-S(256)   &  \textbf{23628}  &  \textbf{8.73\%}  & \textbf{40705} & \textbf{12.03\%} & \textbf{78009} & \textbf{8.91\%} \\
    \bottomrule
  \end{tabular}
\end{table*}

\begin{table*}[htbp]
  \caption{Experimental results on TPC-H datasets. OneShot-Train30 and WeCAN-Train30 refer to models trained in TPC-H-30 while OneShot and WeCAN refer to model trained in problems with corresponding size (TPC-H-50 or TPC-H-100)} 
    \label{table:cs}
  \setlength{\tabcolsep}{3.5pt}
  \centering
  \begin{tabular}{lccccr}
    \toprule
     & \multicolumn{2}{c}{TPC-H-50} & \multicolumn{2}{c}{TPC-H-100}\\
        & {Makespan} & {Improvement} & {Makespan} & {Improvement}  \\
    \midrule
    SFT     & 49172 & -27.2\% & 84986 & -21.2\% \\
    MOPNR & 43545  & -12.7\%   & 74364 & -10.3\% \\
    CP & 41597 & -7.62\%   & 74364 & -6.03\% \\
    Tetris & 38654 & 0.00\%   & 71269 & -1.65\% \\
    HEFT & 36326 & -1.71\%   & 70137 & 0.00\% \\
    \midrule
    OneShot-Train30-S(256)    & 36326 &  6.02\% & 66777 & 4.79\% \\
    OneShot-S(256)    & 35561 &  8.00\% & 66173 & 5.65\% \\
    \midrule
    WeCAN-Train30-Greedy &   33654  & 12.9\%   & 63034 & 10.1\% \\
    WeCAN-Greedy &   33428  & 13.5\%   & 62587 & 10.8\% \\
    WeCAN-Train30-S(64)   &  33265   & 13.9\%   & 61894 & 11.8\% \\
    WeCAN-S(64)  &   32927  & 14.8\%   & 61706 & 12.0\% \\
    WeCAN-Train30-S(256)   &  33137   & 14.3\%   & 61812 & 11.9\% \\
    WeCAN-S(256)   &  \textbf{32885}  &  \textbf{14.9\%}  & \textbf{61364} & \textbf{12.5\%}  \\
    \bottomrule
  \end{tabular}
\end{table*}

\clearpage

\subsection{Results on local search}
\label{apx:results-ls}

In Appendix~\ref{apx:role-ls}, we discussed a local-search variant based on the serial schedule generation scheme $S_{\mathrm{SGS}}$ as an alternative way to mitigate generation-induced optimality gaps. Instead of modifying the rollout rule directly, this variant searches neighboring feasible schedule orders and maps the selected order back to a schedule through $S_{\mathrm{SGS}}$. Here we evaluate this alternative on the RI-W datasets, where resource-intensive tasks create bottleneck decisions involving waiting, packing, and task ordering.

The experimental setup follows the RI-W experiments in the main text, i.e., Table~III and Fig.~4(a--b), with dataset details provided in Appendix~\ref{apx:dataset}. Due to the high computational cost of local search, we report results from a single random seed. We evaluate two versions of WeCAN: one with the decreasing-score skip action and one without the skip action. For both versions, we apply local search with 50 and 100 search steps.

As shown in Table~\ref{table:ls}, local search improves both variants, but the improvement is substantially larger when it is applied to WeCAN without the skip action. This observation is consistent with the order-space analysis: without skip, the generation map inherits the restricted reachability of list scheduling, leaving more room for order-space refinement. In contrast, the skip-extended generation map already introduces controlled waiting and enlarges the reachable set, so the marginal gain from local search becomes smaller. However, local search requires repeated calls to the generation map and therefore incurs significantly higher runtime. These results suggest that local search can mitigate generation-induced suboptimality, but the skip action provides a more computationally efficient way to obtain robust performance on resource-intensive workloads.

\begin{table*}[htbp]
  \caption{Experimental results on RI-W datasets.}
    \label{table:ls}
  \centering
  \begin{tabular}{lccccccr}
    \toprule
     & \multicolumn{3}{c}{RI-W-30} & \multicolumn{3}{c}{RI-W-50}\\
        & {Makespan} & {Improvement} & {Time} & {Makespan} & {Improvement} & {Time}  \\
    \midrule

    IPPTS & 25697 & 0.00\% & 0.21   & 38034 & 0.00\% & 0.51  \\
    \midrule
    WeCAN-NoSkip-Greedy &   27138 & -5.61\% & 0.17   & 39457 & -3.74\% &  0.47 \\
    WeCAN-NoSkip-S(256)   &  26159  & -1.80\% &  2.38  & 38485 & -1.19\% & 5.37 \\
    WeCAN-NoSkip-S(256)+LocalSearch(50)   &  24946 & 2.92\%  &  92.80  & 37726 & 0.81\% & 275.57 \\
    WeCAN-NoSkip-S(256)+LocalSearch(100)   &  24793 & 3.52\%  &  192.56  & 37519 & 1.35\% & 526.22 \\
    WeCAN-Greedy &   25176 & 2.03\%  & 0.13   & 37652 & 1.00\% & 0.36 \\
    WeCAN-S(256)   &  24099 & 6.22\%  & 2.43   & 35695 & 6.15\% &  6.12\\
    WeCAN-S(256)+LocalSearch(50)   &  23490  & 8.59\% &  87.87  & 35321 & 7.13\% & 240.68 \\
    WeCAN-S(256)+LocalSearch(100)   &  \textbf{23482}  & \textbf{8.62\%} &  181.34  & \textbf{35300} & \textbf{7.19\%} & 465.11 \\

    \bottomrule
  \end{tabular}
\end{table*}

\subsection{Details on inference time}\label{apx:inference_time}

In Table \ref{table:eva-time} we provide a detailed breakdown of the inference time. We observe that network computation accounts for less than $10\%$ of the total inference time, while the generation map dominates the overall runtime. This explains why the total inference time cannot be significantly lower than heuristic baselines, despite using a fast network. Prior methods that rely on multi-round network inference scaling with the number of tasks incur substantially higher inference times than those dictated by the generation map alone. In contrast, our approach requires only a single forward pass of the network, enabling substantially faster inference overall. Additionally, we report inference time when using only CPU (i.e., without GPU acceleration). As shown in Table \ref{table:eva-time}, WeCAN-Greedy on CPU achieves evaluation times only slightly higher than on GPU, demonstrating that our method remains practical and efficient in scheduling scenarios where only CPU resources are available.
\begin{table*}[htbp]
  \caption{Experimental results about inference time (s).} 
  \label{table:eva-time}
  \centering
  \begin{tabular}{lcccccccr}
    \toprule
      & \multicolumn{3}{c}{GPU-inference}& \multicolumn{3}{c}{CPU-inference} \\
     Experiment & \small{Total}  & \small{Generation map} & \small{Network}  & \small{Total} & \small{Generation map} & \small{Network}  \\
    \midrule
    TPC-H-30   & 0.154 & 0.140 & 0.014 & 0.201 & 0.116 & 0.085 \\ 
    TPC-H-50   & 0.480 & 0.461 & 0.019 & 0.587 & 0.442 &  0.145 \\ 
    TPC-H-100   & 1.72 & 1.71 & 0.017 & 1.92 & 1.53 & 0.391 \\ 
    Erd\H{o}s-R\'{e}nyi graph  & 0.533 & 0.512 & 0.020 & 0.548 & 0.393 & 0.155 \\ 
    Layer Graphs  & 0.256 & 0.238 & 0.018 & 0.372 & 0.214 & 0.158 \\ 
    Stochastic Block  & 0.361 & 0.342 & 0.019 & 0.527 & 0.367 & 0.160 \\ 
    \bottomrule
  \end{tabular}
\end{table*}

\section{Implementation details} \label{apx:implementation_details}

\subsection{Model architecture Details of LDDGNN}

In DAG scheduling, edges $e = (v_1,v_2) \in E$ coexisting with another directed path from $l$ from $v_1$ to $v_2$ are redundant; removing them preserves scheduling constraints. This equivalence allows us to focus on the eliminated graph formed by removing all these redundant edges. We defined the lognest directed distance (LDD) $d_e(v,w)$ as the signed length of the longest directed path between $v$ and $w$. The edges of the eliminated graph can be identified through LDD $d_e(v,w)=\pm 1$, enabling using LDD to implicitly represent this structure without explicit construction. Furthermore, to capture broader connectivity, we extend LDD to $d_{e}(v,w) = +\infty$ for undirected connected but not directed connected node pairs and $d_{e}(v,w) = -\infty$ for disconnected node pairs. The LDD is leveraged to define both the attention masks and attention biases within the MHA sub-layer. Eight distinct types of attention mask  selectively perform attention between node pairs based on LDD values.

The LDDGNN leverages the longest directed distances (LDD) within the DAG. Its node embedding update is defined as:
\begin{equation*}
\begin{aligned}
\bm{q}_v^{l,j} &= \bm{W}_{l,j}^Q \bm{h}_v^{l-1}, \quad
\bm{K}^{l,j} = \bm{W}_{l,j}^K [\bm{h}_{v_1}^{l-1},\ldots,\bm{h}_{v_n}^{l-1}], \quad
\bm{V}^{l,j} = \bm{W}_{l,j}^V [\bm{h}_{v_1}^{l-1},\ldots,\bm{h}_{v_n}^{l-1}], \\
\widehat{\bm{h}}_v^{l} 
&= \bm{h}_v^{l-1} + \text{concat}_j \Bigl[
\text{softmax}\Bigl(
\frac{(\bm{q}_v^{l,j})^{\mathsf T}\bm{K}^{l,j}}{\sqrt{d}}
+ [b_{d_e(v,v_1)},\ldots,b_{d_e(v,v_n)}]
+ [M_{v,v_1}^j,\ldots,M_{v,v_n}^j]
\Bigr)\,\bm{V}^{l,j}
\Bigr], \\
\bm{h}_v^{l} &= \widehat{\bm{h}}_v^{l} + \text{MLP}_{(l)}(\widehat{\bm{h}}_v^{l}).
\end{aligned}
\end{equation*}
where $V=\{v_1,...,v_n\}$ is the set of nodes. 

The LDDGNN incorporates attention masks $M_{v,w}^j$ and bias terms $b_{d_e(v,w)}$ both derived from the LDD. Let $N_1=\mathbb Z \cup \{-\infty,+\infty\}, N_2 = \{1\}, N_3 = \{-1\}, N_4 = \{2\}, N_5 = \{-2\}, N_6 = [3,+\infty)\cap \mathbb Z, N_7 = (-\infty,-3]\cap \mathbb Z, N_8 = \{+\infty\}$. Given the number of attention heads, $n_{head}$ (a multiple of 8), the mask $M_{v,w}^j$ ($j=0,...,n_{head}-1$) for each attention head is calculated by
\begin{equation*}
    M_{v,w}^j =\begin{cases}
        0 & \text{if } d_e(v,w) \in N_{\left\lfloor 8j/n_{head} \right\rfloor+1}\\
        -\infty & \text{otherwise}\\
    \end{cases}.
\end{equation*}
For the bias calculation, a hyperparameter $D_{max}=500$ is employed. First, the folded longest directed distance $d_f(v,w)$ is calculated:
\begin{equation*}
    d_f(v,w) =\begin{cases}
        d_e(v,w) & \text{if } d_e(v,w)\in [-D_{max}+1,D_{max}-1]\\
        D_{max}-1 & \text{if } d_e(v,w)\in [D_{max},+\infty)\\
        -D_{max}+1 & \text{if } d_e(v,w)\in (-\infty,-D_{max}]\\
         D_{max} & \text{if } d_e(v,w)= +\infty\\
        -D_{max} & \text{if } d_e(v,w)= -\infty\\
    \end{cases}.
\end{equation*}
Then the bias $b_{d_e(v,w)}$ is obtained by selecting the $(d_f(v,w)+D_{max}+1)$-th parameter from a learnable embedding table of size $2D_{max}+1$. This mechanism is equivalent to applying a one-hot encoding to the folded distance and then passing it through a linear layer.

\subsection{Model hyperparameters}

The hyperparameter configuration for our WeCAN architecture is detailed below. An MLP is used to embed the initial features of tasks, and a separate MLP is used for resource pools. Specifically, each MLP consists of a linear layer that maps from 3-dimensional (for tasks) or 2-dimensional (for pools) input features to the shared embedding dimension $d_{\text{high}} = 512$, followed by a GELU activation. The first weighted cross attention layer and the longest directed distance graph neural network operate within the $d_{\text{high}} = 512$ dimension. Subsequent alternating update layers in the encoder and the entire decoder are projected into the $d_{\text{low}}=128$ dimension to enhance computational efficiency. All weighted cross attention layers employ 8 attention heads. The longest directed distance graph neural network utilizes 8 layers and 16 attention heads where each mask type corresponds to 2 heads. The skip parameters $\alpha$, $\beta$, and $\gamma$ are computed by an MLP consisting of two hidden layers with a hidden dimension of $64$, using GELU and Sigmoid as activation functions.
  
\subsection{Training details}

We train our model using the Adam optimizer with a learning rate of $10^{-4}$. The training batch size is set to 64 by default. For TPC-H-100, a smaller batch size of 32 is adopted due to GPU memory limitations. The model is trained for 800 batches. The training process does not include any information about the test dataset. Training durations for different configurations are summarized in Table \ref{table:train-time}.

We consider two baseline strategies for REINFORCE: (1) the average reward of all samples; (2) the reward of a greedy rollout obtained using a separate model that is periodically synchronized with the current policy. In practice, we find that the average reward baseline leads to faster and smoother convergence. Based on this observation, we adopt it as the default baseline for training.

As detailed in Appendix~\ref{apx:inference_time}, the generation map module is the primary computational bottleneck during both training and inference. To improve efficiency, we implement a PyTorch-based version of the generation map. This GPU-based implementation is used during training and also in sample-based inference, where large batch sizes or many samples are processed. For greedy inference, however, we use a CPU-based version of the generation map, as GPU acceleration provides no runtime benefit in this setting.
\begin{table}[!htbp]
  \caption{Experimental results about training time of WeCAN-Greedy.} 
  \label{table:train-time}
  \centering
  \begin{tabular}{lccr}
    \toprule
     Experiment & Total time & Time per batch \\
    \midrule
    TPC-H-30   & 6.6h & 0.5min \\ 
    TPC-H-50   & 14.5h & 1.1min \\ 
    TPC-H-100   & 26h & 2min\\ 
    Erd\H{o}s-R\'{e}nyi graph & 19h & 1.4 min\\
    Layer Graphs  & 15.5h & 1.2 min\\
    Stochastic Block  & 13.3h & 1 min\\
    RI-W-30   & 6.5h & 0.5min \\ 
    RI-W-50   & 14h & 1.1min \\
    \bottomrule
  \end{tabular}
\end{table}

\subsection{Inference details}

 Five random seeds: $2000,\,2001,\,2002,\,2003,\,2004$ are used for calculating the mean and standard deviation of results in Tables I to VI. A single fixed random seed of $2000$ is used for the results in Tables \ref{table:flua} to \ref{table:ls}.

\end{document}